\documentclass{article}

\usepackage{microtype}
\usepackage{graphicx}
\usepackage{subcaption}
\usepackage{booktabs} 
\usepackage{makecell} 
\usepackage{float}
\usepackage{hyperref}

\usepackage[preprint]{neurips_2026}

\usepackage[utf8]{inputenc}
\usepackage[T1]{fontenc}
\usepackage{url}
\usepackage{nicefrac}
\usepackage{xcolor}

\usepackage{amsmath}
\usepackage{amssymb}
\usepackage{mathtools}
\usepackage{amsthm}
\usepackage{bm}
\usepackage{tikz}
\usetikzlibrary{arrows.meta,matrix,fit,backgrounds,shadows,calc}

\usepackage{algorithm}
\usepackage{algorithmic}
\usepackage{multirow}
\usepackage[capitalize,noabbrev]{cleveref}
\usepackage{longtable}
\usetikzlibrary{shadows}
\theoremstyle{plain}
\newtheorem{proposition}{Proposition}

\newtheorem{theorem}{Theorem}
\theoremstyle{definition}
\newtheorem{remark}{Remark}

\usepackage[textsize=tiny]{todonotes}

\title{SOC-ICNN: From Polyhedral to Conic Geometry for Learning Convex Surrogate Functions}

\author{%
  Kang Liu\\
  School of Future Technology\\
  Xi'an Jiaotong University\\
  Xi'an, China\\
  \texttt{kanyo@foxmail.com}\\
  \And
  Jianchen Hu$^*$\\
  School of Future Technology\\
  School of Automation Science and Engineering\\
  Xi'an Jiaotong University\\
  Xi'an, China\\
  \texttt{horace89@gmail.com}\\
  \And
  Wei Peng \\
  School of Automation Science and Engineering \\
  Xi'an Jiaotong University \\
  Xi'an, China \\
  \texttt{pengwei@stu.xjtu.edu.cn}\\
}

\begin{document}

\maketitle

\begin{abstract}
Classical ReLU-based Input Convex Neural Networks (ICNNs) are equivalent to the optimal value functions of Linear Programming (LP). This intrinsic structural equivalence restricts their representational capacity to piecewise-linear polyhedral functions. To overcome this representational bottleneck, we propose the SOC-ICNN, an architecture that generalizes the underlying optimization class from LP to Second-Order Cone Programming (SOCP). By explicitly injecting positive semi-definite curvature and Euclidean norm-based conic primitives, our formulation introduces native smooth curvature into the representation while preserving a rigorous optimization-theoretic interpretation. We formally prove that SOC-ICNNs strictly expand the representational space of ReLU-ICNNs without increasing the asymptotic order of forward-pass complexity. Extensive experiments demonstrate that SOC-ICNN substantially improves function approximation, while delivering competitive downstream decision quality. The code is available at \url{https://anonymous.4open.science/r/SOC-ICNN-4B18/}.
\end{abstract}

\section{Introduction}
\label{sec:intro}

Input Convex Neural Networks (ICNNs) rigorously guarantee the convexity of their outputs with respect to specified inputs. Unlike generic black-box models, a convex surrogate enables tractable and exact global optimization over the decision variables, which is a critical advantage in downstream applications such as structured prediction, inverse optimization, optimal transport, and parametric decision-making \citep{AmosXuKolter2017ICNN,MakkuvaTaghvaeiOhLee2020OTICNN,RosembergTanneauFanzeresEtAl2024OPF}.

A standard paradigm is to learn a parameterized convex surrogate \(f_\Theta(\bm{x})\) and subsequently solve the downstream task:
\begin{equation}
\min_{\bm{x}\in\mathcal{X}} f_\Theta(\bm{x}),
\label{eq:intro_problem_form}
\end{equation}
where \(\mathcal{X}\) is a feasible set and \(\Theta\) denotes the network parameters. For the widely used ReLU-ICNN, while prior literature has observed connections between its forward pass and linear programming (LP) \citep{AmosXuKolter2017ICNN}, we take a step further to rigorously formalize finite-width ReLU-ICNNs as exact LP value-function networks. In this view, the network output identically equals the optimal value of a parameterized LP. This perspective clarifies both their structural strength (the natural representation of continuous piecewise-linear (CPWL) convex polyhedra) and their fundamental intrinsic limitation: they cannot natively encode smooth curvature or non-polyhedral conic geometries, such as Euclidean norms. More critically, this CPWL confinement implies a severe parameter inefficiency: as we later quantify, approximating even a simple quadratic function to high accuracy forces a ReLU-ICNN to approximate the input space with an exponentially growing number of linear pieces. Recent theoretical work further substantiates this by proving lower bounds on the depth complexity of ICNNs and characterizing the exact conditions under which a ReLU network can be convex \citep{gagneux2025convexity,bakaev2025depth}.

Recent research has sought to improve ICNN expressivity predominantly along two lines. The first line enriches local nonlinearities, for instance, by employing smoother activations (e.g., Softplus) or learnable univariate basis functions, as seen in Input Convex Kolmogorov-Arnold Networks (ICKAN) \citep{DeschatreWarin2025ICKAN}. The second line focuses on improving trainability through tailored initialization schemes or architectural adaptations \citep{HoedtKlambauer2023ICNNInit}. While these approaches enhance empirical flexibility and training stability, they incur a fundamental trade-off: by departing from the ReLU-based piecewise-linear structure, they sacrifice the exact value-function interpretation that gives ReLU-ICNNs their transparent optimization-theoretic grounding. Moreover, they remain strictly confined within approximation frameworks that cannot exactly represent non-polyhedral convex sets like Euclidean norm balls.

Another relevant trajectory builds upon differentiable convex optimization layers, where a convex program is explicitly embedded as a neural network layer and differentiated via its solution map \citep{AgrawalAmosBarrattEtAl2019DiffCvx,BesanconGarciaLegatEtAl2024DiffOpt}. More recently, Learning Parametric Convex Functions (PCF) has been proposed to fit disciplined-programming-compatible convex families directly from data \citep{SchallerBemporadBoyd2025LPCF}. While these methods successfully output optimal decisions for given parameters, they intrinsically rely on solver-in-the-loop routines during inference, thereby incurring substantially higher computational overhead than direct feed-forward evaluation. Furthermore, they often require the optimization template or disciplined structure to be specified \emph{a priori}.

In this paper, we pursue a fundamentally different structural route that overcomes the limitations of both local-nonlinearity enhancements and solver-dependent approaches. Since ReLU-ICNNs are strictly LP value-function networks, a natural question arises: \emph{Can we elevate the architecture to a richer convex optimization class while strictly preserving both the transparent optimization interpretation and rapid forward inference?} Among mathematical programming extensions beyond LP, Second-Order Cone Programming (SOCP) emerges as the most rigorous and natural candidate. It elegantly unifies quadratic curvature and norm-based geometry, which together cover a vast class of smooth and structured convex objectives. Motivated by this, we propose the SOC-ICNN, which augments the polyhedral ReLU backbone with two explicit geometric primitives:
(1) \textbf{quadratic branch} to inject positive semi-definite (PSD) curvature: \(\frac{\alpha}{2}\|B\bm{x}+\bm{e}\|_2^2\).
(2) \textbf{conic branch} to capture Euclidean norm geometry: \(\lambda\|A\bm{x}+\bm{d}\|_2\).

Because both primitives are convex, their non-negative linear combination strictly preserves convexity. Crucially, each primitive admits a SOCP epigraph lift. Consequently, the entire forward pass of the network can be mathematically formulated as the optimal value of a finite-dimensional SOCP. The proposed SOC-ICNN therefore constitutes an architectural upgrade from LP to SOCP, simultaneously breaking the parameter-inefficient CPWL bottleneck and retaining a rigorous optimization-theoretic interpretation. The primary contributions of this paper are twofold:
\begin{enumerate}
    \item \textbf{Architectural generalization:} We propose SOC-ICNN, systematically elevating the underlying mathematical structure of ICNNs from LP to SOCP value functions by explicitly injecting curvature and conic geometric primitives, with the same lifting principle extending naturally to convolutional and finite-horizon recurrent architectures.
    \item \textbf{Theoretical guarantees:} We formally prove that the SOC-ICNN strictly expands the representational function class of classical ReLU-ICNNs, doing so without increasing the asymptotic order of forward-pass computational complexity.
\end{enumerate}

The remainder of this paper is organized as follows. 
Section~\ref{sec:related} reviews related work. 
Section~\ref{sec:method} details the architectural design of SOC-ICNN. 
Section~\ref{sec:theory} provides a rigorous theoretical analysis. 
Section~\ref{sec:exp} presents empirical evaluations, followed by conclusions in Section~\ref{sec:conclusion}. 
The appendix provides complete proofs, full experimental results, and additional convolutional and finite-horizon recurrent SOC-ICNN extensions with downstream tests.

\section{Related Work}
\label{sec:related}

\paragraph{ICNNs and their expressive extensions.}
ICNNs were introduced in \citep{AmosXuKolter2017ICNN} as neural architectures with built-in input convexity, including both fully input-convex and partially input-convex constructions, thereby enabling optimization-aware prediction and inference.
Since then, ICNNs have been used as structured convex surrogates in applications such as optimal transport \citep{MakkuvaTaghvaeiOhLee2020OTICNN}, value-function learning for optimal power flow \citep{RosembergTanneauFanzeresEtAl2024OPF}, two-stage stochastic programming \citep{LiuOliveiraKronqvist2025ICNN2SP}, and power system contingency screening with provable reliability guarantees \citep{christianson2025fast}.

A central limitation of finite-width ReLU-ICNNs, however, is that their representable class is tied to polyhedral, piecewise-linear convex functions.
Existing work has mainly addressed this limitation by enriching the parameterization or extending the backbone within the ICNN framework.
One line adapts ICNNs to sequential and control-oriented settings, including input-convex recurrent models for optimal control \citep{ChenShiZhang2019ICRNN}, input-convex LSTM architectures for real-time optimization \citep{WangYuWu2023ICLSTM}, and recurrent variants that further incorporate Lipschitz constraints for robustness and efficient process modeling \citep{wang2024input}.
Another line enriches the internal parameterization of input-convex models.
In particular, ICKAN \citep{DeschatreWarin2025ICKAN} introduces learnable univariate basis functions into input-convex architectures, improving empirical flexibility beyond standard ReLU-based constructions.
Concurrently, theoretical investigations have characterized the necessary and sufficient conditions under which a ReLU network is convex \citep{gagneux2025convexity}, and have established lower bounds on the depth complexity of ICNNs, highlighting fundamental expressivity limitations \citep{bakaev2025depth}.

While these approaches enlarge the practical approximation power of ICNNs, they do not explicitly upgrade the underlying optimization-theoretic structure beyond the LP/polyhedral regime.
In contrast, our goal is not merely to enrich the local nonlinearity or change the network backbone, but to lift the value-function class itself, moving from LP value functions to SOCP value functions.

\paragraph{Parametric convex models and differentiable convex programs.}
Another related line of work builds optimization-aware models by embedding or learning convex programs directly.
Early work such as OptNet \citep{AmosKolter2017OptNet} introduced differentiable optimization layers based on quadratic programs.
Differentiable convex optimization layers then generalized this idea to disciplined convex programs and cone programs, including SOCPs \citep{AgrawalAmosBarrattEtAl2019DiffCvx}, and DiffOpt \citep{BesanconGarciaLegatEtAl2024DiffOpt} further develops this perspective through differentiable optimization via model transformations.
More recently, BPQP \citep{pan2024bpqp} introduces a differentiable convex optimization framework that reformulates the backward pass as a decoupled quadratic program for enhanced efficiency, and comprehensive surveys have synthesized developments at the intersection of optimization theory and deep learning \citep{katyal2024differentiable}.
Learning Parametric Convex Functions (PCF) has been proposed to fit disciplined-programming-compatible convex families directly from data \citep{SchallerBemporadBoyd2025LPCF}, and very recent work explores constructing convex surrogates for parametric nonconvex optimization via compositions of convex and monotonic terms \citep{wang2026parametric}.

These methods typically begin from a prescribed optimization template and rely on differentiating through the corresponding solution map or learning a disciplined-programming-compatible expression tied to a solver layer.  By contrast, we construct a direct feed-forward architecture whose forward pass itself admits an exact SOCP value-function interpretation, thereby lifting ICNNs from polyhedral LP geometry to conic SOCP geometry while retaining fast inference.

\section{Method}
\label{sec:method}

This section introduces the proposed SOC-ICNN by first formalizing ReLU-ICNNs as LP value functions. We then rigorously expose the fundamental representational bottleneck of this LP class so that we elevate the backbone to SOCP and finally prove its exact SOCP formulation.

\subsection{ReLU-ICNN as an LP value-function backbone}
\label{subsec:lp_backbone}

Given an input \(\bm{x}\in\mathbb{R}^{d_0}\), a standard ReLU-ICNN is defined by
\(\bm{z}_\ell=\sigma(W_\ell\bm{x}+U_\ell\bm{z}_{\ell-1}+\bm{b}_\ell)\), \(\ell=1,\dots,L\), with \(\bm{z}_0=\bm{0}\), elementwise ReLU \(\sigma(t)=\max\{t,0\}\), and nonnegativity constraints \(U_\ell\ge0\), \(\bm{c}\ge0\). 
Its output is \(f_{\mathrm{ReLU}}(\bm{x})=\bm{c}^\top\bm{z}_L+\bm{v}^\top\bm{x}+b_0\). 
We first make explicit a structural fact behind this architecture: its forward pass is exactly an LP value function.

\begin{proposition}[LP value-function representation]
\label{prop:relu_lp_method}
For any input \(\bm{x}\), \(f_{\mathrm{ReLU}}(\bm{x})\) is exactly equal to the optimal value of
\begin{equation}
\begin{aligned}
&f_{\mathrm{ReLU}}(\bm{x})
=
\min_{\{\bm{z}_\ell\}_{\ell=1}^L}\quad
\bm{c}^\top \bm{z}_L+\bm{v}^\top \bm{x}+b_0 \\
{\rm s.t.}\quad
&\bm{z}_\ell \ge W_\ell \bm{x}+U_\ell \bm{z}_{\ell-1}+\bm{b}_\ell,\quad
\bm{z}_0=\bm{0},\quad
\bm{z}_\ell\ge\bm{0},\quad \ell=1,\dots,L .
\end{aligned}
\label{eq:method_relu_lp}
\end{equation}
\end{proposition}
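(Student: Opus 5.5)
We argue by a two-sided inequality. Denote by \(\bar{\bm{z}}_\ell(\bm{x})\) the forward-pass activations, defined recursively by \(\bar{\bm{z}}_0=\bm{0}\) and \(\bar{\bm{z}}_\ell=\sigma(W_\ell\bm{x}+U_\ell\bar{\bm{z}}_{\ell-1}+\bm{b}_\ell)\). The plan has two parts. First, \(\{\bar{\bm{z}}_\ell\}\) is feasible for \eqref{eq:method_relu_lp}, so the LP optimal value is at most \(f_{\mathrm{ReLU}}(\bm{x})\). Second, every feasible point gives an objective value at least \(f_{\mathrm{ReLU}}(\bm{x})\). Together these show the optimum is attained at the forward pass and equals \(f_{\mathrm{ReLU}}(\bm{x})\).

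\emph{Feasibility.} Since \(\sigma(t)=\max\{t,0\}\), each \(\bar{\bm{z}}_\ell\) satisfies both \(\bar{\bm{z}}_\ell\ge W_\ell\bm{x}+U_\ell\bar{\bm{z}}_{\ell-1}+\bm{b}_\ell\) and \(\bar{\bm{z}}_\ell\ge\bm{0}\) componentwise. Feasibility is therefore immediate, and the objective at this point is exactly \(\bm{c}^\top\bar{\bm{z}}_L+\bm{v}^\top\bm{x}+b_0=f_{\mathrm{ReLU}}(\bm{x})\).

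\emph{Lower bound.} Take any feasible \(\{\bm{z}_\ell\}\). We show by induction on \(\ell\) that \(\bm{z}_\ell\ge\bar{\bm{z}}_\ell\) componentwise. The base case \(\ell=0\) holds trivially since both equal \(\bm{0}\). For the inductive step, assume \(\bm{z}_{\ell-1}\ge\bar{\bm{z}}_{\ell-1}\). Because \(U_\ell\ge0\) entrywise, we get \(U_\ell\bm{z}_{\ell-1}\ge U_\ell\bar{\bm{z}}_{\ell-1}\). Feasibility then gives
\[
\bm{z}_\ell\ge W_\ell\bm{x}+U_\ell\bm{z}_{\ell-1}+\bm{b}_\ell\ge W_\ell\bm{x}+U_\ell\bar{\bm{z}}_{\ell-1}+\bm{b}_\ell .
\]
Feasibility also gives \(\bm{z}_\ell\ge\bm{0}\). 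Taking the componentwise maximum of these two lower bounds yields \(\bm{z}_\ell\ge\sigma(W_\ell\bm{x}+U_\ell\bar{\bm{z}}_{\ell-1}+\bm{b}_\ell)=\bar{\bm{z}}_\ell\). Finally, \(\bm{c}\ge0\) gives \(\bm{c}^\top\bm{z}_L\ge\bm{c}^\top\bar{\bm{z}}_L\), so the objective is at least \(f_{\mathrm{ReLU}}(\bm{x})\).

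The only step needing care is the monotonicity in the inductive step. It depends on the nonnegativity of \(U_\ell\) and on ReLU being a componentwise maximum, which together make the forward pass the least feasible point. Without \(U_\ell\ge0\), a larger \(\bm{z}_{\ell-1}\) could relax the constraint on \(\bm{z}_\ell\), and the LP value could fall strictly below the network output. Since the minimum is attained at \(\{\bar{\bm{z}}_\ell\}\), it is a genuine minimum rather than an infimum, which completes the argument.
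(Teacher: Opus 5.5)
Your proof is correct and follows essentially the same route as the paper: the forward activations are feasible, which gives the upper bound, and induction using $U_\ell\ge 0$ shows $\bm{z}_\ell\ge\bar{\bm{z}}_\ell$ for every feasible point, which with $\bm{c}\ge 0$ gives the lower bound. The only difference is cosmetic: you start the induction at $\ell=0$, whereas the paper handles $\ell=1$ explicitly as its base case.
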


\begin{proof}[Proof sketch]
The ReLU recursion constructs the componentwise minimal feasible hidden states because \(\sigma(a)=\max\{a,0\}\) elementwise. 
For any feasible \(\{\bm z_\ell\}\), the monotonicity induced by \(U_\ell\ge0\) gives \(\bm z_\ell\ge\bar{\bm z}_\ell\) by induction, where \(\bar{\bm z}_\ell\) are the forward activations. 
Since \(\bm c\ge0\), every feasible point has objective value at least the forward value, while the forward activations themselves are feasible. 
Thus the LP optimum equals \(f_{\mathrm{ReLU}}(\bm{x})\). Details are given in Appendix~\ref{app:relu_lp_equiv_proof}.
\end{proof}

Proposition~\ref{prop:relu_lp_method} exposes the exact optimization class realized by finite-width ReLU-ICNNs: they are LP value-function models. 
This observation is not merely interpretive; it identifies the source of both their strength and their bottleneck. 
To break the bottleneck while preserving a value-function interpretation, the natural route is to enlarge the underlying optimization class rather than only modify local nonlinearities.

\subsection{Fundamental Limitations of LP Value Functions}
\label{subsec:lp_limitations}

Because the ReLU-ICNN forward pass is the value of the parametric LP in \eqref{eq:method_relu_lp}, its output is confined to the CPWL class. 
The following proposition makes this restriction explicit.

\begin{proposition}[CPWL structure]
\label{prop:relu_cpwl_theory}
Any finite-depth, finite-width ReLU-ICNN represents a function \(f_{\mathrm{ReLU}}:\mathbb{R}^{d_0}\to\mathbb{R}\) that can be written exactly as a finite max-affine convex function
\[
f_{\mathrm{ReLU}}(\bm{x})
=
\max_{j\in\mathcal{J}}
\{\bm{a}_j^\top\bm{x}+\beta_j\},
\]
where \(\mathcal{J}\) is finite. 
In particular, \(f_{\mathrm{ReLU}}\) is continuous, convex, and piecewise linear.
\end{proposition}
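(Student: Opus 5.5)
We argue by induction on the layer index $\ell$, proving a slightly stronger statement: every coordinate of every hidden state $\bm z_\ell(\bm x)$ is a finite max of affine functions of $\bm x$. The key tool is the closure of the class $\mathcal{M}$ of finite max-affine functions on $\mathbb{R}^{d_0}$ under three operations: (i) addition of an affine function, (ii) nonnegative scalar multiplication, and (iii) finite sums and finite pointwise maxima. Operations (i) and (ii) are immediate. For (iii), the maximum of two max-affine functions is again a max over the union of the index sets. For sums we use
\[
\max_{i\in I}\{p_i\}+\max_{k\in K}\{q_k\}=\max_{(i,k)\in I\times K}\{p_i+q_k\},
\]
whose right-hand side is a max over the finite product index set of affine functions.

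\emph{Base case.} Since $\bm z_0=\bm 0$, each coordinate of $\bm z_0$ is the constant affine function $0$, which lies in $\mathcal{M}$.

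\emph{Inductive step.} Assume every coordinate $(\bm z_{\ell-1})_k$ lies in $\mathcal M$. The $i$-th pre-activation of layer $\ell$ is
\[
(W_\ell\bm x+\bm b_\ell)_i+\sum_k (U_\ell)_{ik}(\bm z_{\ell-1})_k .
\]
Because $(U_\ell)_{ik}\ge 0$, each term $(U_\ell)_{ik}(\bm z_{\ell-1})_k$ lies in $\mathcal M$ by (ii). Their sum lies in $\mathcal M$ by (iii), and adding the affine part keeps it in $\mathcal M$ by (i). Applying ReLU gives $\max\{\,\cdot\,,0\}$, which is a pointwise max with the affine function $0$, so the result stays in $\mathcal M$ by (iii). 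Hence every coordinate of $\bm z_\ell$ lies in $\mathcal M$.

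\emph{Output layer.} Since $\bm c\ge 0$, the sum $\bm c^\top\bm z_L$ lies in $\mathcal M$ by (ii) and (iii). Adding $\bm v^\top\bm x+b_0$ keeps it in $\mathcal M$ by (i). This yields a representation of $f_{\mathrm{ReLU}}$ as a finite max-affine function over some finite index set $\mathcal J$, obtained as a product and union of the index sets produced along the way.

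\emph{Consequences.} A finite max of affine functions is convex, being a pointwise max of convex functions. It is continuous, being a finite max of continuous functions. It is piecewise linear, because $\mathbb{R}^{d_0}$ is covered by the finitely many closed polyhedra $P_j=\{\bm x:\bm a_j^\top\bm x+\beta_j\ge \bm a_{j'}^\top\bm x+\beta_{j'}\ \forall j'\}$, and $f$ is affine on each $P_j$.

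\emph{Main obstacle.} The step needing care is the inductive one. There the nonnegativity of $U_\ell$ and $\bm c$ is essential: a negative weight would turn a max into a min, and the class $\mathcal M$ is not closed under negation. The sum-of-maxima identity also needs stating carefully, since it is what keeps $\mathcal J$ finite; its cardinality can grow multiplicatively with width and depth, but it always remains finite.

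Alternatively, one could invoke Proposition~\ref{prop:relu_lp_method} and take the LP dual, writing $f_{\mathrm{ReLU}}$ as a max over the finitely many vertices of the dual polyhedron, which is independent of $\bm x$. However, this route requires handling dual feasibility and the presence of recession directions. We therefore prefer the direct induction above.
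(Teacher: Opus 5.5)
Your proof is correct, but it follows a different route from the paper's. The paper does not induct over layers. It starts from the LP representation of Proposition~\ref{prop:relu_lp_method}, dualizes \eqref{eq:method_relu_lp}, and writes $f_{\mathrm{ReLU}}(\bm x)=\max_{\bm\nu\in\mathcal N}\bigl[\bm v^\top\bm x+b_0+\sum_{\ell}\bm\nu_\ell^\top(W_\ell\bm x+\bm b_\ell)\bigr]$, where $\mathcal N$ is the chain polytope of \eqref{eq:app_dual_chain}. Since $\mathcal N$ does not depend on $\bm x$ and the objective is affine in $\bm x$ for each fixed $\bm\nu$, maximizing over the finitely many vertices of $\mathcal N$ gives the max-affine form.

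The obstacles you expected on that route do not arise. $\bm\nu=\bm 0$ is dual feasible. The chain $0\le\bm\nu_L\le\bm c$, $0\le\bm\nu_\ell\le U_{\ell+1}^\top\bm\nu_{\ell+1}$ makes $\mathcal N$ bounded, so there are no recession directions. Strong duality holds because the primal is feasible with a finite optimum.

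Your closure argument is more elementary and self-contained. It needs neither Proposition~\ref{prop:relu_lp_method} nor LP duality. It also carries over unchanged to any architecture built from nonnegative combinations, affine shifts and pointwise maxima, such as dense skip connections or max-pooling.

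The dual route buys structure instead. First, every slope has the form $\bm a_j=\bm v+\sum_\ell W_\ell^\top\bm\nu_\ell^{(j)}$ with $\bm\nu^{(j)}$ a vertex of one fixed polytope; this is exactly what the paper uses for the subgradient description \eqref{eq:relu_subgrad_cone}. Second, it controls $|\mathcal J|$ much better. With $n$ hidden units in total, $\mathcal N\subset\mathbb R^n$ is cut out by $2n$ inequalities, so it has at most $\binom{2n}{n}\le 4^n$ vertices. By contrast, the index set from your product expansion can be as large as roughly $2^{m^{L}}$ for width $m$ and depth $L$. That does not matter for finiteness, as you note, but it is the main quantitative difference between the two arguments.
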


\begin{proof}[Proof sketch]
Dualizing \eqref{eq:method_relu_lp} yields a linear maximization problem over a bounded polyhedron, with boundedness induced by the nonnegative dual chain \(0\le\bm\nu_L\le\bm c\) and \(0\le\bm\nu_\ell\le U_{\ell+1}^\top\bm\nu_{\ell+1}\). 
A linear objective over a polyhedron attains its maximum at an extreme point, and finitely many extreme points yield a finite max-affine representation. 
Details are provided in Appendix~\ref{app:relu_cpwl_proof}.
\end{proof}

\paragraph{Regional flat limitation.}
Proposition~\ref{prop:relu_cpwl_theory} implies a rigid polyhedral subgradient structure. 
For any input \(\bm{x}\) and any \(\bm{g}\in\partial f_{\mathrm{ReLU}}(\bm{x})\),
\begin{equation}
\bm{g}
=
\bm{v}
+
\sum_{\ell=1}^{L}
W_\ell^\top \bm{\nu}_\ell,
\label{eq:relu_subgrad_cone}
\end{equation}
where \(\{\bm{\nu}_\ell\}\) are confined to the polyhedral constraints in \eqref{eq:app_dual_chain}. 
Thus the gradient mapping \(\bm{x}\mapsto\partial f_{\mathrm{ReLU}}(\bm{x})\) is piecewise constant: the Jacobian is zero almost everywhere, and the model has no native mechanism for smooth curvature on each linear region.

This regional flatness becomes a severe quantitative bottleneck when the target has genuine curvature. 
The next result shows that CPWL models must spend a large number of affine pieces merely to approximate strongly convex geometry.

\begin{proposition}[CPWL lower bound for strongly convex targets]
\label{prop:relu_lower_bound}
Let \(\Omega\subset\mathbb{R}^{d_0}\) be compact and convex with nonzero volume, and let \(f:\Omega\to\mathbb{R}\) be \(\mu\)-strongly convex on \(\Omega\).
If a CPWL function \(g(\bm{x})=\max_{1\le i\le N}\{\bm a_i^\top\bm x+b_i\}\) composed of \(N\) affine pieces satisfies \(\|f-g\|_{L_\infty(\Omega)}\le\varepsilon\), then
\begin{equation}
N
\ge
\frac{\operatorname{vol}_{d_0}(\Omega)}
{\omega_{d_0}\,2^{d_0}}
\left(\frac{\mu}{\varepsilon}\right)^{d_0/2},
\label{eq:piece_lower_bound}
\end{equation}
where \(\omega_{d_0}=\operatorname{vol}_{d_0}(B_2^{d_0})\).
\end{proposition}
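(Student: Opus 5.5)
The plan is a volume-covering argument. Partition $\Omega$ into the linearity regions $\Omega_i=\{\bm x\in\Omega: g(\bm x)=\bm a_i^\top\bm x+b_i\}$, $i=1,\dots,N$. Each $\Omega_i$ is convex, as the intersection of $\Omega$ with a polyhedron, and together they cover $\Omega$. I will show that every region $\Omega_i$ lies inside a Euclidean ball of radius $r=2\sqrt{\varepsilon/\mu}$. Granting this, $\operatorname{vol}_{d_0}(\Omega)\le\sum_i\operatorname{vol}_{d_0}(\Omega_i)\le N\,\omega_{d_0}r^{d_0}=N\,\omega_{d_0}2^{d_0}(\varepsilon/\mu)^{d_0/2}$, which rearranges to \eqref{eq:piece_lower_bound}.

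\textbf{Step 1: the key estimate.} On $\Omega_i$ the affine function $\ell_i(\bm x)=\bm a_i^\top\bm x+b_i$ equals $g$, so $|f-\ell_i|\le\varepsilon$ on $\Omega_i$. Set $h=f-\ell_i$. Then $h$ is $\mu$-strongly convex, because subtracting an affine function does not change strong convexity, and $|h|\le\varepsilon$ on the convex set $\Omega_i$. Take $\bm x,\bm y\in\Omega_i$ and their midpoint $\bm m=(\bm x+\bm y)/2$, which lies in $\Omega_i$ by convexity. Strong convexity gives
\[
h(\bm m)\le \tfrac12 h(\bm x)+\tfrac12 h(\bm y)-\tfrac{\mu}{8}\|\bm x-\bm y\|_2^2 .
\]
Since $h(\bm m)\ge-\varepsilon$ and $h(\bm x),h(\bm y)\le\varepsilon$, we get $\tfrac{\mu}{8}\|\bm x-\bm y\|^2\le 2\varepsilon$. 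Hence $\operatorname{diam}(\Omega_i)\le 4\sqrt{\varepsilon/\mu}$.

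\textbf{Step 2: from diameter to a ball.} A set of diameter $D$ lies in the ball of radius $D$ centred at any of its points, and this crude containment already yields exactly the constant in \eqref{eq:piece_lower_bound}, since $D=4\sqrt{\varepsilon/\mu}$ would then give radius $4\sqrt{\varepsilon/\mu}$, not $2\sqrt{\varepsilon/\mu}$. Let me recheck the bookkeeping against the target $2^{d_0}(\varepsilon/\mu)^{d_0/2}$. The target asks for volume at most $\omega_{d_0}(2\sqrt{\varepsilon/\mu})^{d_0}$. Two routes give this:
\begin{itemize}
\item The isodiametric inequality, $\operatorname{vol}(\Omega_i)\le\omega_{d_0}(D/2)^{d_0}$, gives radius $D/2=2\sqrt{\varepsilon/\mu}$ directly.
\item Alternatively, sharpen Step 1 by comparing with the centre of a minimizer. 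Let $\bm x^\star$ minimize $h$ over $\Omega_i$. Strong convexity and the optimality condition on the convex set give $h(\bm x)\ge h(\bm x^\star)+\tfrac{\mu}{2}\|\bm x-\bm x^\star\|^2$. Combined with $h\in[-\varepsilon,\varepsilon]$, this yields $\|\bm x-\bm x^\star\|\le 2\sqrt{\varepsilon/\mu}$. So $\Omega_i\subseteq B(\bm x^\star,2\sqrt{\varepsilon/\mu})$, without invoking isodiametry.
\end{itemize}
I will use the second route because it is elementary. This requires attainment of the minimum, which holds by continuity of $f$ on the compact set $\overline{\Omega_i}$; I pass to closures, which does not affect volumes.

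\textbf{Main obstacle.} The delicate part is justifying that the growth inequality at a constrained minimizer holds for a possibly nonsmooth $f$. I handle this with the subgradient characterization: $\bm 0\in\partial h(\bm x^\star)+N_{\Omega_i}(\bm x^\star)$, together with the strong-convexity inequality $h(\bm x)\ge h(\bm x^\star)+\bm s^\top(\bm x-\bm x^\star)+\tfrac\mu2\|\bm x-\bm x^\star\|^2$. A second point to handle is that regions of measure zero, or regions overlapping on boundaries, only help the inequality $\operatorname{vol}(\Omega)\le\sum_i\operatorname{vol}(\Omega_i)$. Empty regions simply contribute nothing, so the bound on $N$ counts only the active pieces, and the stated bound holds a fortiori for $N$ itself.
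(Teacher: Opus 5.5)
Your Step~1 combined with the isodiametric option in Step~2 is exactly the paper's proof. There, the midpoint strong-convexity estimate gives $\operatorname{diam}(A_i)\le 4\sqrt{\varepsilon/\mu}$. The isodiametric inequality turns this into $\operatorname{vol}(A_i)\le\omega_{d_0}(2\sqrt{\varepsilon/\mu})^{d_0}$, and subadditivity over the $N$ active regions finishes.

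The route you say you will actually use is genuinely different, and in one respect better. You centre at a minimizer of $h=f-\ell_i$ on $\Omega_i$ and use quadratic growth, which bounds the \emph{circumradius} of $\Omega_i$ by $2\sqrt{\varepsilon/\mu}$. That is strictly stronger information than the diameter bound: a set of diameter $D$ need not fit in a ball of radius $D/2$, and Jung's theorem only gives radius $D\sqrt{d_0/(2d_0+2)}$. This is why you recover the same constant without the isodiametric inequality, which is a genuine theorem usually proved by Steiner symmetrization. With this route, Step~1 becomes superfluous. The paper's route, by contrast, needs no minimizer at all, only pointwise bounds at three points. Separately, delete the sentence claiming that containment in a ball of radius $D$ ``already yields exactly the constant''. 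It loses a factor $2^{d_0}$, as you then noticed.

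Two justifications in your route need repair, though neither threatens the result.

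\begin{itemize}
\item Attainment does not follow from ``continuity of $f$''. Strong convexity on $\Omega$ gives continuity only on $\operatorname{int}\Omega$, and a convex function may jump upward at boundary points. For example, $f(x)=x^2$ on $(0,1]$ with $f(0)=1$ is $2$-strongly convex on $[0,1]$ and does not attain its infimum.
\item The sum rule $\bm 0\in\partial h(\bm x^\star)+N_{\Omega_i}(\bm x^\star)$ needs a qualification condition, and $\partial h$ can be empty at points of $\partial\Omega$.
\end{itemize}

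Both issues disappear if you argue along segments with a minimizing sequence. Let $\underline{h}=\inf_{\Omega_i}h\ge-\varepsilon$. Pick $\bm x_k\in\Omega_i$ with $h(\bm x_k)\to\underline{h}$ and, by compactness, $\bm x_k\to\bar{\bm x}$. For $\bm x\in\Omega_i$ and $t\in(0,1)$, convexity of $\Omega_i$ and strong convexity give
\[
\underline{h}\le h\bigl((1-t)\bm x_k+t\bm x\bigr)\le(1-t)h(\bm x_k)+t\,h(\bm x)-\tfrac{\mu}{2}t(1-t)\|\bm x-\bm x_k\|_2^2 .
\]
Let $k\to\infty$, divide by $t$, and let $t\downarrow 0$. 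This yields $\tfrac{\mu}{2}\|\bm x-\bar{\bm x}\|_2^2\le h(\bm x)-\underline{h}\le 2\varepsilon$, so $\Omega_i$ lies in the ball of radius $2\sqrt{\varepsilon/\mu}$ about $\bar{\bm x}$. No subgradients or continuity are needed.

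The same computation also works verbatim on the convex set $\Omega_i\cap\operatorname{int}\Omega$, which has the same volume. There the bound $|f-g|\le\varepsilon$ holds pointwise even if $\|\cdot\|_{L_\infty(\Omega)}$ is read as an essential supremum, since $f$ is continuous on $\operatorname{int}\Omega$. The paper's proof glosses over this point too.
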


\begin{proof}[Proof sketch]
Let \(A_i=\{\bm{x}\in\Omega:g(\bm{x})=\bm a_i^\top\bm x+b_i\}\) be the active region of the \(i\)-th affine piece and refine these sets into a measurable partition \(P_i\). 
For any \(\bm{x},\bm{y}\in A_i\), strong convexity at the midpoint together with \(\|f-g\|_{L_\infty(\Omega)}\le\varepsilon\) yields \(\|\bm{x}-\bm{y}\|_2\le4\sqrt{\varepsilon/\mu}\). 
Hence each \(P_i\) has diameter at most \(4\sqrt{\varepsilon/\mu}\). 
The isodiametric inequality gives \(\operatorname{vol}(P_i)\le \omega_{d_0}(2\sqrt{\varepsilon/\mu})^{d_0}\). 
Summing over all pieces and rearranging yields \eqref{eq:piece_lower_bound}. 
A complete proof is provided in Appendix~\ref{app:relu_lower_bound_proof}.
\end{proof}

This lower bound formalizes the curse of polyhedral approximation: approximating smooth strongly convex geometry to accuracy \(\varepsilon\) requires \(N=\Omega(\varepsilon^{-d_0/2})\) affine pieces. 
For ReLU-ICNNs, this means that curvature must be simulated indirectly by proliferating linear regions. 
This is precisely the structural bottleneck that SOC-ICNN removes. 
Instead of spending polyhedral pieces to imitate curvature, we lift the value-function class from LP to SOCP and inject quadratic and conic primitives directly into the architecture.

\subsection{SOC-ICNN Architecture}
\label{subsec:soc_icnn_architecture}

The fundamental limitation of LP value functions is their purely polyhedral geometry. To overcome this, we augment the standard ReLU-ICNN backbone with two explicit structural primitives. From a neural network perspective, this results in a topology where the input $\bm{x}$ is processed through three parallel computational branches before being aggregated. The same construction also extends to convolutional and finite-horizon recurrent architectures by replacing dense affine maps with structured convolutional or unrolled recurrent linear operators; see Appendices~\ref{app:cnn_extension} and~\ref{app:rnn_extension}.

\paragraph{Quadratic primitive.}
For each $h=1,\dots,H$, we introduce a quadratic branch defined as
\begin{equation}
\frac{\alpha_h}{2}\left\|B_h\bm{x}+\bm{e}_h\right\|_2^2,
\qquad
\alpha_h\ge 0,
\label{eq:quad_module}
\end{equation}
where $B_h\in\mathbb{R}^{m_h\times d_0}$ and $\bm{e}_h\in\mathbb{R}^{m_h}$. Architecturally, this branch applies a learnable affine transformation to the input, routes the resulting vector through a squared $\ell_2$-norm pooling operator, and scales it by a non-negative weight $\alpha_h$, which provides an explicit PSD curvature primitive.

\paragraph{Conic primitive.}
For each $g=1,\dots,G$, we introduce a conic branch defined as:
\begin{equation}
\lambda_g\left\|A_g\bm{x}+\bm{d}_g\right\|_2,
\qquad
\lambda_g\ge 0,
\label{eq:soc_module}
\end{equation}
where $A_g\in\mathbb{R}^{k_g\times d_0}$ and $\bm{d}_g\in\mathbb{R}^{k_g}$. Similarly, this branch projects the input via an affine layer, passes it through an un-squared standard $\ell_2$-norm operator, and scales the output by $\lambda_g$. This term acts as an explicit Euclidean norm primitive.

Combining the outputs of these two parallel branches with the main ReLU backbone yields the unified SOC-ICNN forward pass:
\begin{equation}
\begin{aligned}
f_{\mathrm{SOC}}(\bm{x})
=
f_{\mathrm{ReLU}}(\bm{x})
+
\sum_{h=1}^{H}
\frac{\alpha_h}{2}
\left\| B_h \bm{x}+\bm{e}_h \right\|_2^2+
\sum_{g=1}^{G}
\lambda_g
\left\| A_g \bm{x}+\bm{d}_g \right\|_2.
\label{eq:unified_soc_icnn}
\end{aligned}
\end{equation}

The three terms in \eqref{eq:unified_soc_icnn} play distinct and complementary geometric roles. The ReLU backbone captures the polyhedral (piecewise-linear) structure, the quadratic branch injects explicit smooth curvature, and the norm branch captures second-order conic geometry. Thus, the SOC-ICNN extends classical ReLU-ICNNs from purely polyhedral value functions into a unified class of structured convex functions encompassing piecewise-linear, quadratic, and conic components. The overall parallel topology is summarized in Figure \ref{fig:unified_arch}.

\begin{figure}[ht]
\centering
\resizebox{1.0\linewidth}{!}{
\begin{tikzpicture}[
    >=Stealth,
    op/.style={
        draw, thick,
        rounded corners=3pt,
        minimum height=0.82cm,
        align=center,
        font=\scriptsize,
        inner sep=4pt,
        drop shadow={opacity=0.08, shadow xshift=0.03cm, shadow yshift=-0.04cm}
    },
    quad/.style={op, fill=teal!4, draw=teal!70!black},
    relu/.style={op, fill=orange!4, draw=orange!80!black},
    conic/.style={op, fill=purple!4, draw=purple!70!black},
    sum/.style={
        draw=black!60,
        thick,
        rounded corners=2pt,
        rectangle,
        fill=gray!8,
        minimum height=0.58cm,
        minimum width=0.68cm,
        inner sep=2pt,
        font=\footnotesize\bfseries,
        drop shadow={opacity=0.08, shadow xshift=0.03cm, shadow yshift=-0.04cm}
    },
    junction/.style={
        draw=black!55,
        fill=gray!10,
        thick,
        rectangle,
        minimum size=0.13cm,
        inner sep=0pt
    },
    line/.style={-{Stealth[length=1.8mm, width=1.35mm]}, thick, draw=black!60},
    plainline/.style={thick, draw=black!60},
    branchlabel/.style={
        font=\scriptsize\bfseries,
        anchor=south west,
        inner sep=1.5pt
    }
]

\def\yq{1.75}
\def\yr{0}
\def\yc{-1.75}

\def\xinput{0}
\def\xsplit{0.8}
\def\xone{2.8}
\def\xtwo{6.1}
\def\xthree{9.4}
\def\xfour{13.1}
\def\xsum{16.5}
\def\xout{17.9}

\node[font=\normalsize\bfseries] (x) at (\xinput,\yr) {$\bm{x}$};

\node[quad, minimum width=3.0cm] (q1) at (\xone,\yq) 
{Affine map\\$\bm{q}_h=B_h\bm{x}+\bm{e}_h$};

\node[quad, minimum width=2.9cm] (q2) at (\xtwo,\yq) 
{Squared norm\\$\frac12\|\bm{q}_h\|_2^2$};

\node[quad, minimum width=2.7cm] (q3) at (\xthree,\yq) 
{Scaling\\$\alpha_h\ge0$};

\node[quad, minimum width=4.1cm] (q4) at (\xfour,\yq) 
{Output\\$\sum_{h=1}^{H}\frac{\alpha_h}{2}\|\bm{q}_h\|_2^2$};

\node[relu, minimum width=3.4cm] (r1) at (\xone,\yr) 
{Hidden update\\$W_\ell\bm{x}+U_\ell\bm{z}_{\ell-1}+\bm{b}_\ell$};

\node[relu, minimum width=2.8cm] (r2) at (\xtwo,\yr) 
{Activation\\$\bm{z}_\ell=\sigma(\cdot)$};

\node[relu, minimum width=2.7cm] (r3) at (\xthree,\yr) 
{Convexity\\$U_\ell\ge0,\ \bm{c}\ge0$};

\node[relu, minimum width=4.1cm] (r4) at (\xfour,\yr) 
{Output\\$\bm{c}^\top\bm{z}_L+\bm{v}^\top\bm{x}+b_0$};

\node[conic, minimum width=3.0cm] (c1) at (\xone,\yc) 
{Affine map\\$\bm{u}_g=A_g\bm{x}+\bm{d}_g$};

\node[conic, minimum width=2.9cm] (c2) at (\xtwo,\yc) 
{Norm primitive\\$\|\bm{u}_g\|_2$};

\node[conic, minimum width=2.7cm] (c3) at (\xthree,\yc) 
{Scaling\\$\lambda_g\ge0$};

\node[conic, minimum width=4.1cm] (c4) at (\xfour,\yc) 
{Output\\$\sum_{g=1}^{G}\lambda_g\|\bm{u}_g\|_2$};

\node[sum] (s) at (\xsum,\yr) {$\Sigma$};
\node[font=\normalsize\bfseries] (out) at (\xout,\yr) {$f_{\mathrm{SOC}}(\bm{x})$};

\node[branchlabel, text=teal!80!black] (q_label) at ([yshift=0.14cm]q1.north west) {Quadratic Branch};
\node[branchlabel, text=orange!80!black] (r_label) at ([yshift=0.14cm]r1.north west) {ReLU-ICNN Backbone};
\node[branchlabel, text=purple!80!black] (c_label) at ([yshift=0.14cm]c1.north west) {Conic Branch};

\begin{scope}[on background layer]
    \node[
        draw=teal!40, dashed, thick, rounded corners=4pt, fill=teal!2, 
        fit=(q_label) (q1) (q4), inner xsep=0.22cm, inner ysep=0.18cm
    ] {};
    \node[
        draw=orange!40, dashed, thick, rounded corners=4pt, fill=orange!2, 
        fit=(r_label) (r1) (r4), inner xsep=0.22cm, inner ysep=0.18cm
    ] {};
    \node[
        draw=purple!40, dashed, thick, rounded corners=4pt, fill=purple!2, 
        fit=(c_label) (c1) (c4), inner xsep=0.22cm, inner ysep=0.18cm
    ] {};
\end{scope}

\node[junction] (split) at (\xsplit,\yr) {};
\draw[plainline] (x.east) -- (split.west);
\draw[line] (split.north) |- (q1.west);
\draw[line] (split.east) -- (r1.west);
\draw[line] (split.south) |- (c1.west);

\draw[line] (q1.east) -- (q2.west);
\draw[line] (q2.east) -- (q3.west);
\draw[line] (q3.east) -- (q4.west);

\draw[line] (r1.east) -- (r2.west);
\draw[line] (r2.east) -- (r3.west);
\draw[line] (r3.east) -- (r4.west);

\draw[line] (c1.east) -- (c2.west);
\draw[line] (c2.east) -- (c3.west);
\draw[line] (c3.east) -- (c4.west);

\node[junction] (vtop) at ($(s.west)+(-0.75,0.95)$) {};
\node[junction] (vmid) at ($(s.west)+(-0.75,0)$) {};
\node[junction] (vbot) at ($(s.west)+(-0.75,-0.95)$) {};

\draw[line] (q4.east) -- ($(vtop |- q4.east)$) -- (vtop.north);
\draw[plainline] (vtop.south) -- (vmid.north);
\draw[line] (vmid.east) -- (s.west);

\draw[line] (r4.east) -- (vmid.west);

\draw[line] (c4.east) -- ($(vbot |- c4.east)$) -- (vbot.south);
\draw[plainline] (vbot.north) -- (vmid.south);

\draw[line] (s.east) -- (out.west);

\end{tikzpicture}
}
\captionsetup{font=scriptsize}
\caption{Unified SOC-ICNN architecture. The input $\bm{x}$ is processed by three parallel branches. The quadratic branch applies an affine map followed by a squared $\ell_2$ primitive and nonnegative scaling, the ReLU-ICNN backbone captures polyhedral piecewise-linear structure under standard ICNN convexity constraints, and the conic branch applies an affine map followed by an $\ell_2$-norm primitive and nonnegative scaling. Summing the three branch outputs yields the final convex surrogate $f_{\mathrm{SOC}}(\bm{x})$, which admits an exact SOCP value-function interpretation.}
\label{fig:unified_arch}
\end{figure}
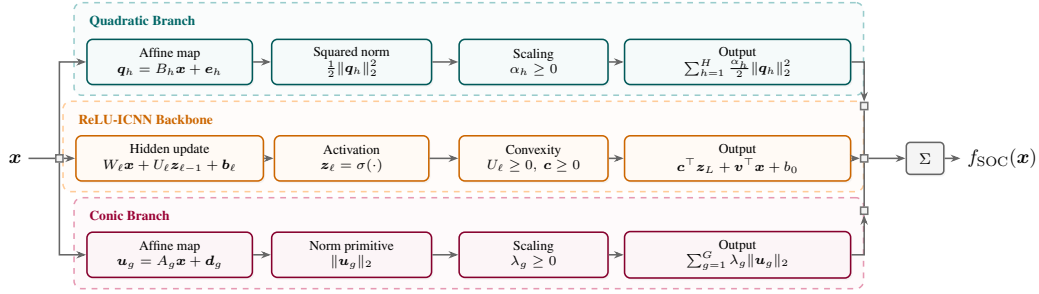

\subsection{SOCP lifting and value-function interpretation}
\label{subsec:soc_value_function}

Both structural branches admit standard SOCP epigraph lifts. 
For the quadratic branch, let
\(\mathcal Q_r^{m+2}:=\{(\xi,\eta,\bm w)\in\mathbb R_+\times\mathbb R_+\times\mathbb R^m:2\xi\eta\ge\|\bm w\|_2^2\}\)
denote the rotated second-order cone. 
Introducing \(\bm q_h=B_h\bm x+\bm e_h\) and \((s_h,1,\bm q_h)\in\mathcal Q_r^{m_h+2}\) gives
\(s_h\ge\frac12\|B_h\bm x+\bm e_h\|_2^2\), which is tight at optimum because \(\alpha_h\ge0\).
For the norm branch, let
\(\mathcal Q^{k+1}:=\{(\bm u,t)\in\mathbb R^k\times\mathbb R:\|\bm u\|_2\le t\}\)
denote the standard second-order cone. 
Introducing \(\bm u_g=A_g\bm x+\bm d_g\) and \((\bm u_g,t_g)\in\mathcal Q^{k_g+1}\) gives
\(t_g\ge\|A_g\bm x+\bm d_g\|_2\), again tight at optimum because \(\lambda_g\ge0\). We now state the main structural result: the entire SOC-ICNN forward pass is exactly an SOCP value function.

\begin{theorem}[SOC-ICNN as an SOCP value function]
\label{thm:soc_value_function}
For every input \(\bm{x}\), the unified architecture in \eqref{eq:unified_soc_icnn} is equal to the optimal value of the following SOCP:
\begin{equation}
\begin{aligned}
f_{\mathrm{SOC}}(\bm{x})
=
\min_{\eta}\quad
&\bm{c}^\top \bm{z}_L+\bm{v}^\top \bm{x}+b_0
+ \sum_{h=1}^{H}\alpha_h s_h
+ \sum_{g=1}^{G}\lambda_g t_g \\
{\rm s.t.}\quad
&\bm{z}_\ell \ge W_\ell \bm{x}+U_\ell \bm{z}_{\ell-1}+\bm{b}_\ell,\quad
\bm{z}_0=\bm{0},\quad
\bm{z}_\ell\ge\bm{0},\quad \ell=1,\dots,L, \\
&\bm{q}_h = B_h\bm{x}+\bm{e}_h,\quad
(s_h,1,\bm{q}_h)\in\mathcal{Q}_r^{m_h+2},\quad h=1,\dots,H, \\
&\bm{u}_g = A_g\bm{x}+\bm{d}_g,\quad
(\bm{u}_g,t_g)\in\mathcal{Q}^{k_g+1},\quad g=1,\dots,G,
\end{aligned}
\label{eq:soc_icnn_socp}
\end{equation}
where \(\eta=\{\bm{z}_\ell,s_h,\bm{q}_h,t_g,\bm{u}_g\}\).
\end{theorem}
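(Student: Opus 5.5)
The plan is to show that the SOCP in \eqref{eq:soc_icnn_socp} is \emph{separable}: once \(\bm x\) is fixed, its variables split into independent blocks \(\{\bm z_\ell\}\), \(\{(s_h,\bm q_h)\}_{h}\), and \(\{(t_g,\bm u_g)\}_{g}\). No constraint couples two blocks, and the objective is a sum of block terms plus the constant \(b_0+\bm v^\top\bm x\). Hence the infimum of the sum equals the sum of the block infima, provided each block infimum is finite and attained. This is elementary, since the feasible set is a Cartesian product. The theorem then reduces to evaluating each block separately and checking that the three values add up to \eqref{eq:unified_soc_icnn}.

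\textbf{ReLU block.} Minimizing \(\bm c^\top\bm z_L\) subject to the layerwise constraints is exactly the LP of Proposition~\ref{prop:relu_lp_method}, with the constant moved out. Invoking that proposition, the minimum is attained at the forward activations \(\bar{\bm z}_\ell\). Adding back \(\bm v^\top\bm x+b_0\) gives value \(f_{\mathrm{ReLU}}(\bm x)\).

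\textbf{Quadratic blocks.} The equality \(\bm q_h=B_h\bm x+\bm e_h\) fixes \(\bm q_h\). The rotated-cone membership \((s_h,1,\bm q_h)\in\mathcal Q_r^{m_h+2}\) then reads \(s_h\ge 0\) and \(2s_h\ge\|\bm q_h\|_2^2\), so the feasible set for \(s_h\) is \([\tfrac12\|B_h\bm x+\bm e_h\|_2^2,\infty)\). If \(\alpha_h>0\), the minimizer is the left endpoint, giving value \(\tfrac{\alpha_h}{2}\|B_h\bm x+\bm e_h\|_2^2\). If \(\alpha_h=0\), the value is \(0\), which equals that same expression, and the left endpoint is still an optimal choice.

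\textbf{Conic blocks.} The argument is identical. \(\bm u_g\) is fixed by the equality, and \((\bm u_g,t_g)\in\mathcal Q^{k_g+1}\) means \(t_g\ge\|\bm u_g\|_2\). With \(\lambda_g\ge0\), the block value is \(\lambda_g\|A_g\bm x+\bm d_g\|_2\), attained at \(t_g=\|\bm u_g\|_2\).

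\textbf{Assembly.} The point formed by the forward activations together with the tight \(s_h\) and \(t_g\) is feasible. Its objective equals the sum of the three block values, i.e.\ \(f_{\mathrm{SOC}}(\bm x)\).

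\textbf{Lower bound.} Every feasible point has objective at least this sum. The reason is that each block's contribution is bounded below by its block minimum. For the ReLU block this uses \(\bm c\ge0\) and the induction \(\bm z_\ell\ge\bar{\bm z}_\ell\), which relies on \(U_\ell\ge0\) and was already established in Proposition~\ref{prop:relu_lp_method}. For the other blocks it uses the sign constraints \(\alpha_h,\lambda_g\ge0\).

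The only step with real content is the ReLU block, which is fully delegated to Proposition~\ref{prop:relu_lp_method}. The remaining risk is bookkeeping: confirming that the rotated-cone convention \(2\xi\eta\ge\|\bm w\|^2\) with \(\eta=1\) yields exactly the factor \(\tfrac12\), and that the \(\alpha_h=0\) and \(\lambda_g=0\) degenerate cases do not break attainment.
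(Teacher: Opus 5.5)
Your proposal is correct and follows essentially the same route as the paper. The upper bound comes from the feasible point built from the forward activations with tight $s_h=\tfrac12\|B_h\bm x+\bm e_h\|_2^2$ and $t_g=\|A_g\bm x+\bm d_g\|_2$, and the lower bound from Proposition~\ref{prop:relu_lp_method} plus the cone inequalities weighted by $\alpha_h,\lambda_g\ge0$. Your block-separability framing only repackages that two-sided argument, and your treatment of $\alpha_h=0$ and $\lambda_g=0$ is slightly more careful than the paper's claim that the epigraph constraints are tight at the optimum.
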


\begin{proof}[Proof sketch]
Let \(V(\bm{x})\) denote the optimal value of \eqref{eq:soc_icnn_socp}. 
For the upper bound, choose the ReLU forward activations, set
\(\bm q_h=B_h\bm x+\bm e_h\), \(s_h=\frac12\|\bm q_h\|_2^2\), \(\bm u_g=A_g\bm x+\bm d_g\), and \(t_g=\|\bm u_g\|_2\). 
This point is feasible and attains exactly the value in \eqref{eq:unified_soc_icnn}, hence \(V(\bm{x})\le f_{\mathrm{SOC}}(\bm{x})\). 
For the lower bound, any feasible point satisfies the ReLU LP lower bound from Proposition~\ref{prop:relu_lp_method} and the conic epigraph inequalities
\(s_h\ge\frac12\|B_h\bm x+\bm e_h\|_2^2\), \(t_g\ge\|A_g\bm x+\bm d_g\|_2\). 
Multiplying by \(\alpha_h,\lambda_g\ge0\) and summing gives \(V(\bm{x})\ge f_{\mathrm{SOC}}(\bm{x})\). 
Thus \(V(\bm{x})=f_{\mathrm{SOC}}(\bm{x})\). A complete derivation is given in Appendix~\ref{app:soc_value_function_proof}.
\end{proof}

Since the ReLU backbone is convex in \(\bm x\), squared Euclidean norms and Euclidean norms composed with affine maps are convex, and nonnegative linear combinations preserve convexity, \(f_{\mathrm{SOC}}\) remains globally convex with respect to the input.

\begin{remark}[Passthrough structures and gradient richness]
\label{rem:passthrough}
In ReLU-ICNNs, the nonnegative hidden weights \(U_\ell\ge0\) restrict the propagated hidden-layer slopes, so unconstrained passthrough terms \(W_\ell\bm x\) are essential for expressing negative directions. 
SOC-ICNN adds a stronger source of gradient richness: its structural affine maps \(B_h\) and \(A_g\) are unconstrained, and the quadratic branch contributes gradients of the form \(\alpha_h B_h^\top(B_h\bm x+\bm e_h)\). 
Thus the model does not rely solely on ReLU passthrough connections to generate signed directional behavior; curvature and conic geometry enter through unconstrained structural branches.
\end{remark}

\section{Theoretical Analysis}
\label{sec:theory}

This section establishes the theoretical core of SOC-ICNN. 
After exposing the CPWL bottleneck of ReLU-ICNNs in Section~\ref{subsec:lp_limitations}, we show that SOC-ICNN breaks this bottleneck by lifting the underlying value-function class. 
The analysis proceeds in three steps: exact structural representation, structural absorption for residual approximation, and forward-complexity preservation.

\subsection{Exact Structured Representations and Strict Extension}
\label{subsec:strict_extension}

Having identified the CPWL bottleneck, we now show that the quadratic and conic branches do not merely improve empirical flexibility; they strictly enlarge the representable convex function class. 
Define the structured conic family
\[
\mathcal{G}_{\mathrm{SOC}}(r,G)
=
\bigl\{
\bm{a}^\top\bm{x}+b+\tfrac12\bm{x}^\top Q\bm{x}
+\sum_{g=1}^{G}\lambda_g\|A_g\bm{x}+\bm{d}_g\|_2\mid Q\succeq0,\ \operatorname{rank}(Q)\le r,\ \lambda_g\ge0
\bigr\}.
\]
This class contains affine terms, PSD quadratic curvature, and Euclidean norm primitives.

\begin{proposition}[Exact representation and strict extension]
\label{prop:soc_extension}
For any target function \(f\in \mathcal{G}_{\mathrm{SOC}}(r,G)\), there exists a finite-width SOC-ICNN such that
\[
f_{\mathrm{SOC}}(\bm{x}) \equiv f(\bm{x}),
\qquad
\forall \bm{x}\in\mathbb{R}^{d_0}.
\]
Moreover, if \(\mathcal{F}_{\mathrm{ReLU}}\) and \(\mathcal{F}_{\mathrm{SOC}}\) denote the function classes represented by finite-width ReLU-ICNNs and SOC-ICNNs, respectively, then
\[
\mathcal{F}_{\mathrm{ReLU}} \subsetneq \mathcal{F}_{\mathrm{SOC}}.
\]
In particular, every non-CPWL member of \(\mathcal{G}_{\mathrm{SOC}}(r,G)\), such as \(f(\bm{x})=\frac12\|\bm{x}\|_2^2\), belongs to \(\mathcal{F}_{\mathrm{SOC}}\setminus\mathcal{F}_{\mathrm{ReLU}}\).
\end{proposition}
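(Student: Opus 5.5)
The proof splits into three parts: an explicit construction for the exact representation, a trivial embedding for the inclusion, and a CPWL obstruction for strictness.

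\textbf{Exact representation.} Take $f(\bm x)=\bm a^\top\bm x+b+\tfrac12\bm x^\top Q\bm x+\sum_{g}\lambda_g\|A_g\bm x+\bm d_g\|_2$ with $Q\succeq0$ and $\operatorname{rank}(Q)\le r$. Factor $Q=B^\top B$ with $B\in\mathbb{R}^{r\times d_0}$, for instance $B=\Lambda_r^{1/2}V_r^\top$ from the spectral decomposition, padding with zero rows if the rank is below $r$. Then $\tfrac12\bm x^\top Q\bm x=\tfrac12\|B\bm x\|_2^2$. We instantiate \eqref{eq:unified_soc_icnn} as follows:
\begin{itemize}
\item $H=1$, $\alpha_1=1$, $B_1=B$, $\bm e_1=\bm 0$;
\item $G$ conic branches copying the given $(\lambda_g,A_g,\bm d_g)$, which is allowed since $\lambda_g\ge0$;
\item a ReLU backbone producing exactly $\bm a^\top\bm x+b$, namely $\bm c=\bm 0$, $\bm v=\bm a$, $b_0=b$, with any width-one hidden layer (say $W_1=0$, $U_\ell=0$, $\bm b_\ell=0$).
\end{itemize}
The constraints $U_\ell\ge0$ and $\bm c\ge0$ hold trivially. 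Summing the three branches reproduces $f$ identically on $\mathbb{R}^{d_0}$, so $f\in\mathcal F_{\mathrm{SOC}}$.

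\textbf{Inclusion $\mathcal F_{\mathrm{ReLU}}\subseteq\mathcal F_{\mathrm{SOC}}$.} Any ReLU-ICNN is an SOC-ICNN with all $\alpha_h=0$ and $\lambda_g=0$, or with $H=G=0$. The branches then vanish and $f_{\mathrm{SOC}}=f_{\mathrm{ReLU}}$.

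\textbf{Strictness.} We exhibit $f(\bm x)=\tfrac12\|\bm x\|_2^2$, which lies in $\mathcal G_{\mathrm{SOC}}(d_0,0)$ and hence in $\mathcal F_{\mathrm{SOC}}$ by the first part. By Proposition~\ref{prop:relu_cpwl_theory}, every $f_{\mathrm{ReLU}}$ is a finite max-affine function. Suppose $\tfrac12\|\bm x\|_2^2=\max_{j\in\mathcal J}\{\bm a_j^\top\bm x+\beta_j\}$ for a finite $\mathcal J$. The sets $\{\bm x:\text{piece } j\text{ active}\}$ are finitely many closed sets covering $\mathbb{R}^{d_0}$, so by Baire's theorem, or simply a measure argument, one of them has nonempty interior. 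On that open set $f$ would be affine, so its Hessian would be zero, contradicting $\nabla^2 f=I$.

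An alternative route is Proposition~\ref{prop:relu_lower_bound}: on the unit ball, $f$ is $1$-strongly convex, and exact equality ($\varepsilon\to0$) forces $N=\infty$. The same argument shows that any member of $\mathcal G_{\mathrm{SOC}}$ which is not CPWL lies outside $\mathcal F_{\mathrm{ReLU}}$.

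The only delicate step is the nonemptiness-of-interior argument in the strictness part. It is handled by the finite closed cover, or by noting that the complement of the union of the finitely many hyperplane-like tie sets $\{\bm a_i^\top\bm x+\beta_i=\bm a_j^\top\bm x+\beta_j\}$ with $\bm a_i\ne\bm a_j$ is open and dense. Everything else is routine bookkeeping.
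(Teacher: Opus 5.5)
Your proposal is correct and essentially matches the paper's proof: you factor $Q=B^\top B$ into one quadratic branch, copy the norm terms into conic branches, place the affine part in the passthrough $\bm v^\top\bm x+b_0$, embed ReLU-ICNNs via $H=G=0$, and invoke Proposition~\ref{prop:relu_cpwl_theory} for strictness. Your only addition is an explicit argument that $\tfrac12\|\bm x\|_2^2$ is not finite max-affine, which the paper merely asserts; the Baire version works, but the ``measure argument'' alone also needs the remark that each active set is a convex polyhedron, since a closed set of positive measure can have empty interior.
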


\begin{proof}[Proof sketch]
Since \(Q\succeq0\) and \(\operatorname{rank}(Q)\le r\), there exists \(B\) such that \(Q=B^\top B\), hence \(\frac12\bm{x}^\top Q\bm{x}=\frac12\|B\bm{x}\|_2^2\), which is exactly represented by one quadratic branch. 
Each norm term \(\lambda_g\|A_g\bm{x}+\bm{d}_g\|_2\) is exactly represented by one conic branch. 
The affine residual \(\bm a^\top\bm x+b\) is represented by the affine passthrough term \(\bm v^\top\bm x+b_0\) in the backbone. 
Thus exact representation holds. 
Strict inclusion follows because finite-width ReLU-ICNNs are CPWL by Proposition~\ref{prop:relu_cpwl_theory}, whereas \(\mathcal{G}_{\mathrm{SOC}}(r,G)\) contains non-CPWL curved functions. 
A detailed proof is in Appendix~\ref{app:soc_extension_proof}.
\end{proof}

\subsection{Structural Absorption and Piece Efficiency}
\label{subsec:param_efficiency}

Exact representation already separates SOC-ICNN from ReLU-ICNN on purely quadratic or conic targets. 
The sharper point is structural absorption: when the target contains dominant curvature plus a residual term, SOC-ICNN absorbs the curvature exactly and leaves only the residual to the ReLU backbone. 
Consider
\begin{equation}
f(\bm{x}) = q(\bm{x}) + h(\bm{x}),
\label{eq:structure_decomposition}
\end{equation}
where \(q\in\mathcal{G}_{\mathrm{SOC}}(r,G)\) captures the dominant quadratic and conic structure, and \(h\) is a convex residual with \(\tilde L\)-Lipschitz gradient on a compact domain \(\Omega\), i.e.,
\[
\|\nabla h(\bm{x})-\nabla h(\bm{y})\|_2
\le
\tilde L\|\bm{x}-\bm{y}\|_2,\qquad
\forall \bm{x},\bm{y}\in\Omega.
\]

\begin{proposition}[Constructive structural absorption bound]
\label{prop:structural_absorption}
Assume that the target \(f\) admits the decomposition in \eqref{eq:structure_decomposition}. 
Then for every affine-piece budget \(N\ge1\), there exists an SOC-ICNN \(\phi_N\) whose ReLU backbone uses at most \(N\) affine pieces and satisfies
\begin{equation}
\|f-\phi_N\|_{L_\infty(\Omega)}
\le
C_\Omega \tilde L\, N^{-2/d_0},
\label{eq:structural_absorption_rate}
\end{equation}
where \(C_\Omega>0\) depends only on the geometry of \(\Omega\).
\end{proposition}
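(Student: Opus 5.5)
The plan is to let the quadratic and conic branches absorb \(q\) exactly, and to use the ReLU backbone only to approximate the smooth convex residual \(h\) by a max of \(N\) tangent planes.

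\textbf{Step 1 (exact absorption of \(q\)).} By Proposition~\ref{prop:soc_extension}, \(q\in\mathcal{G}_{\mathrm{SOC}}(r,G)\) is represented exactly by one quadratic branch and \(G\) conic branches. This uses \(Q=B^\top B\) with \(\alpha=1\), \(\bm e=\bm 0\), and the norm terms copied verbatim. The affine part of \(q\) is absorbed into the backbone's passthrough term \(\bm v^\top\bm x+b_0\). Hence \(f-\phi_N = h - g_N\), where \(g_N\) denotes the ReLU-backbone output. The task reduces to approximating \(h\) on \(\Omega\) by a ReLU-ICNN that realizes a max-affine function with at most \(N\) pieces.

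\textbf{Step 2 (covering and tangent planes).} Since \(\Omega\) is compact, it lies in a bounded box or ball. Choose points \(\bm y_1,\dots,\bm y_N\in\Omega\) forming a \(\delta\)-net of \(\Omega\) with \(\delta\le c_\Omega N^{-1/d_0}\). This follows from a standard grid or packing argument, with \(c_\Omega\) depending on the diameter or volume of \(\Omega\) and on \(d_0\). Define
\[
g_N(\bm x)=\max_{1\le i\le N}\{h(\bm y_i)+\nabla h(\bm y_i)^\top(\bm x-\bm y_i)\}.
\]
By convexity of \(h\), each tangent plane lies below \(h\), so \(g_N\le h\) on \(\Omega\). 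For \(\bm x\in\Omega\), pick \(\bm y_i\) with \(\|\bm x-\bm y_i\|_2\le\delta\). The descent lemma for \(\tilde L\)-Lipschitz gradients gives
\[
h(\bm x)-g_N(\bm x)\le h(\bm x)-h(\bm y_i)-\nabla h(\bm y_i)^\top(\bm x-\bm y_i)\le \tfrac{\tilde L}{2}\delta^2.
\]
Therefore \(\|h-g_N\|_{L_\infty(\Omega)}\le \tfrac{\tilde L}{2}c_\Omega^2N^{-2/d_0}\), which yields \eqref{eq:structural_absorption_rate} with \(C_\Omega=c_\Omega^2/2\).

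There is a caveat on applying the descent lemma. It needs the segment \([\bm y_i,\bm x]\subset\Omega\). I will assume \(\Omega\) is convex, as in Proposition~\ref{prop:relu_lower_bound}. Alternatively, I can note that the Lipschitz-gradient hypothesis is meant on a convex neighborhood, or replace \(\Omega\) by its convex hull.

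\textbf{Step 3 (realizing a max of \(N\) affines as a ReLU-ICNN).} I would use the recursion \(\max(a,b)=b+\sigma(a-b)\). Set \(\ell_i(\bm x)=\bm a_i^\top\bm x+\beta_i\) and use a depth-\((N-1)\) chain of width-one units:
\[
z_1=\sigma(\ell_2-\ell_1), \qquad z_k=\sigma\big((\ell_{k+1}-\ell_1)-\dots\big).
\]
A cleaner route is to track \(m_k=\max_{i\le k}\ell_i-\ell_1\ge 0\). Then
\[
m_{k+1}=\max(m_k,\ell_{k+1}-\ell_1)=\sigma\big(\ell_{k+1}-\ell_1-m_k\big)+m_k .
\]
This needs \(m_k\) to enter with coefficient \(-1\) inside \(\sigma\), which is not ICNN-admissible. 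Instead I write \(m_{k+1}=\sigma(m_k-(\ell_{k+1}-\ell_1))+(\ell_{k+1}-\ell_1)\). Now \(m_k\) enters with weight \(+1\), so \(U\ge0\), and the affine term goes through the passthrough \(W\). The extra additive affine term \(\ell_{k+1}-\ell_1\) is carried as an input passthrough into the next layer's preactivation, since \(W_{k+1}\bm x\) is unconstrained. The final affine offset is absorbed into \(\bm v,b_0\), and the output weight \(c=1\ge 0\).

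So a depth-\(N-1\), width-one ReLU-ICNN plus passthrough computes \(\ell_1+m_N=g_N\) exactly with \(N\) affine pieces. Adding it to the absorbed branches gives \(\phi_N\), and \(f-\phi_N=h-g_N\).

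\textbf{Main obstacle.} The main obstacle I expect is Step 3: verifying that the nonnegativity constraints \(U_\ell\ge0\), \(\bm c\ge 0\) still permit an exact max-of-\(N\)-affines realization. The key is the reformulation above, so that the previous hidden state always enters positively while the signed affine pieces flow through the unconstrained \(W_\ell\). A secondary point is pinning down \(c_\Omega\) via a covering-number bound \(\mathcal N(\Omega,\delta)\le (\operatorname{diam}\Omega\sqrt{d_0}/\delta+1)^{d_0}\), with constants adjusted for small \(N\).
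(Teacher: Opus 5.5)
Your proof is correct. For the approximation itself it follows the paper's route: exact absorption of $q$ via Proposition~\ref{prop:soc_extension}, tangent planes of $h$ at a $\delta$-net, convexity for $g_N\le h$, and the descent lemma for $h-g_N\le\tilde L\delta^2/2$. You even arrive at the same constant $C_\Omega=c_\Omega^2/2$, and the paper's appendix proof also takes $\Omega$ convex, so your caveat matches it.

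The genuine difference is the realization step. The paper asserts, citing Arora et al., that $g_N$ is computed by a single hidden layer of $M-1$ ReLU units. This fails in general for $d_0\ge2$: the non-differentiability set of a one-hidden-layer ReLU network is a union of entire hyperplanes, while that of $\max\{0,x_1,x_2\}$ consists of three rays. A balanced max-tree does not rescue it either, because it needs $\max(a,b)=b+\sigma(a-b)$ with $b$ itself a non-affine hidden quantity, which $U_\ell\ge0$ does not allow. Your sequential chain fixes this. Written cleanly, it is $z_0=0$ and $z_k=\sigma(z_{k-1}+\ell_k-\ell_{k+1})$ for $k=1,\dots,N-1$. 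Induction gives $z_k=\max_{i\le k+1}\ell_i-\ell_{k+1}$. So with $U_k=1$, $W_k\bm x+\bm b_k=\ell_k(\bm x)-\ell_{k+1}(\bm x)$, $c=1$, and passthrough $\ell_N$ (plus the affine part of $q$), the backbone outputs exactly $g_N$ under the ICNN sign constraints. The price is depth $N-1$, which the statement does not restrict.

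Two smaller choices also make your version tighter than the paper's. Taking the net points inside $\Omega$ guarantees that $\nabla h(\bm y_i)$ is defined; the paper's covering centers need not lie in $\Omega$. And a $d_0$-dependent constant is really needed. A grid with $k=\lfloor N^{1/d_0}\rfloor$ cells per side, using one point of $\Omega$ per occupied cell, gives $c_\Omega=2\sqrt{d_0}\operatorname{diam}(\Omega)$ for every $N\ge1$. The paper's suggested $\operatorname{diam}(\Omega)/2$ already fails at $N=1$ for an equilateral triangle.
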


\begin{proof}[Proof sketch]
By Proposition~\ref{prop:soc_extension}, the structured component \(q\) is represented exactly by the quadratic and conic branches. 
For the residual \(h\), construct a \(\delta\)-net on \(\Omega\) with \(\delta\asymp N^{-1/d_0}\), and place tangent hyperplanes of \(h\) at the net points. 
Convexity and the \(\tilde L\)-Lipschitz gradient imply a max-affine approximation \(g_N\) satisfying \(\|h-g_N\|_\infty\le C_\Omega\tilde L N^{-2/d_0}\). 
The ReLU backbone realizes \(g_N\) with at most \(N\) affine pieces. 
Combining this backbone with the exact representation of \(q\) yields \(\phi_N\). 
A full constructive proof is in Appendix~\ref{app:structural_absorption_proof}.
\end{proof}

This result makes the structural advantage explicit. 
A pure ReLU-ICNN must spend affine pieces to simulate the curvature of \(q\), incurring the CPWL lower-bound cost. 
SOC-ICNN bypasses this cost by representing \(q\) exactly and only approximating the smoother residual \(h\). 
Thus the approximation burden is transferred from the full curved target to the residual after curvature absorption.

\subsection{Forward Complexity}
\label{subsec:complexity_analysis}

Finally, the representational upgrade does not compromise feed-forward inference. 
Let the ReLU backbone have width \(m\) and depth \(L\). 
A standard forward pass costs \(T_{\mathrm{ReLU}}=\Theta(Lm^2+Ld_0m)\). 
Each quadratic branch adds \(O(d_0m_h)\) operations, and each conic branch adds \(O(d_0k_g)\), giving
\[
T_{\mathrm{SOC}}
=
\Theta\!\left(
Lm^2+Ld_0m
+d_0\sum_{h=1}^{H}m_h
+d_0\sum_{g=1}^{G}k_g
\right).
\]
Under the standard bounded-branch regime \(H,G=O(1)\) with \(m_h,k_g=O(m)\), this reduces to
\[
T_{\mathrm{SOC}}
=
\Theta(Lm^2+Ld_0m),
\]
the same asymptotic order as the ReLU backbone. 
Therefore, SOC-ICNN is a genuine structural upgrade from LP to SOCP value functions: it injects curvature and conic geometry without increasing the asymptotic order of forward-pass complexity.

\section{Experiments}
\label{sec:exp}

We evaluate whether the SOCP lift of SOC-ICNN is not only a formal reinterpretation, but a functional architectural upgrade over classical ICNNs. 
The experiments target three questions. 
First, does the feed-forward SOC-ICNN value exactly coincide with the lifted SOCP value in Theorem~\ref{thm:soc_value_function}? 
Second, does injecting quadratic and conic primitives improve approximation efficiency under controlled parameter budgets? 
Third, does the improved surrogate geometry translate into better downstream decisions after optimizing the learned convex model? 
Table~\ref{tab:main_exp_summary} summarizes the key numerical evidence, while full implementation details, hyperparameters, per-task tables, and additional diagnostics are deferred to Appendices~\ref{app:exp1_full}--\ref{app:exp3_full}. 
Additional downstream tests for the convolutional and finite-horizon recurrent SOC-ICNN extensions are reported in Appendices~\ref{app:cnn_downstream_real} and~\ref{app:rnn_downstream}, respectively, illustrating that the same LP-to-SOCP lifting principle remains applicable under spatially and temporally structured decision variables.

\subsection{Experiment 1: Value-Function Equivalence}
\label{subsec:exp1}

The first experiment directly tests the central structural claim: the closed-form SOC-ICNN forward pass and the lifted SOCP in \eqref{eq:soc_icnn_socp} compute the same value. 
Across 150 random trials with \(d_0=100\), width \(256\), depth \(6\), and two quadratic and conic blocks, the primal--dual gap remains at the level of \(10^{-14}\), and the discrepancy between the feed-forward value and the external CVXPY SOCP solver value is about \(5.6\times10^{-7}\). 
All ReLU-chain, rotated-SOC, and standard-SOC feasibility residuals are at numerical precision. 
Thus, SOC-ICNN is numerically validated as an exact SOCP value-function architecture, not merely a convex network inspired by SOCP structure. 
The direct feed-forward evaluation is also \(3\)--\(4\times\) faster than solving the lifted SOCP externally. 
Full optimality, feasibility, and runtime diagnostics are reported in Appendix~\ref{app:exp1_full}.

\subsection{Experiment 2: Structural Efficiency under Controlled Budgets}
\label{subsec:exp2}

The second experiment tests whether the new structural primitives reduce the burden placed on the ReLU backbone. 
We compare SOC-ICNN with ReLU-ICNN, Softplus-ICNN, and the internal ablations Quad and Norm under controlled parameter budgets. 
The benchmark contains ten convex targets spanning quadratic, norm, log-sum-exp, Softplus-sum, Huber, \(\ell_1\), hybrid, and ICKAN-inspired composite structures over dimensions \(d\in\{5,10,20,50\}\).

The aggregate \(d=50\) results in Table~\ref{tab:main_exp_summary} show a sharp separation. 
SOC-ICNN achieves the best relative error on all ten targets, with mean error \(0.099\), compared with \(0.772\) for ReLU-ICNN and \(0.763\) for Softplus-ICNN. 
The gain is especially meaningful because SOC-ICNN uses fewer parameters than these two baselines at \(d=50\). 
Quad-ICNN is the strongest single-primitive ablation, confirming that explicit PSD curvature already removes a large part of the CPWL burden. 
The full SOC model further improves the average error by adding norm-based conic primitives. 
These results support the theoretical message of Sections~\ref{subsec:strict_extension}--\ref{subsec:param_efficiency}: curvature should not be simulated by proliferating polyhedral pieces when it can be represented as a native structural primitive. 
Complete per-function and per-dimension results are provided in Appendix~\ref{app:exp2_full}.

\subsection{Experiment 3: Downstream Decision Quality}
\label{subsec:exp3}

The third experiment examines whether better convex surrogate geometry leads to better optimized decisions. 
We evaluate six parametric convex optimization tasks from SOCP, logistic, log-sum-exp, and Huber families under simplex, box, and capped-simplex constraints. 
The baselines include ReLU-ICNN, Softplus-ICNN, Norm, Quad, SOC, and two optimization-based models, DCP and PCF \citep{AgrawalAmosBarrattEtAl2019DiffCvx,BesanconGarciaLegatEtAl2024DiffOpt,SchallerBemporadBoyd2025LPCF}.

The downstream results reveal a clear separation between pointwise fitting accuracy and decision quality. 
PCF remains the strongest optimization-based baseline on average, achieving the lowest mean regret in the \(d=10\) summary. 
Within the neural surrogate family, Quad is the most robust variant, reducing mean regret from \(0.234\) for ReLU-ICNN and \(0.507\) for Softplus-ICNN to \(0.179\). 
SOC-ICNN remains competitive, but the full conic branch is not uniformly beneficial across all downstream tasks. 
This indicates that decision-focused learning rewards controlled curvature more than raw approximation flexibility: the geometry of the learned convex landscape matters after the surrogate is optimized. 
The Norm branch is less stable in this regime, suggesting that additional geometric flexibility must be paired with sufficient inductive control. 
Complete regret, decision-error, prediction-error, and runtime tables are given in Appendix~\ref{app:exp3_full}.

\begin{table}[ht]
\centering
\captionsetup{font=footnotesize}                      
\caption{Main experimental summary. Exp.~1 validates the SOCP value-function identity using the representative no-passthrough setting; the passthrough setting gives comparable precision and is reported in Appendix~\ref{app:exp1_full}. Exp.~2 reports aggregate approximation performance at \(d=50\) over ten convex targets. Exp.~3 reports average downstream regret at \(d=10\) over six parametric optimization tasks.}
\label{tab:main_exp_summary}
\small
\setlength{\tabcolsep}{4pt}
\renewcommand{\arraystretch}{1.08}
\begin{tabular}{llcccc}
\toprule
Experiment & Method / metric & Main score & Dispersion & Wins & Cost / size \\
\midrule
\multirow{4}{*}{Exp.~1}
& Primal--dual gap & \(1.06{\times}10^{-14}\) & max \(4.26{\times}10^{-14}\) & -- & -- \\
& Forward--solver error & \(5.57{\times}10^{-7}\) & max \(7.68{\times}10^{-7}\) & -- & -- \\
& Closed-form forward & \(6.22\) ms & \(\pm0.84\) ms & -- & -- \\
& CVXPY solver & \(24.80\) ms & \(\pm2.55\) ms & -- & -- \\
\midrule
\multirow{5}{*}{Exp.~2}
& ReLU-ICNN & \(0.772\) & median \(0.841\) & \(0/10\) & \(9{,}683\) params \\
& Softplus-ICNN & \(0.763\) & median \(0.831\) & \(0/10\) & \(9{,}683\) params \\
& Quad-ICNN & \(0.127\) & median \(0.098\) & \(0/10\) & \(9{,}528\) params \\
& Norm-ICNN & \(0.611\) & median \(0.721\) & \(0/10\) & \(9{,}578\) params \\
& SOC-ICNN & \(\mathbf{0.099}\) & median \(\mathbf{0.074}\) & \(\mathbf{10/10}\) & \(9{,}423\) params \\
\midrule
\multirow{7}{*}{Exp.~3}
& DCP & \(0.284\) & median \(0.280\) & \(0/6\) & solver layer \\
& PCF & \(\mathbf{0.139}\) & median \(\mathbf{0.132}\) & \(\mathbf{4/6}\) & solver-compatible \\
& ReLU-ICNN & \(0.234\) & median \(0.230\) & \(1/6\) & neural \\
& Softplus-ICNN & \(0.507\) & median \(0.512\) & \(0/6\) & neural \\
& Norm-ICNN & \(0.229\) & median \(0.226\) & \(0/6\) & neural \\
& Quad-ICNN & \(0.179\) & median \(0.177\) & \(1/6\) & neural \\
& SOC-ICNN & \(0.197\) & median \(0.201\) & \(0/6\) & neural \\
\bottomrule
\end{tabular}
\end{table}

\section{Conclusion and Discussion}
\label{sec:conclusion}

In this work, we proposed SOC-ICNN, which elevates ReLU-ICNNs from LP to SOCP value-function networks, achieving superior parameter efficiency without increasing asymptotic complexity. While SOC-ICNN learns convex surrogates directly from data without requiring problem structure a priori, solver-based methods may still be preferable when the exact analytical form is known and fixed. Fusing explicit solver priors with SOC-ICNNs remains a promising direction. The current theoretical guarantees are tied to finite-dimensional SOC-ICNNs and finite-horizon structured extensions; implicit or infinite-horizon recurrent architectures would require additional stability assumptions.

\bibliography{reference}
\bibliographystyle{plainnat}

\appendix
\onecolumn
\newpage

\section*{Appendix Roadmap}
\label{app:roadmap}

This appendix provides the complete supplementary material supporting the main text. 
Appendix~\ref{app:proofs} contains the full derivations and proofs for the LP value-function form of ReLU-ICNNs, the SOCP value-function representation of SOC-ICNNs, the CPWL structure of finite-width ReLU-ICNNs, the approximation lower bound, the strict extension result, the structural absorption bound, and the forward-complexity calculation. 
Appendices~\ref{app:exp1_full}--\ref{app:exp3_full} provide full experimental details and per-task numerical results for the three main experiments summarized in Section~\ref{sec:exp}. 
Appendix~\ref{app:cnn_extension} presents the convolutional SOC-ICNN extension, including convexity and SOCP value-function arguments, while Appendix~\ref{app:rnn_extension} gives the corresponding finite-horizon recurrent extension. 
Appendices~\ref{app:cnn_downstream_real} and~\ref{app:rnn_downstream} report additional downstream tests for the convolutional and recurrent extensions on spatially and temporally structured decision problems. 
The appendix concludes with the NeurIPS checklist.

\section{Additional Derivations and Proofs}
\label{app:proofs}

This appendix provides the complete derivations required by the main text. The presentation follows the main storyline of the paper. We first prove the LP value-function form of ReLU-ICNNs, then establish the main theorem showing that SOC-ICNN is an SOCP value function. We next prove the theoretical statements used in Section~\ref{sec:theory}: the CPWL structure of finite-width ReLU-ICNNs, the lower bound for approximating strongly convex targets, the exact representability and strict extension result for SOC-ICNNs, the structural absorption bound, and finally the complexity calculation.

\subsection{Proof of the LP value-function form of ReLU-ICNN}
\label{app:relu_lp_equiv_proof}

Let
\[
\bar{\bm{z}}_\ell
=
\sigma\!\left(
W_\ell \bm{x}+U_\ell \bar{\bm{z}}_{\ell-1}+\bm{b}_\ell
\right),
\qquad \ell=1,\dots,L,
\]
denote the hidden states produced by the forward pass, with $\bar{\bm{z}}_0=\bm{0}$. Since ReLU satisfies $\sigma(a)=\max\{a,0\}$ elementwise, for every layer $\ell$ we have
\[
\bar{\bm{z}}_\ell
\ge
W_\ell \bm{x}+U_\ell \bar{\bm{z}}_{\ell-1}+\bm{b}_\ell,
\qquad
\bar{\bm{z}}_\ell \ge \bm{0}.
\]
Hence $\{\bar{\bm{z}}_\ell\}_{\ell=1}^L$ is feasible for \eqref{eq:method_relu_lp}, and therefore
\[
\operatorname{val}\eqref{eq:method_relu_lp}
\le
\bm{c}^\top \bar{\bm{z}}_L+\bm{v}^\top \bm{x}+b_0
=
f_{\mathrm{ReLU}}(\bm{x}).
\]

To prove the reverse inequality, take any feasible point
$\{\bm{z}_\ell\}_{\ell=1}^L$ of \eqref{eq:method_relu_lp}. We show by induction that
\[
\bm{z}_\ell \ge \bar{\bm{z}}_\ell,~~ \forall\ell=1,\dots,L.
\]

For $\ell=1$, feasibility gives
\[
\bm{z}_1 \ge W_1\bm{x}+\bm{b}_1,
\qquad
\bm{z}_1 \ge \bm{0},
\]
hence
\[
\bm{z}_1
\ge
\max\{W_1\bm{x}+\bm{b}_1,\bm{0}\}
=
\bar{\bm{z}}_1.
\]

Assume now that $\bm{z}_k \ge \bar{\bm{z}}_k$ for some
$k\in\{1,\dots,L-1\}$. Since $U_{k+1}\ge 0$ elementwise, monotonicity of matrix multiplication implies
\[
U_{k+1}\bm{z}_k \ge U_{k+1}\bar{\bm{z}}_k.
\]
Using feasibility once more,
\[
\bm{z}_{k+1}
\ge
W_{k+1}\bm{x}+U_{k+1}\bm{z}_k+\bm{b}_{k+1}
\ge
W_{k+1}\bm{x}+U_{k+1}\bar{\bm{z}}_k+\bm{b}_{k+1},
\]
and $\bm{z}_{k+1}\ge \bm{0}$. Therefore
\[
\bm{z}_{k+1}
\ge
\max\!\left\{
W_{k+1}\bm{x}+U_{k+1}\bar{\bm{z}}_k+\bm{b}_{k+1},
\bm{0}
\right\}
=
\bar{\bm{z}}_{k+1}.
\]
The induction is complete.

In particular, $\bm{z}_L \ge \bar{\bm{z}}_L$. Since $\bm{c}\ge 0$, we obtain
\[
\bm{c}^\top \bm{z}_L \ge \bm{c}^\top \bar{\bm{z}}_L.
\]
Hence every feasible point of \eqref{eq:method_relu_lp} has objective value at least $f_{\mathrm{ReLU}}(\bm{x})$, which implies
\[
\operatorname{val}\eqref{eq:method_relu_lp}
\ge
f_{\mathrm{ReLU}}(\bm{x}).
\]
Combining both inequalities yields
\[
\operatorname{val}\eqref{eq:method_relu_lp}
=
f_{\mathrm{ReLU}}(\bm{x}).
\]

\subsection{Proof of Theorem~\ref{thm:soc_value_function}}
\label{app:soc_value_function_proof}

Let $V(\bm{x})$ denote the optimal value of the SOCP in
\eqref{eq:soc_icnn_socp}. We prove that
\[
V(\bm{x}) = f_{\mathrm{SOC}}(\bm{x}).
\]

We first show that $V(\bm{x}) \le f_{\mathrm{SOC}}(\bm{x})$.
By the forward definition of the ReLU backbone, there exist hidden variables
$\bar{\bm{z}}_1,\dots,\bar{\bm{z}}_L$ such that
\[
\bm{c}^\top \bar{\bm{z}}_L+\bm{v}^\top \bm{x}+b_0
=
f_{\mathrm{ReLU}}(\bm{x}).
\]
For each quadratic branch $h$, define
\[
\bar{\bm{q}}_h = B_h\bm{x}+\bm{e}_h,
\qquad
\bar{s}_h = \frac12 \|B_h\bm{x}+\bm{e}_h\|_2^2.
\]
Then $(\bar{s}_h,1,\bar{\bm{q}}_h)\in\mathcal{Q}_r^{m_h+2}$.
For each conic branch $g$, define
\[
\bar{\bm{u}}_g = A_g\bm{x}+\bm{d}_g,
\qquad
\bar{t}_g = \|A_g\bm{x}+\bm{d}_g\|_2,
\]
so that $(\bar{\bm{u}}_g,\bar{t}_g)\in\mathcal{Q}^{k_g+1}$.
Therefore,
\[
\left(
\{\bar{\bm{z}}_\ell\}_{\ell=1}^L,
\{\bar{s}_h,\bar{\bm{q}}_h\}_{h=1}^H,
\{\bar{t}_g,\bar{\bm{u}}_g\}_{g=1}^G
\right)
\]
is feasible for \eqref{eq:soc_icnn_socp}. Its objective value is
\[
\bm{c}^\top \bar{\bm{z}}_L+\bm{v}^\top \bm{x}+b_0
+
\sum_{h=1}^{H}\alpha_h \bar{s}_h
+
\sum_{g=1}^{G}\lambda_g \bar{t}_g,
\]
which equals
\[
f_{\mathrm{ReLU}}(\bm{x})
+
\sum_{h=1}^{H}
\frac{\alpha_h}{2}\|B_h\bm{x}+\bm{e}_h\|_2^2
+
\sum_{g=1}^{G}
\lambda_g\|A_g\bm{x}+\bm{d}_g\|_2
=
f_{\mathrm{SOC}}(\bm{x}).
\]
Hence
\[
V(\bm{x}) \le f_{\mathrm{SOC}}(\bm{x}).
\]

We now prove the reverse inequality.
Take any feasible point of \eqref{eq:soc_icnn_socp}. By Proposition~\ref{prop:relu_lp_method}, the ReLU constraints imply
\[
\bm{c}^\top \bm{z}_L+\bm{v}^\top\bm{x}+b_0
\ge
f_{\mathrm{ReLU}}(\bm{x}).
\]
For the quadratic branches, feasibility of
$(s_h,1,\bm{q}_h)\in\mathcal{Q}_r^{m_h+2}$ together with
$\bm{q}_h=B_h\bm{x}+\bm{e}_h$ implies
\[
s_h \ge \frac12\|B_h\bm{x}+\bm{e}_h\|_2^2.
\]
Multiplying by $\alpha_h\ge 0$ and summing over $h$ gives
\[
\sum_{h=1}^{H}\alpha_h s_h
\ge
\sum_{h=1}^{H}
\frac{\alpha_h}{2}\|B_h\bm{x}+\bm{e}_h\|_2^2.
\]
Similarly, feasibility of $(\bm{u}_g,t_g)\in\mathcal{Q}^{k_g+1}$ and
$\bm{u}_g=A_g\bm{x}+\bm{d}_g$ implies
\[
t_g \ge \|A_g\bm{x}+\bm{d}_g\|_2,
\]
and therefore
\[
\sum_{g=1}^{G}\lambda_g t_g
\ge
\sum_{g=1}^{G}\lambda_g\|A_g\bm{x}+\bm{d}_g\|_2.
\]
Adding the three parts yields
\[
\bm{c}^\top \bm{z}_L+\bm{v}^\top\bm{x}+b_0
+
\sum_{h=1}^{H}\alpha_h s_h
+
\sum_{g=1}^{G}\lambda_g t_g
\ge
f_{\mathrm{SOC}}(\bm{x}).
\]
Thus
\[
V(\bm{x}) \ge f_{\mathrm{SOC}}(\bm{x}).
\]
Combining both inequalities proves
\[
V(\bm{x}) = f_{\mathrm{SOC}}(\bm{x}).
\]

\subsection{Proof of Proposition~\ref{prop:relu_cpwl_theory}}
\label{app:relu_cpwl_proof}

Starting from \eqref{eq:method_relu_lp}, we dualize the LP, introduce dual variables $\bm{\nu}_\ell\ge 0$ for the constraints
\[
\bm{z}_\ell - U_\ell \bm{z}_{\ell-1} - W_\ell\bm{x} - \bm{b}_\ell \ge \bm{0},
\]
and $\bm{\mu}_\ell\ge 0$ for the constraints $\bm{z}_\ell\ge \bm{0}$.
The Lagrangian is
\begin{align*}
\mathcal{L}
&=
\bm{c}^\top \bm{z}_L+\bm{v}^\top \bm{x}+b_0
-\sum_{\ell=1}^L \bm{\nu}_\ell^\top(\bm{z}_\ell-U_\ell\bm{z}_{\ell-1}-W_\ell\bm{x}-\bm{b}_\ell)
-\sum_{\ell=1}^L \bm{\mu}_\ell^\top \bm{z}_\ell \\
&=
\bm{v}^\top\bm{x}+b_0+\sum_{\ell=1}^L \bm{\nu}_\ell^\top(W_\ell\bm{x}+\bm{b}_\ell)
+ \bm{z}_L^\top(\bm{c}-\bm{\nu}_L-\bm{\mu}_L) 
+ \sum_{\ell=1}^{L-1} \bm{z}_\ell^\top\bigl(U_{\ell+1}^\top \bm{\nu}_{\ell+1}-\bm{\nu}_\ell-\bm{\mu}_\ell\bigr).
\end{align*}
To have a finite lower bound when minimizing over $\bm{z}_\ell$, the coefficients of $\bm{z}_\ell$ must vanish. This gives the stationarity conditions
\[
\bm{c}-\bm{\nu}_L-\bm{\mu}_L=\bm{0},\qquad
U_{\ell+1}^\top \bm{\nu}_{\ell+1}-\bm{\nu}_\ell-\bm{\mu}_\ell=\bm{0},\quad \ell=1,\dots,L-1.
\]
Since $\bm{\mu}_\ell\ge0$, these are equivalent to the chain constraints
\begin{equation}
0\le \bm{\nu}_L \le \bm{c},
\qquad
0\le \bm{\nu}_\ell \le U_{\ell+1}^\top \bm{\nu}_{\ell+1},
\quad \ell=1,\dots,L-1.
\label{eq:app_dual_chain}
\end{equation}
Because each $U_{\ell+1}$ has nonnegative entries, the chain constraints imply that each $\bm{\nu}_\ell$ is bounded. Hence the dual feasible region $\mathcal{N}$ defined by \eqref{eq:app_dual_chain} is a nonempty bounded polyhedron.

The dual problem therefore reads
\[
f_{\mathrm{ReLU}}(\bm{x})
=
\max_{\{\bm{\nu}_\ell\}\in\mathcal{N}}
\left[
\bm{v}^\top \bm{x}+b_0
+
\sum_{\ell=1}^{L}
\bm{\nu}_\ell^\top(W_\ell\bm{x}+\bm{b}_\ell)
\right].
\]

Since a linear function over a polyhedron attains its maximum at an extreme point, and the number of extreme points of $\mathcal{N}$ is finite, there exists a finite index set $\mathcal{J}$ such that
\[
f_{\mathrm{ReLU}}(\bm{x})
=
\max_{j\in\mathcal{J}}
\left[
\left(
\bm{v}+\sum_{\ell=1}^{L}W_\ell^\top \bm{\nu}^{(j)}_\ell
\right)^\top \bm{x}
+
\left(
b_0+\sum_{\ell=1}^{L}(\bm{\nu}^{(j)}_\ell)^\top \bm{b}_\ell
\right)
\right].
\]
Defining
\[
\bm{a}_j
=
\bm{v}+\sum_{\ell=1}^{L}W_\ell^\top \bm{\nu}^{(j)}_\ell,
\qquad
\beta_j
=
b_0+\sum_{\ell=1}^{L}(\bm{\nu}^{(j)}_\ell)^\top \bm{b}_\ell,
\]
we obtain the finite max-affine form
\[
f_{\mathrm{ReLU}}(\bm{x})
=
\max_{j\in\mathcal{J}}
\{\bm{a}_j^\top\bm{x}+\beta_j\}.
\]

The subgradient expression in \eqref{eq:relu_subgrad_cone} follows immediately from the standard subdifferential formula for max-affine functions: every subgradient is a convex combination of active slopes, hence can still be written in the form
\[
\bm{g}
=
\bm{v}+\sum_{\ell=1}^{L}W_\ell^\top \bm{\nu}_\ell,
\]
with $\{\bm{\nu}_\ell\}$ satisfying the same chain constraints \eqref{eq:app_dual_chain}.

\subsection{Proof of Proposition~\ref{prop:relu_lower_bound}}
\label{app:relu_lower_bound_proof}

Let
\[
g(\bm{x})=\max_{1\le i\le N}\{\bm{a}_i^\top \bm{x}+b_i\},
\]
and define the active region of the $i$th affine piece by
\[
A_i
=
\left\{
\bm{x}\in\Omega\;\middle|\;
g(\bm{x})=\bm{a}_i^\top \bm{x}+b_i
\right\}.
\]
Since $g$ and each $\bm{a}_i^\top\bm{x}+b_i$ are continuous, each $A_i$ is a closed set, hence Lebesgue measurable. Clearly $\Omega=\bigcup_{i=1}^N A_i$.

Fix one active region $A_i$, and take any $\bm{x},\bm{y}\in A_i$. Let
\[
\bm{m}=\frac{\bm{x}+\bm{y}}{2}.
\]
Because $\bm{x},\bm{y}$ lie on the same active affine piece, and
$\|f-g\|_{L_\infty(\Omega)}\le \varepsilon$, we have
\[
|f(\bm{x})-\ell_i(\bm{x})|\le \varepsilon,
\qquad
|f(\bm{y})-\ell_i(\bm{y})|\le \varepsilon,
\qquad
|f(\bm{m})-g(\bm{m})|\le \varepsilon,
\]
where $\ell_i(\bm{x})=\bm{a}_i^\top \bm{x}+b_i$. By $\mu$-strong convexity,
\[
f(\bm{m})
\le
\frac{f(\bm{x})+f(\bm{y})}{2}
-
\frac{\mu}{8}\|\bm{x}-\bm{y}\|_2^2.
\]
Substituting the uniform error bounds and using the fact that $\ell_i$ is affine yields
\[
\frac{\mu}{8}\|\bm{x}-\bm{y}\|_2^2 \le 2\varepsilon.
\]
Therefore
\[
\|\bm{x}-\bm{y}\|_2
\le
4\sqrt{\frac{\varepsilon}{\mu}}.
\]
Hence every active region has diameter at most $4\sqrt{\varepsilon/\mu}$.

Applying the isodiametric inequality to each $A_i$ gives
\[
\operatorname{vol}_{d_0}(A_i)
\le
\omega_{d_0}
\left(
\frac{\operatorname{diam}(A_i)}{2}
\right)^{d_0}
\le
\omega_{d_0}
\left(
2\sqrt{\frac{\varepsilon}{\mu}}
\right)^{d_0}.
\]
Since $\Omega\subseteq\bigcup_{i=1}^N A_i$, the subadditivity of Lebesgue measure implies
\[
\operatorname{vol}_{d_0}(\Omega)
\le
\sum_{i=1}^N \operatorname{vol}_{d_0}(A_i)
\le
N\,
\omega_{d_0}
\left(
2\sqrt{\frac{\varepsilon}{\mu}}
\right)^{d_0},
\]
which rearranges to
\[
N
\ge
\frac{\operatorname{vol}_{d_0}(\Omega)}
{\omega_{d_0}\,2^{d_0}}
\left(\frac{\mu}{\varepsilon}\right)^{d_0/2}.
\]

\subsection{Proof of Proposition~\ref{prop:soc_extension}}
\label{app:soc_extension_proof}

Take any
\[
f(\bm{x})
=
\bm{a}^\top \bm{x}+b
+\frac12 \bm{x}^\top Q\bm{x}
+
\sum_{g=1}^{G}\lambda_g\|A_g\bm{x}+\bm{d}_g\|_2
\in \mathcal{G}_{\mathrm{SOC}}(r,G).
\]
Since $Q\succeq 0$ and $\operatorname{rank}(Q)\le r$, there exists a matrix
$B\in\mathbb{R}^{r\times d_0}$ such that $Q=B^\top B$, and hence
\[
\frac12 \bm{x}^\top Q\bm{x}
=
\frac12\|B\bm{x}\|_2^2.
\]
Therefore:
(i) the affine term $\bm{a}^\top\bm{x}+b$ is represented by the affine part of the ReLU backbone (by setting the hidden layers to zero depth);
(ii) the quadratic term is represented exactly by one quadratic branch;
(iii) each norm term $\lambda_g\|A_g\bm{x}+\bm{d}_g\|_2$ is represented exactly by one conic branch.
This yields an exact SOC-ICNN representation of $f$.

To prove strict inclusion, note first that $\mathcal{F}_{\mathrm{ReLU}}\subseteq \mathcal{F}_{\mathrm{SOC}}$ is immediate by setting $H=0$ and $G=0$. On the other hand, Proposition~\ref{prop:relu_cpwl_theory} shows that every finite-width ReLU-ICNN is CPWL, whereas
$\mathcal{G}_{\mathrm{SOC}}(r,G)$ contains many non-CPWL functions, such as nondegenerate PSD quadratic forms (e.g., $f(\bm{x})=\frac12\|\bm{x}\|_2^2$) and nondegenerate Euclidean norm compositions. Since these functions are representable by SOC-ICNN but not by finite-width ReLU-ICNN, we conclude that
\[
\mathcal{F}_{\mathrm{ReLU}} \subsetneq \mathcal{F}_{\mathrm{SOC}}.
\]

\subsection{Proof of Proposition~\ref{prop:structural_absorption}}
\label{app:structural_absorption_proof}

Assume
\[
f(\bm{x}) = q(\bm{x}) + h(\bm{x}),
\qquad
q\in\mathcal{G}_{\mathrm{SOC}}(r,G),
\]
and that $\nabla h$ is $\tilde L$-Lipschitz on a compact convex domain
$\Omega\subset\mathbb{R}^{d_0}$. By Proposition~\ref{prop:soc_extension},
the structured part $q$ is represented exactly by the SOC branches.

Fix an integer $N\ge 1$. Let $\delta = c_\Omega N^{-1/d_0}$ where $c_\Omega$ is a constant depending only on $\Omega$ such that the covering number of $\Omega$ by Euclidean balls of radius $\delta$ satisfies $M \le N$. (Standard covering arguments give $c_\Omega = \operatorname{diam}(\Omega)/2$ suffices, but any fixed constant works; we absorb it into $C_\Omega$ later.) Let $\mathcal{X}_N=\{\bm{x}_1,\dots,\bm{x}_M\}$ be the centers of such a covering, so $M\le N$ and every point in $\Omega$ is within distance $\delta$ of some $\bm{x}_i$.

For each net point $\bm{x}_i$, define the tangent plane
\[
\ell_i(\bm{x})
=
h(\bm{x}_i)+\nabla h(\bm{x}_i)^\top(\bm{x}-\bm{x}_i),
\]
and let
\[
g_N(\bm{x})=\max_{1\le i\le M}\ell_i(\bm{x}).
\]
Since $h$ is convex, each tangent plane is a global lower support, so
$g_N(\bm{x})\le h(\bm{x})$ for all $\bm{x}\in\Omega$.

By gradient Lipschitz continuity, the standard descent lemma gives
\[
h(\bm{x})
\le
\ell_i(\bm{x})
+
\frac{\tilde L}{2}\|\bm{x}-\bm{x}_i\|_2^2,
\qquad \forall \bm{x}\in\Omega.
\]
For any $\bm{x}\in\Omega$, choose a net point $\bm{x}_i$ with $\|\bm{x}-\bm{x}_i\|_2\le \delta$. Then
\[
h(\bm{x})-g_N(\bm{x})
\le
h(\bm{x})-\ell_i(\bm{x})
\le
\frac{\tilde L}{2}\delta^2
\le
\frac{\tilde L}{2}c_\Omega^2 N^{-2/d_0}.
\]
Therefore,
\[
\|h-g_N\|_{L_\infty(\Omega)}
\le
C_\Omega \tilde L\, N^{-2/d_0},
\]
where $C_\Omega = c_\Omega^2/2$ depends only on $\Omega$.

Finally, $g_N$ is a finite max-affine convex function. It can be represented exactly by a finite-width ReLU-ICNN: one can construct a network with $M-1$ hidden units in a single layer that computes the pointwise maximum (see, e.g., the constructive proof in \citep{AroraEtAl2018} or a simple two-layer architecture). Adding this ReLU backbone to the exact SOC representation of $q$ gives an SOC-ICNN $\phi_N$ such that
\[
\|f-\phi_N\|_{L_\infty(\Omega)}
\le
C_\Omega \tilde L\, N^{-2/d_0}.
\]

\subsection{Proof of the complexity formulas}
\label{app:complexity_proof}

Assume the ReLU backbone has width $m$ in each of its $L$ layers. Each layer computes two dominant matrix-vector products:
$U_\ell \bm{z}_{\ell-1}$ with cost $\Theta(m^2)$, and
$W_\ell \bm{x}$ with cost $\Theta(d_0m)$. Summing over all layers gives
\[
T_{\mathrm{ReLU}}
=
\Theta(Lm^2+Ld_0m).
\]

For the quadratic branches, the dominant cost of the $h$th branch is the affine map $B_h\bm{x}+\bm{e}_h$, which costs $\Theta(d_0m_h)$. Summing over $h=1,\dots,H$ yields
\[
\Delta T_{\mathrm{Quad}}
=
\Theta\!\left(d_0\sum_{h=1}^{H}m_h\right).
\]

For the conic branches, the dominant cost of the $g$th branch is the affine map $A_g\bm{x}+\bm{d}_g$, which costs $\Theta(d_0k_g)$. Summing over $g=1,\dots,G$ yields
\[
\Delta T_{\mathrm{SOC}}
=
\Theta\!\left(d_0\sum_{g=1}^{G}k_g\right).
\]

Combining all terms gives the overall SOC-ICNN forward complexity
\[
T_{\mathrm{SOC}}
=
\Theta\!\left(
Lm^2+Ld_0m+d_0\sum_{h=1}^{H}m_h+d_0\sum_{g=1}^{G}k_g
\right).
\]
Under the common regime $H,G=O(1)$ and $m_h,k_g=O(m)$, this simplifies to the same asymptotic order as the backbone alone.

\section{Experiment Results}

\subsection{Full Results for Experiment 1}
\label{app:exp1_full}

\subsubsection{Setting}
Table~\ref{tab:exp1_diagnostics} reports the complete diagnostic metrics for the value-function equivalence experiment.
Across 150 trials with both passthrough settings, the closed-form forward pass and the external CVXPY solver agree to high precision.
We detail the meaning of each metric below:

\begin{itemize}
    \item \textbf{Primal-Dual Gap}: The difference between the closed-form forward value (primal) and the explicit dual objective evaluated using the extracted dual variables. A gap of $\sim10^{-14}$ confirms strong duality and numerical optimality.
    \item \textbf{Forward vs. Solver Abs. Error}: The absolute difference $|f_{\mathrm{SOC}}(\bm{x}) - V_{\mathrm{CVXPY}}(\bm{x})|$. Values $<10^{-6}$ verify exact equivalence.
    \item \textbf{ReLU Primal Violation}: The maximum violation of the ReLU primal constraints $\bm{z}_\ell \ge W_\ell \bm{x} + U_\ell \bm{z}_{\ell-1} + \bm{b}_\ell$ and $\bm{z}_\ell \ge \bm{0}$. Zero indicates strict feasibility.
    \item \textbf{ReLU Dual Box Violation}: The maximum violation of the dual box constraints $0 \le \bm{\nu}_\ell \le U_{\ell+1}^\top \bm{\nu}_{\ell+1}$ (derived in Appendix~\ref{app:relu_cpwl_proof}). Zero confirms dual feasibility.
    \item \textbf{ReLU Complementarity Slack}: The absolute value of the inner product $\bm{\nu}_\ell^\top (\bm{z}_\ell - W_\ell \bm{x} - U_\ell \bm{z}_{\ell-1} - \bm{b}_\ell)$. Zero verifies complementary slackness for the ReLU block.
    \item \textbf{Quad./Norm Epi. Violation}: Violation of the conic epigraph constraints, e.g., $s_h \ge \frac{1}{2}\|\bm{q}_h\|_2^2$ for the quadratic branch. Zero indicates the auxiliary variables lie strictly within or on the cone.
    \item \textbf{Quad./Norm Tightness Slack}: The absolute difference $|s_h - \frac{1}{2}\|\bm{q}_h\|_2^2|$. Since the objective coefficient $\alpha_h > 0$, this should be zero at optimality, confirming the epigraph constraint is active (tight).
    \item \textbf{Norm Dual Ball/Align. Violation}: The violation of the dual norm cone constraints $\|\bm{\mu}_g\|_2 \le \lambda_g$ and the alignment condition $\bm{\mu}_g^\top \bm{u}_g = \lambda_g t_g$. Zero confirms dual feasibility and complementarity for the SOC branch.
    \item \textbf{Solver Feasibility}: The same violation metrics evaluated on the solution returned by the CVXPY solver, verifying that the external solver also finds a correct optimal solution.
\end{itemize}

\subsubsection{Results}
\begin{table}[H]
\centering
\caption{Experiment 1: Full Diagnostic Metrics for Value-Function Equivalence}
\label{tab:exp1_diagnostics}
\footnotesize 
\setlength{\tabcolsep}{3pt} 
\renewcommand{\arraystretch}{1.1}
\begin{tabular}{lcccc}
\toprule
\multirow{2}{*}{Metric} & \multicolumn{2}{c}{Passthrough = \texttt{False}} & \multicolumn{2}{c}{Passthrough = \texttt{True}} \\
\cmidrule(lr){2-3} \cmidrule(lr){4-5}
 & Mean & Max & Mean & Max \\
\midrule
Trials & \multicolumn{2}{c}{150} & \multicolumn{2}{c}{150} \\
Solver Success Rate & \multicolumn{2}{c}{1.0} & \multicolumn{2}{c}{1.0} \\
\midrule
\multicolumn{5}{c}{\textbf{Optimality Gaps}} \\
\quad Primal-Dual Gap & $1.06\times10^{-14}$ & $4.26\times10^{-14}$ & $2.88\times10^{-14}$ & $1.14\times10^{-13}$ \\
\quad Forward vs. Solver Abs. Error & $5.57\times10^{-7}$ & $7.68\times10^{-7}$ & $5.57\times10^{-7}$ & $7.68\times10^{-7}$ \\
\midrule
\multicolumn{5}{c}{\textbf{Closed-Form Feasibility}} \\
\quad ReLU Primal Violation & $0$ & $0$ & $0$ & $0$ \\
\quad ReLU Dual Box Violation & $0$ & $0$ & $0$ & $0$ \\
\quad ReLU Complementarity Slack & $0$ & $0$ & $0$ & $0$ \\
\quad Quadratic Epigraph Violation & $0$ & $0$ & $0$ & $0$ \\
\quad Quadratic Tightness Slack & $0$ & $0$ & $0$ & $0$ \\
\quad Norm Epigraph Violation & $0$ & $0$ & $0$ & $0$ \\
\quad Norm Tightness Slack & $0$ & $0$ & $0$ & $0$ \\
\quad Norm Dual Ball Violation & $8.88\times10^{-18}$ & $4.44\times10^{-16}$ & $8.88\times10^{-18}$ & $4.44\times10^{-16}$ \\
\quad Norm Dual Alignment Violation & $0$ & $0$ & $0$ & $0$ \\
\midrule
\multicolumn{5}{c}{\textbf{Solver Feasibility (CVXPY)}} \\
\quad ReLU Primal Violation & $6.80\times10^{-16}$ & $9.99\times10^{-16}$ & $3.91\times10^{-15}$ & $7.11\times10^{-15}$ \\
\quad Quadratic Epigraph Violation & $0$ & $0$ & $0$ & $0$ \\
\quad Quadratic Tightness Slack & $1.46\times10^{-7}$ & $1.65\times10^{-7}$ & $1.46\times10^{-7}$ & $1.65\times10^{-7}$ \\
\quad Norm Epigraph Violation & $0$ & $0$ & $0$ & $0$ \\
\quad Norm Tightness Slack & $1.69\times10^{-7}$ & $2.61\times10^{-7}$ & $1.69\times10^{-7}$ & $2.61\times10^{-7}$ \\
\midrule
\multicolumn{5}{c}{\textbf{Runtime (ms)}} \\
\quad Closed-Form Forward & $6.22 \pm 0.84$ & -- & $6.66 \pm 1.43$ & -- \\
\quad CVXPY Solver & $24.80 \pm 2.55$ & -- & $19.71 \pm 4.34$ & -- \\
\bottomrule
\end{tabular}
\end{table}

\subsection{Full Results for Experiment 2}
\label{app:exp2_full}

\paragraph{Target Functions}
The ten convex functions used in Experiment~2 are defined as follows ($x\in\mathbb{R}^d$, all operations elementwise unless specified).

\begin{itemize}
    \item \textbf{QuadraticIso} (isotropic quadratic): $f(x) = \frac{1}{2}\|x\|_2^2$.
    \item \textbf{QuadraticAniso} (anisotropic quadratic): $f(x) = \frac{1}{2}\sum_{i=1}^d w_i x_i^2$, $w_i = 0.5 + 2\cdot\frac{i-1}{d-1}$.
    \item \textbf{NormEuclid} (Euclidean norm): $f(x) = \|x\|_2$.
    \item \textbf{NormAniso} (anisotropic norm): $f(x) = \sqrt{\sum_{i=1}^d w_i x_i^2}$, $w_i = 1 + 9\cdot\frac{i-1}{d-1}$.
    \item \textbf{Mixed} (hybrid convex): $f(x) = 0.25\sum w_i^{(1)}x_i^2 + 0.7\sqrt{\sum w_i^{(2)}x_i^2} + \max_k\{a_k^\top x + b_k\}$.
    \item \textbf{SoftplusSum}: $f(x) = \sum_{i=1}^d \log(1+e^{x_i})$.
    \item \textbf{LogSumExpQuad}: $f(x) = \log\!\bigl(\sum_{i=1}^d e^{x_i}\bigr) + 0.1\|x\|_2^2$.
    \item \textbf{Huber}: $f(x) = \sum_{i=1}^d h_\delta(x_i)$, $\delta=1$.
    \item \textbf{L1Norm}: $f(x) = \|x\|_1$.
    \item \textbf{ICKANPaperTarget}: $f(x) = \sum_{i=1}^d (|x_i| + |1-x_i|) + 0.25\sum w_i x_i^2$, $w_i\in[0.5,2.0]$.
\end{itemize}

\paragraph{Metrics}
For each configuration we report the relative $\ell_2$ test error (RelErr, mean $\pm$ std over 3 seeds) and the number of trainable parameters (Params). Training time (seconds) is also provided for completeness.

\paragraph{Results}
Tables~\ref{tab:exp2_huber}--\ref{tab:exp2_ickantarget} present results for each function. SOC-ICNN consistently achieves the lowest relative error across most functions and dimensions, particularly when the target exhibits explicit curvature or conic structure. 

\begin{table}[H]
\centering
\caption{Huber function: relative $\ell_2$ error and parameter count.}
\label{tab:exp2_huber}
\footnotesize
\setlength{\tabcolsep}{4pt}
\begin{tabular}{lcccccc}
\toprule
 & \multicolumn{3}{c}{RelErr (mean $\pm$ std)} & \multicolumn{3}{c}{Params} \\
\cmidrule(lr){2-4} \cmidrule(lr){5-7}
Model & $d{=}5$ & $d{=}10$ & $d{=}20$ & $d{=}5$ & $d{=}10$ & $d{=}20$ \\
\midrule
ReLU-ICNN & $0.828\pm0.025$ & $0.830\pm0.002$ & $0.855\pm0.005$ & 822 & 1491 & 2709 \\
Softplus-ICNN & $0.765\pm0.018$ & $0.802\pm0.010$ & $0.843\pm0.007$ & 822 & 1491 & 2709 \\
Quad-ICNN & $0.452\pm0.053$ & $0.285\pm0.015$ & $0.168\pm0.009$ & 848 & 1592 & 3110 \\
Norm-ICNN & $0.523\pm0.009$ & $0.548\pm0.007$ & $0.614\pm0.002$ & 853 & 1602 & 3130 \\
SOC-ICNN & $\mathbf{0.246\pm0.066}$ & $\mathbf{0.155\pm0.021}$ & $\mathbf{0.087\pm0.009}$ & 527 & 1083 & 2451 \\
\bottomrule
\end{tabular}
\end{table}

\begin{table}[H]
\centering
\caption{L1Norm function: relative $\ell_2$ error and parameter count.}
\label{tab:exp2_l1norm}
\footnotesize
\setlength{\tabcolsep}{4pt}
\begin{tabular}{lcccccc}
\toprule
 & \multicolumn{3}{c}{RelErr (mean $\pm$ std)} & \multicolumn{3}{c}{Params} \\
\cmidrule(lr){2-4} \cmidrule(lr){5-7}
Model & $d{=}5$ & $d{=}10$ & $d{=}20$ & $d{=}5$ & $d{=}10$ & $d{=}20$ \\
\midrule
ReLU-ICNN & $0.860\pm0.005$ & $0.885\pm0.010$ & $0.900\pm0.003$ & 822 & 1491 & 2709 \\
Softplus-ICNN & $0.829\pm0.011$ & $0.860\pm0.010$ & $0.892\pm0.003$ & 822 & 1491 & 2709 \\
Quad-ICNN & $0.536\pm0.007$ & $0.374\pm0.004$ & $0.245\pm0.002$ & 848 & 1592 & 3110 \\
Norm-ICNN & $0.644\pm0.020$ & $0.680\pm0.005$ & $0.728\pm0.002$ & 853 & 1602 & 3130 \\
SOC-ICNN & $\mathbf{0.390\pm0.018}$ & $\mathbf{0.266\pm0.015}$ & $\mathbf{0.177\pm0.004}$ & 527 & 1083 & 2451 \\
\bottomrule
\end{tabular}
\end{table}

\begin{table}[H]
\centering
\caption{NormEuclid function: relative $\ell_2$ error and parameter count.}
\label{tab:exp2_normeuclid}
\footnotesize
\setlength{\tabcolsep}{4pt}
\begin{tabular}{lcccccc}
\toprule
 & \multicolumn{3}{c}{RelErr (mean $\pm$ std)} & \multicolumn{3}{c}{Params} \\
\cmidrule(lr){2-4} \cmidrule(lr){5-7}
Model & $d{=}5$ & $d{=}10$ & $d{=}20$ & $d{=}5$ & $d{=}10$ & $d{=}20$ \\
\midrule
ReLU-ICNN & $0.732\pm0.030$ & $0.655\pm0.009$ & $0.601\pm0.015$ & 822 & 1491 & 2709 \\
Softplus-ICNN & $0.659\pm0.013$ & $0.597\pm0.010$ & $0.563\pm0.013$ & 822 & 1491 & 2709 \\
Quad-ICNN & $0.340\pm0.106$ & $0.127\pm0.018$ & $0.073\pm0.001$ & 848 & 1592 & 3110 \\
Norm-ICNN & $0.295\pm0.031$ & $0.135\pm0.006$ & $0.075\pm0.002$ & 853 & 1602 & 3130 \\
SOC-ICNN & $\mathbf{0.159\pm0.013}$ & $\mathbf{0.079\pm0.024}$ & $\mathbf{0.038\pm0.001}$ & 527 & 1083 & 2451 \\
\bottomrule
\end{tabular}
\end{table}

\begin{table}[H]
\centering
\caption{LogSumExpQuad function: relative $\ell_2$ error and parameter count.}
\label{tab:exp2_logsumexpquad}
\footnotesize
\setlength{\tabcolsep}{4pt}
\begin{tabular}{lcccccc}
\toprule
 & \multicolumn{3}{c}{RelErr (mean $\pm$ std)} & \multicolumn{3}{c}{Params} \\
\cmidrule(lr){2-4} \cmidrule(lr){5-7}
Model & $d{=}5$ & $d{=}10$ & $d{=}20$ & $d{=}5$ & $d{=}10$ & $d{=}20$ \\
\midrule
ReLU-ICNN & $0.719\pm0.010$ & $0.701\pm0.013$ & $0.697\pm0.007$ & 822 & 1491 & 2709 \\
Softplus-ICNN & $0.713\pm0.056$ & $0.672\pm0.003$ & $0.675\pm0.009$ & 822 & 1491 & 2709 \\
Quad-ICNN & $0.398\pm0.020$ & $0.176\pm0.029$ & $0.056\pm0.011$ & 848 & 1592 & 3110 \\
Norm-ICNN & $0.379\pm0.044$ & $0.232\pm0.020$ & $0.195\pm0.007$ & 853 & 1602 & 3130 \\
SOC-ICNN & $\mathbf{0.255\pm0.096}$ & $\mathbf{0.058\pm0.021}$ & $\mathbf{0.020\pm0.001}$ & 527 & 1083 & 2451 \\
\bottomrule
\end{tabular}
\end{table}

\begin{table}[H]
\centering
\caption{QuadraticIso function: relative $\ell_2$ error and parameter count.}
\label{tab:exp2_quadraticiso}
\footnotesize
\setlength{\tabcolsep}{4pt}
\begin{tabular}{lcccccc}
\toprule
 & \multicolumn{3}{c}{RelErr (mean $\pm$ std)} & \multicolumn{3}{c}{Params} \\
\cmidrule(lr){2-4} \cmidrule(lr){5-7}
Model & $d{=}5$ & $d{=}10$ & $d{=}20$ & $d{=}5$ & $d{=}10$ & $d{=}20$ \\
\midrule
ReLU-ICNN & $0.869\pm0.017$ & $0.889\pm0.004$ & $0.897\pm0.002$ & 822 & 1491 & 2709 \\
Softplus-ICNN & $0.841\pm0.004$ & $0.868\pm0.004$ & $0.893\pm0.001$ & 822 & 1491 & 2709 \\
Quad-ICNN & $0.566\pm0.021$ & $0.400\pm0.011$ & $0.236\pm0.014$ & 848 & 1592 & 3110 \\
Norm-ICNN & $0.654\pm0.009$ & $0.690\pm0.005$ & $0.732\pm0.004$ & 853 & 1602 & 3130 \\
SOC-ICNN & $\mathbf{0.393\pm0.019}$ & $\mathbf{0.279\pm0.015}$ & $\mathbf{0.167\pm0.002}$ & 527 & 1083 & 2451 \\

\bottomrule
\end{tabular}
\end{table}

\begin{table}[H]
\centering
\caption{QuadraticAniso function: relative $\ell_2$ error and parameter count.}
\label{tab:exp2_quadraticaniso}
\footnotesize
\setlength{\tabcolsep}{4pt}
\begin{tabular}{lcccccc}
\toprule
 & \multicolumn{3}{c}{RelErr (mean $\pm$ std)} & \multicolumn{3}{c}{Params} \\
\cmidrule(lr){2-4} \cmidrule(lr){5-7}
Model & $d{=}5$ & $d{=}10$ & $d{=}20$ & $d{=}5$ & $d{=}10$ & $d{=}20$ \\
\midrule
ReLU-ICNN & $0.914\pm0.006$ & $0.924\pm0.005$ & $0.934\pm0.001$ & 822 & 1491 & 2709 \\
Softplus-ICNN & $0.899\pm0.004$ & $0.913\pm0.003$ & $0.929\pm0.004$ & 822 & 1491 & 2709 \\
Quad-ICNN & $0.720\pm0.003$ & $0.577\pm0.005$ & $0.407\pm0.004$ & 848 & 1592 & 3110 \\
Norm-ICNN & $0.779\pm0.012$ & $0.798\pm0.003$ & $0.825\pm0.002$ & 853 & 1602 & 3130 \\
SOC-ICNN & $\mathbf{0.575\pm0.016}$ & $\mathbf{0.457\pm0.007}$ & $\mathbf{0.300\pm0.007}$ & 527 & 1083 & 2451 \\
\bottomrule
\end{tabular}
\end{table}

\begin{table}[H]
\centering
\caption{NormAniso function: relative $\ell_2$ error and parameter count.}
\label{tab:exp2_normaniso}
\footnotesize
\setlength{\tabcolsep}{4pt}
\begin{tabular}{lcccccc}
\toprule
 & \multicolumn{3}{c}{RelErr (mean $\pm$ std)} & \multicolumn{3}{c}{Params} \\
\cmidrule(lr){2-4} \cmidrule(lr){5-7}
Model & $d{=}5$ & $d{=}10$ & $d{=}20$ & $d{=}5$ & $d{=}10$ & $d{=}20$ \\
\midrule
ReLU-ICNN & $0.879\pm0.012$ & $0.858\pm0.006$ & $0.840\pm0.005$ & 822 & 1491 & 2709 \\
Softplus-ICNN & $0.859\pm0.009$ & $0.831\pm0.004$ & $0.814\pm0.007$ & 822 & 1491 & 2709 \\
Quad-ICNN & $0.616\pm0.031$ & $0.333\pm0.015$ & $0.151\pm0.008$ & 848 & 1592 & 3110 \\
Norm-ICNN & $0.696\pm0.017$ & $0.618\pm0.006$ & $0.548\pm0.005$ & 853 & 1602 & 3130 \\
SOC-ICNN & $\mathbf{0.441\pm0.010}$ & $\mathbf{0.230\pm0.016}$ & $\mathbf{0.076\pm0.006}$ & 527 & 1083 & 2451 \\
\bottomrule
\end{tabular}
\end{table}

\begin{table}[H]
\centering
\caption{Mixed function: relative $\ell_2$ error and parameter count.}
\label{tab:exp2_mixed}
\footnotesize
\setlength{\tabcolsep}{4pt}
\begin{tabular}{lcccccc}
\toprule
 & \multicolumn{3}{c}{RelErr (mean $\pm$ std)} & \multicolumn{3}{c}{Params} \\
\cmidrule(lr){2-4} \cmidrule(lr){5-7}
Model & $d{=}5$ & $d{=}10$ & $d{=}20$ & $d{=}5$ & $d{=}10$ & $d{=}20$ \\
\midrule
ReLU-ICNN & $0.913\pm0.014$ & $0.901\pm0.002$ & $0.903\pm0.001$ & 822 & 1491 & 2709 \\
Softplus-ICNN & $0.891\pm0.014$ & $0.884\pm0.003$ & $0.894\pm0.003$ & 822 & 1491 & 2709 \\
Quad-ICNN & $0.683\pm0.023$ & $0.484\pm0.009$ & $0.247\pm0.005$ & 848 & 1592 & 3110 \\
Norm-ICNN & $0.753\pm0.004$ & $0.737\pm0.005$ & $0.739\pm0.003$ & 853 & 1602 & 3130 \\
SOC-ICNN & $\mathbf{0.530\pm0.026}$ & $\mathbf{0.342\pm0.018}$ & $\mathbf{0.179\pm0.013}$ & 527 & 1083 & 2451 \\
P1-ICKAN & $8.94\pm0.09$ & $10.34\pm0.30$ & $11.90\pm0.12$ & 694 & 1379 & 2749 \\
\bottomrule
\end{tabular}
\end{table}

\begin{table}[H]
\centering
\caption{SoftplusSum function: relative $\ell_2$ error and parameter count.}
\label{tab:exp2_softplussum}
\footnotesize
\setlength{\tabcolsep}{4pt}
\begin{tabular}{lcccccc}
\toprule
 & \multicolumn{3}{c}{RelErr (mean $\pm$ std)} & \multicolumn{3}{c}{Params} \\
\cmidrule(lr){2-4} \cmidrule(lr){5-7}
Model & $d{=}5$ & $d{=}10$ & $d{=}20$ & $d{=}5$ & $d{=}10$ & $d{=}20$ \\
\midrule
ReLU-ICNN & $0.731\pm0.046$ & $0.787\pm0.012$ & $0.837\pm0.009$ & 822 & 1491 & 2709 \\
Softplus-ICNN & $0.706\pm0.035$ & $0.756\pm0.008$ & $0.816\pm0.011$ & 822 & 1491 & 2709 \\
Quad-ICNN & $0.514\pm0.083$ & $0.336\pm0.045$ & $0.204\pm0.014$ & 848 & 1592 & 3110 \\
Norm-ICNN & $0.468\pm0.026$ & $0.502\pm0.025$ & $0.582\pm0.005$ & 853 & 1602 & 3130 \\
SOC-ICNN & $\mathbf{0.357\pm0.081}$ & $\mathbf{0.243\pm0.003}$ & $\mathbf{0.158\pm0.016}$ & 527 & 1083 & 2451 \\
\bottomrule
\end{tabular}
\end{table}

\begin{table}[H]
\centering
\caption{ICKANPaperTarget function: relative $\ell_2$ error and parameter count.}
\label{tab:exp2_ickantarget}
\footnotesize
\setlength{\tabcolsep}{4pt}
\begin{tabular}{lcccccc}
\toprule
 & \multicolumn{3}{c}{RelErr (mean $\pm$ std)} & \multicolumn{3}{c}{Params} \\
\cmidrule(lr){2-4} \cmidrule(lr){5-7}
Model & $d{=}5$ & $d{=}10$ & $d{=}20$ & $d{=}5$ & $d{=}10$ & $d{=}20$ \\
\midrule
ReLU-ICNN & $0.948\pm0.002$ & $0.956\pm0.005$ & $0.963\pm0.001$ & 822 & 1491 & 2709 \\
Softplus-ICNN & $0.935\pm0.004$ & $0.950\pm0.003$ & $0.959\pm0.002$ & 822 & 1491 & 2709 \\
Quad-ICNN & $0.817\pm0.004$ & $0.752\pm0.003$ & $0.650\pm0.003$ & 848 & 1592 & 3110 \\
Norm-ICNN & $0.862\pm0.006$ & $0.881\pm0.000$ & $0.900\pm0.001$ & 853 & 1602 & 3130 \\
SOC-ICNN & $\mathbf{0.734\pm0.006}$ & $\mathbf{0.674\pm0.003}$ & $\mathbf{0.580\pm0.003}$ & 527 & 1083 & 2451 \\
\bottomrule
\end{tabular}
\end{table}

\begin{longtable}{lccccc}
\caption{Approximation results for all functions at $d=50$ (RelErr, mean $\pm$ std).}\label{tab:exp2_d50}\\
\toprule
Function & ReLU-ICNN & Softplus-ICNN & Quad-ICNN & Norm-ICNN & SOC-ICNN \\
\midrule
\endfirsthead
\multicolumn{6}{c}{{\tablename\ \thetable{} -- continued from previous page}} \\
\toprule
Function & ReLU-ICNN & Softplus-ICNN & Quad-ICNN & Norm-ICNN & SOC-ICNN \\
\midrule
\endhead
\bottomrule
\endfoot
\bottomrule
\endlastfoot
\footnotesize
Huber & $0.821\pm0.005$ & $0.815\pm0.005$ & $0.068\pm0.002$ & $0.691\pm0.003$ & $\mathbf{0.038\pm0.001}$ \\
L1Norm & $0.875\pm0.002$ & $0.870\pm0.003$ & $0.114\pm0.004$ & $0.782\pm0.004$ & $\mathbf{0.089\pm0.001}$ \\
NormEuclid & $0.345\pm0.017$ & $0.338\pm0.019$ & $0.054\pm0.001$ & $0.039\pm0.002$ & $\mathbf{0.029\pm0.001}$ \\
LogSumExpQuad & $0.585\pm0.013$ & $0.569\pm0.018$ & $0.021\pm0.000$ & $0.190\pm0.004$ & $\mathbf{0.007\pm0.000}$ \\
QuadraticIso & $0.873\pm0.006$ & $0.863\pm0.007$ & $0.102\pm0.004$ & $0.781\pm0.002$ & $\mathbf{0.077\pm0.004}$ \\
QuadraticAniso & $0.915\pm0.003$ & $0.909\pm0.004$ & $0.179\pm0.003$ & $0.856\pm0.000$ & $\mathbf{0.149\pm0.002}$ \\
NormAniso & $0.695\pm0.001$ & $0.683\pm0.004$ & $0.066\pm0.002$ & $0.431\pm0.007$ & $\mathbf{0.044\pm0.001}$ \\
Mixed & $0.861\pm0.004$ & $0.847\pm0.004$ & $0.093\pm0.002$ & $0.750\pm0.003$ & $\mathbf{0.071\pm0.003}$ \\
SoftplusSum & $0.799\pm0.007$ & $0.792\pm0.002$ & $0.115\pm0.001$ & $0.672\pm0.004$ & $\mathbf{0.097\pm0.002}$ \\
ICKANPaperTarget & $0.951\pm0.001$ & $0.947\pm0.001$ & $0.460\pm0.001$ & $0.920\pm0.001$ & $\mathbf{0.392\pm0.001}$ \\
\end{longtable}

\begin{table}[H]
\centering
\caption{Parameter counts for ICNN variants at $d=50$.}
\label{tab:exp2_d50_params}
\footnotesize
\begin{tabular}{lc}
\toprule
Model & Parameters \\
\midrule
ReLU-ICNN & 9,683 \\
Softplus-ICNN & 9,683 \\
Quad-ICNN & 9,528 \\
Norm-ICNN & 9,578 \\
SOC-ICNN & 9,423 \\
\bottomrule
\end{tabular}
\end{table}

\subsection{Full Results for Experiment 3}
\label{app:exp3_full}

\paragraph{Task construction.}
Each task instance is defined by a context parameter $\theta\in\mathbb{R}^8$, and the goal is to solve
\[
\min_{x\in\mathcal{X}} f_\theta(x).
\]
We consider three feasible sets:
\begin{itemize}
    \item \textbf{Simplex}: $\{x\in\mathbb{R}^d \mid x\ge 0,\ \sum_{i=1}^d x_i = 1\}$.
    \item \textbf{Box}: $\{x\in\mathbb{R}^d \mid 0\le x_i\le 1,\ \forall i\}$.
    \item \textbf{Budget} (capped simplex): $\{x\in\mathbb{R}^d \mid 0\le x_i\le 1,\ \sum_{i=1}^d x_i = B\}$ with $B=0.3d$.
\end{itemize}

The objective function has a common quadratic--linear backbone plus a family-specific structured term:
\[
f_\theta(x)
=
\underbrace{\frac{\alpha}{2}\sum_{i=1}^d w_i (x_i-m_i(\theta))^2 + \sum_{i=1}^d c_i(\theta)x_i}_{\text{backbone}}
+ f_{\mathrm{struct}}(x;\theta).
\]
Here $\alpha=0.35$ for the logistic and log-sum-exp families and $\alpha=1.0$ for the SOCP and Huber families; $w_i\in[0.8,1.6]$ are sampled once and fixed per task; and
\[
m(\theta)=m_0+M\theta,\qquad c(\theta)=c_0+C\theta
\]
are affine maps with random coefficients fixed within each task.

We construct six tasks from four structural families:
\begin{itemize}
    \item \textbf{SOCP family} (\texttt{simplex\_socp}, \texttt{box\_socp}):
    \[
    f_{\mathrm{struct}}(x;\theta)=\sum_{j=1}^{J}\lambda_j(\theta)\,\|A_jx-d_j(\theta)\|_2,
    \]
    with $J=1$ for the single-cone tasks and $J=2$ for the two-cone task \texttt{budget\_twocone\_socp}. The cone weights and offsets depend affinely on $\theta$ through softplus-transformed coefficients.

    \item \textbf{Logistic family} (\texttt{simplex\_logistic}):
    \[
    f_{\mathrm{struct}}(x;\theta)=\sum_{k=1}^{K}\beta_k(\theta)\log\!\bigl(1+e^{a_k^\top x-b_k(\theta)}\bigr),
    \]
    where $K=\max(6,\lfloor d/3\rfloor)$.

    \item \textbf{LogSumExp family} (\texttt{box\_logsumexp}):
    \[
    f_{\mathrm{struct}}(x;\theta)=\sum_{j=1}^{2}\beta_j(\theta)\,\operatorname{logsumexp}(A_jx-b_j(\theta)).
    \]

    \item \textbf{Huber family} (\texttt{budget\_huber}):
    \[
    f_{\mathrm{struct}}(x;\theta)=\sum_{k=1}^{K}\beta_k(\theta)\,h_{\delta}(a_k^\top x-b_k(\theta)),
    \qquad
    h_\delta(t)=
    \begin{cases}
    t^2, & |t|\le \delta,\\
    2\delta |t|-\delta^2, & |t|>\delta,
    \end{cases}
    \]
    with $\delta=0.35$ and $K=\max(8,\lfloor d/2\rfloor)$.
\end{itemize}
All random matrices and vectors are drawn once per task and dimension, then fixed across all instances.

\paragraph{Experimental protocol.}
For each task and dimension $d\in\{10,20,50\}$, we generate 1000 training instances, 1000 validation instances, and 200 test instances. Each training instance contains 64 candidate points for supervised learning of the surrogate objective. We compare five neural convex surrogates (ReLU, Softplus, Norm, Quad, and SOC) against two differentiable optimization baselines (PCF and DCP). At test time, the learned surrogate is optimized by projected gradient descent with 5 random restarts and 200 steps per restart.

\paragraph{Metrics.}
To keep the appendix tables compact, we report the number of trainable parameters (\textbf{Params}), the relative $\ell_2$ test error (\textbf{Test RelErr}), the downstream regret $f(\hat x)-f(x^\star)$ (\textbf{Regret}), the decision error $\|\hat x-x^\star\|_2$, the wall-clock training time in seconds (\textbf{Train}), and the decision-time cost in milliseconds (\textbf{Infer}). Entries are reported as mean $\pm$ standard deviation over random seeds. Neural surrogates use 3 seeds; PCF and DCP use 2--3 seeds depending on the configuration.

\paragraph{Results.}
Tables~\ref{tab:exp3_full_10}--\ref{tab:exp3_full_50} summarize the complete results. The overall strongest baseline is PCF, which achieves the lowest regret in most task--dimension settings. Within the neural surrogate family, however, the results still support the value of structured curvature injection: Quad is the most robust variant overall, while SOC remains competitive on several SOCP- and logistic-type tasks. A second consistent observation is the mismatch between upstream fitting and downstream decision quality: Softplus can attain relatively small test errors, yet often yields much worse regret than the better-performing curvature-injected models. Finally, the learned neural surrogates are substantially faster than DCP at both training and inference time, while remaining competitive with PCF in decision-time cost. The additional flexibility introduced by the Norm module does not translate into uniformly better downstream decisions, indicating that more elaborate curvature parameterizations are not always advantageous on this benchmark.

\begingroup
\footnotesize
\setlength{\tabcolsep}{4pt}
\begin{longtable}{llrrrrrr}
\caption{Experiment 3 full results at dimension $d=10$.}\label{tab:exp3_full_10}\\
\toprule
Task & Model & Params & Test RelErr & Regret & $\|x-\hat x\|_2$ & Train (s) & Infer (s) \\
\midrule
\endfirsthead
\multicolumn{8}{c}{\tablename\ \thetable{} -- continued from previous page}\\
\toprule
Task & Model & Params & Test RelErr & Regret & $\|x-\hat x\|_2$ & Train (s) & Infer (s) \\
\midrule
\endhead
\bottomrule
\endfoot
\bottomrule
\endlastfoot
box\_logsumexp & DCP & 1245 & $0.058\pm0.002$ & $0.089\pm0.005$ & $0.423\pm0.034$ & $461.3\pm0.5$ & $9.60\pm0.01$ \\
 & PCF & 753 & $0.032\pm0.001$ & $0.057\pm0.009$ & $0.288\pm0.029$ & $237.4\pm1.0$ & $1.68\pm0.00$ \\
 & Norm & 21777 & $0.078\pm0.014$ & $0.293\pm0.015$ & $0.567\pm0.025$ & $123.2\pm2.6$ & $1.30\pm0.01$ \\
 & Quad & 21586 & $0.096\pm0.026$ & $0.261\pm0.027$ & $0.517\pm0.019$ & $111.7\pm10.6$ & $1.07\pm0.00$ \\
 & ReLU & 26371 & $0.072\pm0.010$ & $0.358\pm0.023$ & $0.724\pm0.038$ & $114.0\pm0.7$ & $0.81\pm0.00$ \\
 & SOC & 21968 & $0.081\pm0.019$ & $0.299\pm0.017$ & $0.574\pm0.024$ & $84.7\pm9.7$ & $1.55\pm0.01$ \\
 & Softplus & 26371 & $0.115\pm0.001$ & $0.522\pm0.078$ & $0.979\pm0.082$ & $114.1\pm0.6$ & $0.81\pm0.01$ \\
\addlinespace
box\_socp & DCP & 1245 & $0.094\pm0.004$ & $0.225\pm0.020$ & $0.480\pm0.035$ & $463.7\pm3.3$ & $9.61\pm0.05$ \\
 & PCF & 753 & $0.062\pm0.003$ & $0.124\pm0.003$ & $0.343\pm0.010$ & $240.5\pm7.6$ & $1.68\pm0.00$ \\
 & Norm & 21777 & $0.166\pm0.056$ & $0.325\pm0.037$ & $0.569\pm0.026$ & $119.1\pm10.4$ & $1.30\pm0.01$ \\
 & Quad & 21586 & $0.155\pm0.058$ & $0.214\pm0.029$ & $0.391\pm0.025$ & $105.8\pm24.1$ & $1.08\pm0.01$ \\
 & ReLU & 26371 & $0.117\pm0.022$ & $0.328\pm0.016$ & $0.533\pm0.032$ & $115.0\pm0.1$ & $0.82\pm0.01$ \\
 & SOC & 21968 & $0.202\pm0.011$ & $0.218\pm0.015$ & $0.395\pm0.019$ & $113.0\pm20.8$ & $1.56\pm0.02$ \\
 & Softplus & 26371 & $0.171\pm0.010$ & $0.718\pm0.145$ & $0.827\pm0.102$ & $114.5\pm0.1$ & $0.82\pm0.01$ \\
\addlinespace
budget\_huber & DCP & 1245 & $0.072\pm0.007$ & $0.368\pm0.046$ & $0.807\pm0.007$ & $462.6\pm0.3$ & $12.19\pm0.02$ \\
 & PCF & 753 & $0.041\pm0.002$ & $0.139\pm0.033$ & $0.516\pm0.048$ & $235.2\pm0.1$ & $4.49\pm0.04$ \\
 & Norm & 21777 & $0.106\pm0.043$ & $0.211\pm0.022$ & $0.632\pm0.036$ & $127.1\pm0.6$ & $3.22\pm0.02$ \\
 & Quad & 21586 & $0.072\pm0.011$ & $0.140\pm0.022$ & $0.529\pm0.021$ & $116.1\pm9.2$ & $3.02\pm0.02$ \\
 & ReLU & 26371 & $0.063\pm0.004$ & $0.211\pm0.003$ & $0.631\pm0.018$ & $114.0\pm0.7$ & $2.72\pm0.03$ \\
 & SOC & 21968 & $0.089\pm0.014$ & $0.161\pm0.004$ & $0.571\pm0.006$ & $114.8\pm14.9$ & $3.33\pm0.01$ \\
 & Softplus & 26371 & $0.057\pm0.001$ & $0.501\pm0.006$ & $0.907\pm0.028$ & $114.2\pm0.5$ & $2.72\pm0.04$ \\
\addlinespace
budget\_twocone\_socp & DCP & 1245 & $0.064\pm0.007$ & $0.516\pm0.049$ & $0.674\pm0.028$ & $464.3\pm3.2$ & $12.39\pm0.24$ \\
 & PCF & 753 & $0.038\pm0.002$ & $0.217\pm0.014$ & $0.422\pm0.017$ & $234.9\pm0.3$ & $4.45\pm0.01$ \\
 & Norm & 21777 & $0.033\pm0.001$ & $0.240\pm0.037$ & $0.477\pm0.039$ & $126.7\pm1.3$ & $3.21\pm0.02$ \\
 & Quad & 21586 & $0.046\pm0.004$ & $0.228\pm0.012$ & $0.468\pm0.016$ & $110.0\pm9.6$ & $3.01\pm0.02$ \\
 & ReLU & 26371 & $0.050\pm0.005$ & $0.249\pm0.018$ & $0.483\pm0.021$ & $113.4\pm0.6$ & $2.74\pm0.03$ \\
 & SOC & 21968 & $0.035\pm0.006$ & $0.228\pm0.025$ & $0.461\pm0.025$ & $134.3\pm0.7$ & $3.33\pm0.01$ \\
 & Softplus & 26371 & $0.049\pm0.001$ & $0.676\pm0.035$ & $0.749\pm0.020$ & $109.9\pm4.6$ & $2.69\pm0.00$ \\
\addlinespace
simplex\_logistic & DCP & 1245 & $0.012\pm0.002$ & $0.334\pm0.032$ & $0.600\pm0.019$ & $462.5\pm0.2$ & $11.98\pm0.02$ \\
 & PCF & 753 & $0.004\pm0.001$ & $0.206\pm0.025$ & $0.442\pm0.029$ & $235.1\pm0.4$ & $4.26\pm0.01$ \\
 & Norm & 21777 & $0.078\pm0.025$ & $0.202\pm0.011$ & $0.444\pm0.013$ & $126.2\pm0.9$ & $3.08\pm0.03$ \\
 & Quad & 21586 & $0.069\pm0.013$ & $0.139\pm0.018$ & $0.410\pm0.016$ & $99.5\pm5.9$ & $2.89\pm0.02$ \\
 & ReLU & 26371 & $0.028\pm0.009$ & $0.171\pm0.033$ & $0.428\pm0.032$ & $114.1\pm0.6$ & $2.63\pm0.05$ \\
 & SOC & 21968 & $0.170\pm0.037$ & $0.184\pm0.013$ & $0.433\pm0.011$ & $118.2\pm10.7$ & $3.25\pm0.01$ \\
 & Softplus & 26371 & $0.013\pm0.000$ & $0.388\pm0.008$ & $0.610\pm0.009$ & $114.3\pm1.1$ & $2.61\pm0.02$ \\
\addlinespace
simplex\_socp & DCP & 1245 & $0.101\pm0.009$ & $0.170\pm0.019$ & $0.373\pm0.011$ & $464.5\pm3.7$ & $11.99\pm0.01$ \\
 & PCF & 753 & $0.065\pm0.007$ & $0.093\pm0.016$ & $0.276\pm0.028$ & $236.3\pm1.8$ & $4.28\pm0.03$ \\
 & Norm & 21777 & $0.088\pm0.031$ & $0.102\pm0.013$ & $0.276\pm0.013$ & $107.3\pm25.8$ & $3.15\pm0.03$ \\
 & Quad & 21586 & $0.084\pm0.032$ & $0.091\pm0.013$ & $0.268\pm0.020$ & $106.4\pm16.7$ & $2.90\pm0.05$ \\
 & ReLU & 26371 & $0.110\pm0.026$ & $0.086\pm0.010$ & $0.263\pm0.013$ & $114.7\pm0.7$ & $2.66\pm0.03$ \\
 & SOC & 21968 & $0.121\pm0.066$ & $0.091\pm0.010$ & $0.270\pm0.016$ & $132.8\pm3.8$ & $3.26\pm0.01$ \\
 & Softplus & 26371 & $0.085\pm0.005$ & $0.234\pm0.024$ & $0.388\pm0.008$ & $114.8\pm0.8$ & $2.62\pm0.04$ \\
\addlinespace
\end{longtable}
\endgroup

\begingroup
\footnotesize
\setlength{\tabcolsep}{4pt}
\begin{longtable}{llrrrrrr}
\caption{Experiment 3 full results at dimension $d=20$.}\label{tab:exp3_full_20}\\
\toprule
Task & Model & Params & Test RelErr & Regret & $\|x-\hat x\|_2$ & Train (s) & Infer (s) \\
\midrule
\endfirsthead
\multicolumn{8}{c}{\tablename\ \thetable{} -- continued from previous page}\\
\toprule
Task & Model & Params & Test RelErr & Regret & $\|x-\hat x\|_2$ & Train (s) & Infer (s) \\
\midrule
\endhead
\bottomrule
\endfoot
\bottomrule
\endlastfoot
box\_logsumexp & DCP & 1815 & $0.058\pm0.000$ & $0.187\pm0.001$ & $0.707\pm0.019$ & $460.7\pm0.3$ & $9.65\pm0.02$ \\
 & PCF & 1263 & $0.031\pm0.002$ & $0.094\pm0.002$ & $0.435\pm0.018$ & $234.2\pm0.5$ & $1.68\pm0.01$ \\
 & Norm & 25127 & $0.171\pm0.039$ & $0.788\pm0.030$ & $1.123\pm0.046$ & $92.4\pm8.7$ & $1.27\pm0.00$ \\
 & Quad & 24546 & $0.122\pm0.020$ & $0.614\pm0.020$ & $0.914\pm0.018$ & $76.7\pm0.9$ & $1.07\pm0.00$ \\
 & ReLU & 29261 & $0.110\pm0.015$ & $0.804\pm0.042$ & $1.204\pm0.042$ & $107.1\pm11.1$ & $0.81\pm0.00$ \\
 & SOC & 25708 & $0.121\pm0.030$ & $0.671\pm0.010$ & $0.967\pm0.023$ & $71.8\pm6.7$ & $1.52\pm0.00$ \\
 & Softplus & 29261 & $0.111\pm0.005$ & $1.033\pm0.034$ & $1.458\pm0.057$ & $114.7\pm0.6$ & $0.81\pm0.00$ \\
\addlinespace
box\_socp & DCP & 1815 & $0.095\pm0.009$ & $0.602\pm0.099$ & $0.840\pm0.060$ & $460.2\pm0.2$ & $9.62\pm0.03$ \\
 & PCF & 1263 & $0.060\pm0.001$ & $0.209\pm0.022$ & $0.489\pm0.021$ & $234.2\pm0.6$ & $1.69\pm0.01$ \\
 & Norm & 25127 & $0.099\pm0.020$ & $0.414\pm0.008$ & $0.701\pm0.010$ & $102.8\pm7.2$ & $1.28\pm0.01$ \\
 & Quad & 24546 & $0.110\pm0.024$ & $0.306\pm0.014$ & $0.571\pm0.021$ & $77.7\pm5.3$ & $1.08\pm0.01$ \\
 & ReLU & 29261 & $0.185\pm0.033$ & $0.738\pm0.132$ & $0.908\pm0.088$ & $99.3\pm23.2$ & $0.82\pm0.01$ \\
 & SOC & 25708 & $0.131\pm0.042$ & $0.385\pm0.031$ & $0.659\pm0.019$ & $74.0\pm5.1$ & $1.52\pm0.01$ \\
 & Softplus & 29261 & $0.144\pm0.007$ & $1.116\pm0.038$ & $1.147\pm0.004$ & $115.8\pm0.6$ & $0.82\pm0.01$ \\
\addlinespace
budget\_huber & DCP & 1815 & $0.084\pm0.008$ & $0.583\pm0.015$ & $1.036\pm0.043$ & $460.0\pm0.1$ & $12.19\pm0.03$ \\
 & PCF & 1263 & $0.057\pm0.004$ & $0.310\pm0.005$ & $0.793\pm0.020$ & $234.2\pm0.1$ & $4.40\pm0.00$ \\
 & Norm & 25127 & $0.669\pm0.501$ & $0.496\pm0.044$ & $1.016\pm0.025$ & $119.9\pm7.3$ & $3.19\pm0.01$ \\
 & Quad & 24546 & $0.512\pm0.174$ & $0.365\pm0.060$ & $0.901\pm0.030$ & $80.8\pm11.2$ & $3.01\pm0.01$ \\
 & ReLU & 29261 & $0.468\pm0.266$ & $0.467\pm0.038$ & $1.024\pm0.021$ & $113.3\pm2.4$ & $2.71\pm0.03$ \\
 & SOC & 25708 & $0.371\pm0.040$ & $0.391\pm0.051$ & $0.933\pm0.006$ & $108.2\pm20.9$ & $3.33\pm0.01$ \\
 & Softplus & 29261 & $0.098\pm0.004$ & $1.031\pm0.111$ & $1.303\pm0.058$ & $115.2\pm0.9$ & $2.71\pm0.03$ \\
\addlinespace
budget\_twocone\_socp & DCP & 1815 & $0.061\pm0.001$ & $0.695\pm0.120$ & $0.870\pm0.083$ & $462.1\pm0.5$ & $12.18\pm0.03$ \\
 & PCF & 1263 & $0.037\pm0.001$ & $0.389\pm0.040$ & $0.656\pm0.037$ & $237.0\pm3.6$ & $4.44\pm0.01$ \\
 & Norm & 25127 & $0.036\pm0.003$ & $0.536\pm0.043$ & $0.780\pm0.038$ & $125.0\pm2.4$ & $3.22\pm0.01$ \\
 & Quad & 24546 & $0.097\pm0.037$ & $0.403\pm0.015$ & $0.692\pm0.012$ & $91.3\pm21.8$ & $3.01\pm0.02$ \\
 & ReLU & 29261 & $0.081\pm0.016$ & $0.477\pm0.047$ & $0.748\pm0.045$ & $110.3\pm8.7$ & $2.71\pm0.01$ \\
 & SOC & 25708 & $0.061\pm0.021$ & $0.477\pm0.025$ & $0.736\pm0.027$ & $121.9\pm18.8$ & $3.33\pm0.00$ \\
 & Softplus & 29261 & $0.084\pm0.033$ & $1.199\pm0.300$ & $1.077\pm0.126$ & $85.6\pm41.7$ & $2.70\pm0.02$ \\
\addlinespace
simplex\_logistic & DCP & 1815 & $0.010\pm0.001$ & $0.467\pm0.030$ & $0.747\pm0.015$ & $460.2\pm0.2$ & $11.97\pm0.02$ \\
 & PCF & 1263 & $0.003\pm0.000$ & $0.284\pm0.029$ & $0.606\pm0.021$ & $234.9\pm0.2$ & $4.24\pm0.02$ \\
 & Norm & 25127 & $0.215\pm0.009$ & $0.349\pm0.023$ & $0.666\pm0.019$ & $93.0\pm48.6$ & $3.07\pm0.02$ \\
 & Quad & 24546 & $0.296\pm0.084$ & $0.300\pm0.042$ & $0.687\pm0.012$ & $100.5\pm16.3$ & $2.85\pm0.02$ \\
 & ReLU & 29261 & $0.139\pm0.036$ & $0.305\pm0.028$ & $0.684\pm0.015$ & $102.5\pm8.7$ & $2.66\pm0.06$ \\
 & SOC & 25708 & $0.172\pm0.050$ & $0.320\pm0.021$ & $0.671\pm0.018$ & $116.4\pm6.2$ & $3.23\pm0.01$ \\
 & Softplus & 29261 & $0.013\pm0.003$ & $0.493\pm0.031$ & $0.792\pm0.016$ & $89.3\pm6.5$ & $2.61\pm0.01$ \\
\addlinespace
simplex\_socp & DCP & 1815 & $0.086\pm0.001$ & $0.267\pm0.030$ & $0.442\pm0.021$ & $465.3\pm7.4$ & $12.01\pm0.05$ \\
 & PCF & 1263 & $0.063\pm0.006$ & $0.165\pm0.006$ & $0.372\pm0.013$ & $236.5\pm3.6$ & $4.28\pm0.05$ \\
 & Norm & 25127 & $0.329\pm0.183$ & $0.177\pm0.020$ & $0.355\pm0.021$ & $112.2\pm14.9$ & $3.10\pm0.04$ \\
 & Quad & 24546 & $0.589\pm0.133$ & $0.167\pm0.010$ & $0.349\pm0.016$ & $105.9\pm8.3$ & $2.89\pm0.04$ \\
 & ReLU & 29261 & $0.465\pm0.279$ & $0.197\pm0.016$ & $0.380\pm0.030$ & $84.4\pm17.0$ & $2.67\pm0.05$ \\
 & SOC & 25708 & $0.728\pm0.100$ & $0.181\pm0.014$ & $0.352\pm0.020$ & $118.9\pm25.5$ & $3.25\pm0.00$ \\
 & Softplus & 29261 & $0.134\pm0.017$ & $0.354\pm0.035$ & $0.488\pm0.027$ & $49.8\pm3.6$ & $2.64\pm0.05$ \\
\addlinespace
\end{longtable}
\endgroup

\begingroup
\footnotesize
\setlength{\tabcolsep}{4pt}
\begin{longtable}{llrrrrrr}
\caption{Experiment 3 full results at dimension $d=50$.}\label{tab:exp3_full_50}\\
\toprule
Task & Model & Params & Test RelErr & Regret & $\|x-\hat x\|_2$ & Train (s) & Infer (s) \\
\midrule
\endfirsthead
\multicolumn{8}{c}{\tablename\ \thetable{} -- continued from previous page}\\
\toprule
Task & Model & Params & Test RelErr & Regret & $\|x-\hat x\|_2$ & Train (s) & Infer (s) \\
\midrule
\endhead
\bottomrule
\endfoot
\bottomrule
\endlastfoot
box\_logsumexp & DCP & 3525 & $0.047\pm0.002$ & $0.679\pm0.007$ & $1.488\pm0.013$ & $460.5\pm0.1$ & $9.66\pm0.00$ \\
 & PCF & 2793 & $0.029\pm0.001$ & $0.549\pm0.002$ & $1.030\pm0.005$ & $235.8\pm0.2$ & $1.67\pm0.00$ \\
 & Norm & 101961 & $0.042\pm0.000$ & $1.583\pm0.057$ & $1.633\pm0.031$ & $80.0\pm1.7$ & $1.28\pm0.00$ \\
 & Quad & 99010 & $0.042\pm0.001$ & $1.688\pm0.067$ & $1.788\pm0.076$ & $61.5\pm6.4$ & $1.07\pm0.01$ \\
 & ReLU & 117275 & $0.058\pm0.000$ & $1.592\pm0.018$ & $1.724\pm0.028$ & $99.6\pm14.1$ & $0.81\pm0.00$ \\
 & SOC & 104912 & $0.040\pm0.000$ & $1.597\pm0.034$ & $1.601\pm0.043$ & $61.4\pm5.9$ & $1.53\pm0.00$ \\
 & Softplus & 117275 & $0.083\pm0.003$ & $2.638\pm0.067$ & $2.575\pm0.048$ & $117.5\pm0.6$ & $0.81\pm0.00$ \\
\addlinespace
box\_socp & DCP & 3525 & $0.071\pm0.001$ & $2.881\pm0.189$ & $1.813\pm0.055$ & $460.5\pm0.9$ & $9.62\pm0.00$ \\
 & PCF & 2793 & $0.041\pm0.001$ & $0.628\pm0.010$ & $0.796\pm0.017$ & $237.1\pm1.2$ & $1.68\pm0.01$ \\
 & Norm & 101961 & $0.034\pm0.001$ & $1.676\pm0.095$ & $1.243\pm0.051$ & $111.2\pm7.5$ & $1.28\pm0.00$ \\
 & Quad & 99010 & $0.039\pm0.001$ & $1.346\pm0.087$ & $1.145\pm0.044$ & $63.0\pm5.1$ & $1.08\pm0.00$ \\
 & ReLU & 117275 & $0.068\pm0.000$ & $2.032\pm0.283$ & $1.402\pm0.083$ & $94.7\pm10.0$ & $0.81\pm0.00$ \\
 & SOC & 104912 & $0.036\pm0.001$ & $1.376\pm0.019$ & $1.085\pm0.012$ & $55.4\pm10.2$ & $1.53\pm0.02$ \\
 & Softplus & 117275 & $0.078\pm0.002$ & $3.917\pm0.293$ & $2.008\pm0.067$ & $117.0\pm0.5$ & $0.82\pm0.00$ \\
\addlinespace
budget\_huber & DCP & 3525 & $0.056\pm0.002$ & $1.674\pm0.151$ & $1.880\pm0.040$ & $460.6\pm0.1$ & $12.19\pm0.04$ \\
 & PCF & 2793 & $0.038\pm0.004$ & $1.220\pm0.002$ & $1.561\pm0.033$ & $235.8\pm0.1$ & $4.47\pm0.01$ \\
 & Norm & 101961 & $0.035\pm0.000$ & $1.650\pm0.049$ & $1.833\pm0.029$ & $94.1\pm6.5$ & $3.23\pm0.01$ \\
 & Quad & 99010 & $0.035\pm0.000$ & $1.270\pm0.038$ & $1.682\pm0.027$ & $59.1\pm2.9$ & $3.05\pm0.01$ \\
 & ReLU & 117275 & $0.057\pm0.001$ & $1.665\pm0.220$ & $1.872\pm0.124$ & $117.3\pm0.6$ & $2.73\pm0.03$ \\
 & SOC & 104912 & $0.035\pm0.001$ & $1.617\pm0.010$ & $1.839\pm0.008$ & $50.8\pm8.5$ & $3.34\pm0.01$ \\
 & Softplus & 117275 & $0.060\pm0.000$ & $3.864\pm0.049$ & $2.538\pm0.012$ & $117.4\pm0.6$ & $2.76\pm0.04$ \\
\addlinespace
budget\_twocone\_socp & DCP & 3525 & $0.062\pm0.002$ & $1.815\pm0.053$ & $1.523\pm0.028$ & $462.3\pm1.2$ & $12.25\pm0.02$ \\
 & PCF & 2793 & $0.037\pm0.001$ & $0.809\pm0.094$ & $1.005\pm0.042$ & $236.0\pm0.3$ & $4.45\pm0.00$ \\
 & Norm & 101961 & $0.026\pm0.000$ & $1.254\pm0.040$ & $1.247\pm0.015$ & $80.2\pm4.8$ & $3.21\pm0.01$ \\
 & Quad & 99010 & $0.029\pm0.000$ & $1.145\pm0.072$ & $1.212\pm0.016$ & $56.0\pm6.3$ & $3.03\pm0.01$ \\
 & ReLU & 117275 & $0.061\pm0.005$ & $1.333\pm0.122$ & $1.305\pm0.062$ & $99.6\pm24.3$ & $2.73\pm0.03$ \\
 & SOC & 104912 & $0.026\pm0.001$ & $1.309\pm0.013$ & $1.282\pm0.018$ & $64.3\pm11.3$ & $3.33\pm0.01$ \\
 & Softplus & 117275 & $0.060\pm0.000$ & $3.487\pm0.179$ & $1.981\pm0.036$ & $116.2\pm2.1$ & $2.76\pm0.05$ \\
\addlinespace
simplex\_logistic & DCP & 3525 & $0.007\pm0.001$ & $0.575\pm0.003$ & $0.608\pm0.027$ & $460.4\pm0.7$ & $12.03\pm0.01$ \\
 & PCF & 2793 & $0.001\pm0.000$ & $0.394\pm0.050$ & $0.516\pm0.017$ & $236.3\pm0.2$ & $4.30\pm0.02$ \\
 & Norm & 101961 & $0.097\pm0.032$ & $0.450\pm0.020$ & $0.558\pm0.010$ & $76.1\pm42.4$ & $3.07\pm0.01$ \\
 & Quad & 99010 & $0.095\pm0.057$ & $0.363\pm0.033$ & $0.532\pm0.016$ & $55.0\pm24.6$ & $2.87\pm0.02$ \\
 & ReLU & 117275 & $0.046\pm0.018$ & $0.411\pm0.035$ & $0.560\pm0.021$ & $91.7\pm17.4$ & $2.64\pm0.00$ \\
 & SOC & 104912 & $0.082\pm0.040$ & $0.367\pm0.018$ & $0.524\pm0.005$ & $89.6\pm9.7$ & $3.24\pm0.00$ \\
 & Softplus & 117275 & $0.008\pm0.001$ & $0.604\pm0.037$ & $0.597\pm0.011$ & $98.0\pm23.8$ & $2.64\pm0.04$ \\
\addlinespace
simplex\_socp & DCP & 3525 & $0.057\pm0.003$ & $0.425\pm0.008$ & $0.490\pm0.008$ & $467.4\pm10.1$ & $12.22\pm0.29$ \\
 & PCF & 2793 & $0.028\pm0.002$ & $0.286\pm0.014$ & $0.423\pm0.014$ & $238.2\pm2.3$ & $4.31\pm0.05$ \\
 & Norm & 101961 & $0.656\pm0.174$ & $0.356\pm0.004$ & $0.430\pm0.015$ & $112.6\pm14.8$ & $3.14\pm0.01$ \\
 & Quad & 99010 & $1.174\pm0.334$ & $0.347\pm0.006$ & $0.434\pm0.008$ & $116.4\pm13.2$ & $2.94\pm0.04$ \\
 & ReLU & 117275 & $0.578\pm0.064$ & $0.365\pm0.011$ & $0.449\pm0.005$ & $109.2\pm6.2$ & $2.66\pm0.03$ \\
 & SOC & 104912 & $0.859\pm0.211$ & $0.354\pm0.004$ & $0.428\pm0.001$ & $126.1\pm22.3$ & $3.27\pm0.02$ \\
 & Softplus & 117275 & $0.105\pm0.017$ & $0.449\pm0.007$ & $0.481\pm0.024$ & $76.8\pm20.5$ & $2.64\pm0.04$ \\
\addlinespace
\end{longtable}
\endgroup

\section{Extension to Convolutional SOC-ICNNs}
\label{app:cnn_extension}

In the main text, the proposed SOC-ICNN is presented in fully connected form for clarity.
We now show that the same construction extends directly to convolutional architectures.
This extension does \emph{not} change the underlying optimization class: it only replaces dense affine maps by convolutional linear operators with parameter sharing and sparse structure.
Consequently, the convexity results, the LP value-function interpretation of the ReLU backbone in Proposition~\ref{prop:relu_lp_method}, and the SOCP value-function interpretation in Theorem~\ref{thm:soc_value_function} all remain valid.

\subsection{Convolution as a Structured Linear Operator}
\label{app:cnn_linear_operator}

Let the input be a tensor
\[
X \in \mathbb{R}^{C_0\times H_0\times W_0}.
\]
For any tensor $Y$, let $\mathrm{vec}(Y)$ denote its vectorization into a column vector.
Fix a discrete convolution operator $\mathcal{K}$ together with its stride, padding, and dilation.
Then there exists a matrix $T(\mathcal{K})$ such that
\begin{equation}
\mathrm{vec}\!\left(\mathcal{K}(Y)\right)
=
T(\mathcal{K})\,\mathrm{vec}(Y).
\label{eq:cnn_linearization_appendix}
\end{equation}
The matrix $T(\mathcal{K})$ is the usual sparse Toeplitz / block-Toeplitz matrix induced by the convolution.
Thus, every convolutional layer is still a linear map with respect to the input tensor, and every affine convolutional layer is of the form
\[
Y \mapsto \mathcal{K}(Y)+B
\]
for some bias tensor $B$.

Therefore, after vectorization, a convolutional architecture is simply a fully connected architecture with highly structured sparse matrices.
This observation is the only fact needed to transfer the value-function interpretation from the fully connected setting to the convolutional setting.

\subsection{Convolutional ReLU-ICNN Backbone}
\label{app:cnn_relu_backbone}

We define the convolutional ReLU-ICNN backbone recursively as follows.
Let $Z_0=0$, and for $\ell=1,\dots,L$ define
\begin{equation}
Z_\ell
=
\sigma\!\Bigl(
\mathcal{W}_\ell(X)
+
\mathcal{U}_\ell(Z_{\ell-1})
+
B_\ell
\Bigr),
\label{eq:cnn_relu_recursion_appendix}
\end{equation}
where:
\begin{itemize}
    \item $\mathcal{W}_\ell$ is an unconstrained input-to-hidden convolutional affine map;
    \item $\mathcal{U}_\ell$ is a hidden-to-hidden convolutional linear map whose kernel coefficients are constrained to be elementwise nonnegative;
    \item $B_\ell$ is a bias tensor;
    \item $\sigma(t)=\max\{t,0\}$ is applied elementwise.
\end{itemize}

The scalar output is defined by
\begin{equation}
f_{\mathrm{ReLU\text{-}CNN}}(X)
=
\langle C, Z_L\rangle + \langle V, X\rangle + b_0,
\label{eq:cnn_relu_output_appendix}
\end{equation}
where $C\ge 0$ elementwise, $V$ is unconstrained, and $\langle \cdot,\cdot\rangle$ denotes the Euclidean inner product between same-sized tensors.

This is the exact convolutional analogue of the fully connected ReLU-ICNN in Section~\ref{sec:method}, with dense affine maps replaced by convolutional affine maps.

\subsection{Convolutional Backbone Remains Convex}
\label{app:cnn_convexity}

We now make explicit why replacing dense layers by convolutional layers does not destroy input convexity.

\begin{proposition}[Convexity of the convolutional ReLU-ICNN backbone]
\label{prop:cnn_convexity}
The function $X \mapsto f_{\mathrm{ReLU\text{-}CNN}}(X)$ defined by \eqref{eq:cnn_relu_recursion_appendix}--\eqref{eq:cnn_relu_output_appendix} is convex with respect to the input tensor $X$.
\end{proposition}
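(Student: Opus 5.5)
We plan to reduce the convolutional statement to the fully connected case by vectorization, and then verify convexity by a layerwise induction that mirrors the argument behind Proposition~\ref{prop:relu_lp_method}.

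\emph{Vectorization.} Set $\bm x=\mathrm{vec}(X)$ and $\bm z_\ell=\mathrm{vec}(Z_\ell)$. By \eqref{eq:cnn_linearization_appendix}, there are matrices $W_\ell=T(\mathcal W_\ell)$ and $U_\ell=T(\mathcal U_\ell)$, together with the vector $\bm b_\ell=\mathrm{vec}(B_\ell)$, such that the recursion \eqref{eq:cnn_relu_recursion_appendix} becomes
\[
\bm z_\ell=\sigma(W_\ell\bm x+U_\ell\bm z_{\ell-1}+\bm b_\ell).
\]
The output \eqref{eq:cnn_relu_output_appendix} becomes $\bm c^\top\bm z_L+\bm v^\top\bm x+b_0$, with $\bm c=\mathrm{vec}(C)\ge0$ and $\bm v=\mathrm{vec}(V)$.

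The one structural point to check is that $U_\ell\ge0$ entrywise. Every entry of the Toeplitz matrix $T(\mathcal U_\ell)$ is either zero (from padding or sparsity) or equal to some kernel coefficient of $\mathcal U_\ell$. Stride and dilation only change which entries are populated, not their values. Since the kernel coefficients are nonnegative by assumption, so are all entries of $U_\ell$. This is the step I expect to need the most care, although it is elementary.

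\emph{Convexity.} With this, the vectorized network is exactly a fully connected ReLU-ICNN of the form in Section~\ref{subsec:lp_backbone}, and convexity follows directly.

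One route is to invoke Proposition~\ref{prop:relu_cpwl_theory}: the function is a finite max-affine function of $\bm x$, hence convex. Composing with the linear isomorphism $\mathrm{vec}$ then gives convexity in $X$.

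Alternatively, we can argue by induction on $\ell$ that every coordinate of $\bm z_\ell$ is a convex function of $\bm x$.
\begin{itemize}
\item The base case $\bm z_0=\bm0$ is trivial.
\item For the inductive step, $U_\ell\bm z_{\ell-1}$ is a nonnegative combination of convex functions, hence convex. Adding the affine term $W_\ell\bm x+\bm b_\ell$ keeps each coordinate convex. Finally, $\sigma$ is convex and nondecreasing, so $\sigma$ applied to a convex function is convex.
\end{itemize}
The output is then a nonnegative combination $\bm c^\top\bm z_L$ of convex functions plus an affine term, hence convex.
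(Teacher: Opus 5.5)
Your proof is correct and is essentially the paper's argument. The paper runs the same layerwise induction (nonnegative combination of convex entries, affine input term, convex nondecreasing ReLU, nonnegative readout $C\ge 0$), only in tensor-index form. There, nonnegativity of the kernel coefficients $u^{(\ell)}_{pqr}$ in $[\mathcal{U}_\ell(Z_{\ell-1})]_p=\sum_q\sum_r u^{(\ell)}_{pqr}[Z_{\ell-1}]_{q,r}$ plays exactly the role of your entrywise check $T(\mathcal{U}_\ell)\ge 0$. Your alternative route through Proposition~\ref{prop:relu_cpwl_theory} is also valid, but it brings in LP duality where the elementary induction already suffices.
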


\begin{proof}
The proof follows the same inductive logic as for standard ICNNs, but we spell it out because the convolutional case is easy to misunderstand.

\paragraph{Step 1: Convolution is affine in the input.}
For each layer $\ell$, both $\mathcal{W}_\ell(X)$ and $\mathcal{U}_\ell(Z_{\ell-1})$ are affine / linear with respect to their arguments.
Hence, if each entry of $Z_{\ell-1}(X)$ is a convex function of $X$, then each entry of $\mathcal{U}_\ell(Z_{\ell-1}(X))$ is a nonnegative linear combination of translated entries of $Z_{\ell-1}(X)$, and is therefore convex in $X$.

More explicitly, fix an output channel and spatial position index $p$.
The corresponding entry of $\mathcal{U}_\ell(Z_{\ell-1}(X))$ can be written as
\[
\bigl[\mathcal{U}_\ell(Z_{\ell-1}(X))\bigr]_p
=
\sum_{q} \sum_{r} u^{(\ell)}_{pqr}\,[Z_{\ell-1}(X)]_{q,r},
\]
where the coefficients $u^{(\ell)}_{pqr}$ are the convolution kernel coefficients.
Because $\mathcal{U}_\ell\ge 0$ elementwise, we have $u^{(\ell)}_{pqr}\ge 0$ for all indices.
Thus, if each $[Z_{\ell-1}(X)]_{q,r}$ is convex in $X$, then the above sum is convex in $X$.

\paragraph{Step 2: The pre-activation is convex.}
Define the pre-activation tensor
\[
S_\ell(X)
=
\mathcal{W}_\ell(X)+\mathcal{U}_\ell(Z_{\ell-1}(X))+B_\ell.
\]
Since $\mathcal{W}_\ell(X)$ is affine in $X$, $\mathcal{U}_\ell(Z_{\ell-1}(X))$ is convex in $X$ by Step 1, and adding a bias preserves convexity, every entry of $S_\ell(X)$ is convex in $X$.

\paragraph{Step 3: ReLU preserves convexity because it is convex and monotone nondecreasing.}
For each scalar entry $s(X)$ of $S_\ell(X)$, the corresponding activation is
\[
\sigma(s(X))=\max\{s(X),0\}.
\]
Since $s(X)$ is convex and $\sigma$ is a convex, nondecreasing scalar function, the composition $\sigma\circ s$ is convex.
Equivalently, one may directly note that $\max\{s(X),0\}$ is the maximum of two convex functions, hence convex.

Therefore, every entry of $Z_\ell(X)$ is convex in $X$.

\paragraph{Step 4: Induction over layers.}
At layer $\ell=1$, we have
\[
Z_1(X)=\sigma(\mathcal{W}_1(X)+B_1),
\]
whose entries are convex because $\mathcal{W}_1(X)+B_1$ is affine.
Applying Steps 1--3 inductively proves that every entry of every hidden tensor $Z_\ell(X)$ is convex in $X$.

\paragraph{Step 5: Final readout preserves convexity.}
The output
\[
f_{\mathrm{ReLU\text{-}CNN}}(X)
=
\langle C,Z_L(X)\rangle+\langle V,X\rangle+b_0
\]
is the sum of:
(i) a nonnegative linear combination of convex entries of $Z_L(X)$, because $C\ge 0$ elementwise;
(ii) an affine term $\langle V,X\rangle+b_0$.
Hence $f_{\mathrm{ReLU\text{-}CNN}}(X)$ is convex in $X$.
\end{proof}

The key point is therefore unchanged from the fully connected setting:
\emph{the hidden-to-hidden operator must preserve convexity, and this is ensured by constraining its coefficients to be nonnegative.}
A convolutional kernel is just a structured collection of linear coefficients; as long as those coefficients are nonnegative, the ICNN convexity argument survives verbatim.

\subsection{LP Value-Function Interpretation of the Convolutional Backbone}
\label{app:cnn_lp_backbone}

The previous proposition establishes convexity directly.
We now show that the convolutional ReLU backbone also admits the same LP value-function interpretation as Proposition~\ref{prop:relu_lp_method}.

\begin{proposition}[Convolutional LP value-function representation]
\label{prop:cnn_lp}
For every input tensor $X$, the output $f_{\mathrm{ReLU\text{-}CNN}}(X)$ equals the optimal value of the following linear program:
\begin{equation}
\begin{aligned}
f_{\mathrm{ReLU\text{-}CNN}}(X)
=
\min_{\{Z_\ell\}_{\ell=1}^{L}}
\quad
& \langle C,Z_L\rangle+\langle V,X\rangle+b_0 \\
\mathrm{s.t.}\quad
& Z_\ell \ge \mathcal{W}_\ell(X)+\mathcal{U}_\ell(Z_{\ell-1})+B_\ell,
\quad \ell=1,\dots,L, \\
& Z_\ell \ge 0,
\quad \ell=1,\dots,L, \\
& Z_0=0.
\end{aligned}
\label{eq:cnn_lp_appendix}
\end{equation}
\end{proposition}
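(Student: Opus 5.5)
Two routes are available, and I would give the direct one in full. The first is a reduction: vectorize every tensor using \eqref{eq:cnn_linearization_appendix}. Write $\bm{x}=\mathrm{vec}(X)$, $\bm{z}_\ell=\mathrm{vec}(Z_\ell)$, $W_\ell=T(\mathcal{W}_\ell)$, $U_\ell=T(\mathcal{U}_\ell)$, $\bm{b}_\ell=\mathrm{vec}(B_\ell)$ (absorbing any affine offset of $\mathcal{W}_\ell$ into $\bm b_\ell$), $\bm c=\mathrm{vec}(C)$ and $\bm v=\mathrm{vec}(V)$. Then the recursion \eqref{eq:cnn_relu_recursion_appendix}, the readout \eqref{eq:cnn_relu_output_appendix} and the LP \eqref{eq:cnn_lp_appendix} become literally the fully connected recursion, readout and LP \eqref{eq:method_relu_lp}. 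The claim would then follow from Proposition~\ref{prop:relu_lp_method}, provided its hypotheses $U_\ell\ge 0$ and $\bm c\ge 0$ hold. The inner products match because $\langle C,Z\rangle=\mathrm{vec}(C)^\top\mathrm{vec}(Z)$. Elementwise inequalities between tensors are preserved by $\mathrm{vec}$.

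The one point needing care is that $T(\mathcal{U}_\ell)$ is entrywise nonnegative. Each entry of the Toeplitz matrix is either zero (from padding, stride, or positions outside the receptive field) or equal to a single kernel coefficient $u^{(\ell)}_{pqr}\ge 0$. This holds provided padding is zero-padding, which is the convention implicit in writing $\mathcal{U}_\ell$ as linear. I expect this to be the only step with real content.

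For robustness I would also give the direct argument, mirroring Appendix~\ref{app:relu_lp_equiv_proof}, since it needs only monotonicity of $\mathcal{U}_\ell$. Let $\bar Z_\ell$ be the forward activations. They are feasible because $\max\{a,0\}\ge a$ and $\max\{a,0\}\ge 0$, so the LP value is at most $f_{\mathrm{ReLU\text{-}CNN}}(X)$.

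For the reverse inequality, take any feasible $\{Z_\ell\}$ and show $Z_\ell\ge\bar Z_\ell$ by induction. The base case is $Z_1\ge\max\{\mathcal{W}_1(X)+B_1,0\}=\bar Z_1$. For the inductive step, the nonnegative kernel makes $\mathcal{U}_{\ell+1}$ order-preserving: $Y\ge Y'$ implies $\mathcal{U}_{\ell+1}(Y)\ge\mathcal{U}_{\ell+1}(Y')$, since each output entry is $\sum u_{pqr}Y_{q,r}$ with $u_{pqr}\ge 0$. Therefore $Z_{\ell+1}\ge\max\{\mathcal{W}_{\ell+1}(X)+\mathcal{U}_{\ell+1}(\bar Z_\ell)+B_{\ell+1},0\}=\bar Z_{\ell+1}$.

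Finally, $C\ge 0$ gives $\langle C,Z_L\rangle\ge\langle C,\bar Z_L\rangle$, so every feasible objective is at least the forward value. Combining the two bounds yields equality.
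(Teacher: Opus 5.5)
Your proposal is correct and essentially matches the paper's proof, which likewise argues that the ReLU recursion produces the componentwise smallest feasible activation tensor (so $C\ge 0$ makes it optimal) and then observes that vectorization turns the LP into the fully connected form of Proposition~\ref{prop:relu_lp_method}. Your explicit induction via monotonicity of $\mathcal{U}_\ell$ is more detailed than the paper's version. Your zero-padding caveat is harmless but not needed for nonnegativity: any linear padding scheme (reflect, replicate, circular) is a $0/1$ linear map, so $T(\mathcal{U}_\ell)$ is a product of entrywise nonnegative matrices.
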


\begin{proof}
The argument is exactly parallel to Proposition~\ref{prop:relu_lp_method}.

For fixed input $X$, the ReLU recursion in \eqref{eq:cnn_relu_recursion_appendix} computes the componentwise smallest feasible activation tensor satisfying the linear inequalities in \eqref{eq:cnn_lp_appendix}.
Because the output coefficient tensor $C$ is elementwise nonnegative, the objective is monotone nondecreasing in the entries of $Z_L$.
Hence the smallest feasible tensor minimizes the objective, and the LP optimum equals the forward output.

Equivalently, after vectorizing all tensors, \eqref{eq:cnn_lp_appendix} becomes
\begin{equation}
\begin{aligned}
\min_{\{z_\ell\}_{\ell=1}^{L}}
\quad
& c^\top z_L + v^\top x + b_0 \\
\mathrm{s.t.}\quad
& z_\ell \ge W_\ell x + U_\ell z_{\ell-1}+b_\ell,
\quad \ell=1,\dots,L, \\
& z_\ell \ge 0,
\quad \ell=1,\dots,L, \\
& z_0=0,
\end{aligned}
\label{eq:cnn_lp_vectorized_appendix}
\end{equation}
where $x=\mathrm{vec}(X)$, $z_\ell=\mathrm{vec}(Z_\ell)$, and the matrices $W_\ell,U_\ell$ are precisely the structured sparse matrices induced by the convolutional operators $\mathcal{W}_\ell,\mathcal{U}_\ell$.
This is exactly the LP form in Proposition~\ref{prop:relu_lp_method}.
\end{proof}

Thus, moving from fully connected layers to convolutional layers changes only the parameterization of the affine operators, not the underlying LP value-function nature of the ReLU-ICNN backbone.

\subsection{Convolutional SOC-ICNN}
\label{app:cnn_soc_definition}

We now augment the convolutional ReLU backbone with the same two geometric primitives used in the main text.

\paragraph{Quadratic convolutional branch.}
For each $h=1,\dots,H$, let $\mathcal{B}_h$ be an affine convolutional operator acting on $X$.
We define the $h$-th quadratic branch by
\begin{equation}
\frac{\alpha_h}{2}\left\|\mathcal{B}_h(X)+E_h\right\|_F^2,
\qquad
\alpha_h\ge 0,
\label{eq:cnn_quad_branch_appendix}
\end{equation}
where $E_h$ is a bias tensor and $\|\cdot\|_F$ denotes the Frobenius norm.

\paragraph{Conic convolutional branch.}
For each $g=1,\dots,G$, let $\mathcal{A}_g$ be another affine convolutional operator acting on $X$.
We define the $g$-th conic branch by
\begin{equation}
\lambda_g\left\|\mathcal{A}_g(X)+D_g\right\|_F,
\qquad
\lambda_g\ge 0,
\label{eq:cnn_conic_branch_appendix}
\end{equation}
where $D_g$ is a bias tensor.

The full convolutional SOC-ICNN is then defined as

\begin{equation}\label{eq:cnn_soc_output_appendix}
f_{\mathrm{SOC\text{-}CNN}}(X) =
f_{\mathrm{ReLU\text{-}CNN}}(X)
+\sum_{h=1}^{H}\frac{\alpha_h}{2}\left\|\mathcal{B}_h(X)+E_h\right\|_F^2
+\sum_{g=1}^{G}\lambda_g\left\|\mathcal{A}_g(X)+D_g\right\|_F.
\end{equation}

The role of the three terms is exactly the same as in \eqref{eq:unified_soc_icnn}: the convolutional ReLU backbone captures polyhedral structure, the quadratic convolutional branches inject explicit PSD curvature, and the conic convolutional branches inject Euclidean norm geometry at the level of feature maps.

\subsection{Convolutional SOC-ICNN Remains Convex}
\label{app:cnn_soc_convexity}

We now make explicit why the full convolutional SOC-ICNN still preserves input convexity.

\begin{proposition}[Convexity of convolutional SOC-ICNN]
\label{prop:cnn_soc_convexity}
The function $X\mapsto f_{\mathrm{SOC\text{-}CNN}}(X)$ defined in \eqref{eq:cnn_soc_output_appendix} is convex with respect to the input tensor $X$.
\end{proposition}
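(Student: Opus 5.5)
The plan is to split \eqref{eq:cnn_soc_output_appendix} into its three summands, prove each one convex in $X$, and then use the fact that a sum of convex functions is convex. Along the way I will reduce the conic and quadratic branches to statements about affine maps, using the vectorization identity \eqref{eq:cnn_linearization_appendix}.

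The first summand, $f_{\mathrm{ReLU\text{-}CNN}}(X)$, is convex by Proposition~\ref{prop:cnn_convexity}, so nothing more is needed there.

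For the branches, fix $h$ and write $x=\mathrm{vec}(X)$. By \eqref{eq:cnn_linearization_appendix}, the affine convolutional operator satisfies
\[
\mathrm{vec}\bigl(\mathcal{B}_h(X)+E_h\bigr)=T(\mathcal{B}_h)\,x+\bm{e}_h
\]
for a fixed vector $\bm{e}_h$, which collects $\mathrm{vec}(E_h)$ and any bias inside $\mathcal{B}_h$. The Frobenius norm of a tensor equals the Euclidean norm of its vectorization, so the $h$-th quadratic branch equals $\tfrac{\alpha_h}{2}\|T(\mathcal{B}_h)x+\bm{e}_h\|_2^2$. This is the composition of the convex function $\tfrac12\|\cdot\|_2^2$ with an affine map, and it is scaled by $\alpha_h\ge0$, so it is convex in $x$. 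Convexity in $x$ is the same as convexity in $X$, because $\mathrm{vec}$ is a linear bijection.

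The conic branch is handled the same way: $\lambda_g\|\mathcal{A}_g(X)+D_g\|_F=\lambda_g\|T(\mathcal{A}_g)x+\bm{d}_g\|_2$. This is a norm composed with an affine map and scaled by $\lambda_g\ge0$, so it is convex. If needed, I can verify convexity directly from the triangle inequality and homogeneity:
\[
\|A(\theta x+(1-\theta)y)+d\|_2\le\theta\|Ax+d\|_2+(1-\theta)\|Ay+d\|_2 .
\]

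Summing the three convex pieces with unit coefficients gives convexity of $f_{\mathrm{SOC\text{-}CNN}}$.

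I do not expect a real obstacle. The only point that needs care is the identification $\|Y\|_F=\|\mathrm{vec}(Y)\|_2$, together with the observation that an affine convolution, including stride, padding and bias, is an affine map in $\mathrm{vec}(X)$. Both follow directly from \eqref{eq:cnn_linearization_appendix}.

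Optionally, I will add a remark that, as an alternative route, the epigraph lifts of Theorem~\ref{thm:soc_value_function} carry over verbatim after vectorization. That exhibits $f_{\mathrm{SOC\text{-}CNN}}$ as a partial minimization of a jointly convex SOCP objective over a convex set, and such a partial minimization is convex in $X$.
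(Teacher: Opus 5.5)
Your proposal is correct and takes essentially the same route as the paper. Both arguments invoke Proposition~\ref{prop:cnn_convexity} for the backbone, treat each branch as a nonnegatively scaled (squared) Frobenius norm of an affine map, and conclude by summing convex functions; your vectorization step via \eqref{eq:cnn_linearization_appendix} only spells out the affineness that the paper states directly.
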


\begin{proof}
By Proposition~\ref{prop:cnn_convexity}, the backbone term $f_{\mathrm{ReLU\text{-}CNN}}(X)$ is convex in $X$.

For each quadratic branch, the map
\[
X \mapsto \mathcal{B}_h(X)+E_h
\]
is affine in $X$, because $\mathcal{B}_h$ is an affine convolutional operator.
Hence
\[
X \mapsto \frac{\alpha_h}{2}\|\mathcal{B}_h(X)+E_h\|_F^2
\]
is convex for every $\alpha_h\ge 0$, since the squared Frobenius norm of an affine map is convex.

Likewise, for each conic branch, the map
\[
X \mapsto \mathcal{A}_g(X)+D_g
\]
is affine in $X$, so
\[
X \mapsto \lambda_g\|\mathcal{A}_g(X)+D_g\|_F
\]
is convex for every $\lambda_g\ge 0$, since the Frobenius norm is convex and convexity is preserved under composition with affine maps.

Finally, a nonnegative sum of convex functions is convex.
Therefore,
\[
f_{\mathrm{SOC\text{-}CNN}}(X)
=
f_{\mathrm{ReLU\text{-}CNN}}(X)
+
\sum_{h=1}^{H}\frac{\alpha_h}{2}\|\mathcal{B}_h(X)+E_h\|_F^2
+
\sum_{g=1}^{G}\lambda_g\|\mathcal{A}_g(X)+D_g\|_F
\]
is convex in $X$.
\end{proof}

This proposition shows that the transition from dense affine maps to convolutional affine maps does not alter the convex-analytic foundation of SOC-ICNN.
The reason is simple but fundamental:
\emph{convexity in the main text relies only on affine dependence on the input, nonnegative hidden-to-hidden coefficients in the ICNN backbone, and convex norm / squared-norm primitives. Convolution preserves all three properties.}

\subsection{SOCP Value-Function Interpretation}
\label{app:cnn_socp}

We now state the convolutional counterpart of Theorem~\ref{thm:soc_value_function}.

For each quadratic branch, introduce an auxiliary tensor
\[
Q_h=\mathcal{B}_h(X)+E_h
\]
and an epigraph variable $s_h$ satisfying
\begin{equation}
\left(s_h,\,1,\,\mathrm{vec}(Q_h)\right)\in \mathcal{Q}_r^{n_h+2},
\label{eq:cnn_rotated_soc_appendix}
\end{equation}
where $n_h=\dim(\mathrm{vec}(Q_h))$.

For each conic branch, introduce an auxiliary tensor
\[
U_g=\mathcal{A}_g(X)+D_g
\]
and an epigraph variable $t_g$ satisfying
\begin{equation}
\left(\mathrm{vec}(U_g),\,t_g\right)\in \mathcal{Q}^{m_g+1},
\label{eq:cnn_standard_soc_appendix}
\end{equation}
where $m_g=\dim(\mathrm{vec}(U_g))$.

\begin{theorem}[Convolutional SOC-ICNN as an SOCP value function]
\label{thm:cnn_soc_value_function}
For every input tensor $X$, the output in \eqref{eq:cnn_soc_output_appendix} is exactly equal to the optimal value of the following SOCP:
\begin{equation}
\begin{aligned}
f_{\mathrm{SOC\text{-}CNN}}(X)
=
\min_{\eta}
\quad
&
\langle C,Z_L\rangle+\langle V,X\rangle+b_0
+
\sum_{h=1}^{H}\alpha_h s_h
+
\sum_{g=1}^{G}\lambda_g t_g \\
\mathrm{s.t.}\quad
&
Z_\ell \ge \mathcal{W}_\ell(X)+\mathcal{U}_\ell(Z_{\ell-1})+B_\ell,
\quad \ell=1,\dots,L, \\
&
Z_\ell \ge 0,
\quad \ell=1,\dots,L, \\
&
Z_0=0, \\
&
Q_h=\mathcal{B}_h(X)+E_h,
\quad
(s_h,1,\mathrm{vec}(Q_h))\in\mathcal{Q}_r^{n_h+2},
\quad h=1,\dots,H, \\
&
U_g=\mathcal{A}_g(X)+D_g,
\quad
(\mathrm{vec}(U_g),t_g)\in\mathcal{Q}^{m_g+1},
\quad g=1,\dots,G,
\end{aligned}
\label{eq:cnn_socp_appendix}
\end{equation}
where
\[
\eta=\{Z_\ell,Q_h,s_h,U_g,t_g\}.
\]
\end{theorem}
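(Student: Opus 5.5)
The plan is to reduce the statement to Theorem~\ref{thm:soc_value_function} by vectorization, exactly as Proposition~\ref{prop:cnn_lp} was reduced to Proposition~\ref{prop:relu_lp_method}.

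\textbf{Step 1: vectorize.} Set $x=\mathrm{vec}(X)$ and $z_\ell=\mathrm{vec}(Z_\ell)$. By \eqref{eq:cnn_linearization_appendix}, each convolutional affine operator becomes a structured matrix plus a bias:
\begin{itemize}
\item $\mathcal{W}_\ell,\mathcal{U}_\ell$ become $W_\ell,U_\ell$, with $U_\ell\ge0$ entrywise, because its entries are copies of the nonnegative kernel coefficients (or zeros);
\item $\mathcal{B}_h(X)+E_h$ becomes $B_h x+\bm e_h$, and $\mathcal{A}_g(X)+D_g$ becomes $A_g x+\bm d_g$.
\end{itemize}
Since Frobenius norms and inner products of tensors equal Euclidean norms and inner products of their vectorizations:
\begin{itemize}
\item \eqref{eq:cnn_soc_output_appendix} is literally the dense formula \eqref{eq:unified_soc_icnn} with $c=\mathrm{vec}(C)\ge0$ and $v=\mathrm{vec}(V)$;
\item \eqref{eq:cnn_socp_appendix} is literally the dense SOCP \eqref{eq:soc_icnn_socp}.
\end{itemize}
Theorem~\ref{thm:soc_value_function} then applies verbatim, since its proof uses only $U_\ell\ge0$, $c\ge0$, $\alpha_h,\lambda_g\ge0$, and affine dependence on $x$.

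\textbf{Step 2: direct sandwich argument.} For self-containedness I would also spell out the two inequalities directly.

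\emph{Upper bound.} Take the forward activations $\bar Z_\ell$, set $Q_h=\mathcal{B}_h(X)+E_h$ and $s_h=\tfrac12\|Q_h\|_F^2$, and set $U_g=\mathcal{A}_g(X)+D_g$ and $t_g=\|U_g\|_F$. The point $(s_h,1,\mathrm{vec}(Q_h))$ lies in $\mathcal{Q}_r^{n_h+2}$ because $2s_h\cdot1=\|\mathrm{vec}(Q_h)\|_2^2$. The point $(\mathrm{vec}(U_g),t_g)$ lies in $\mathcal{Q}^{m_g+1}$ with equality. This feasible point attains $f_{\mathrm{SOC\text{-}CNN}}(X)$, so the SOCP optimum is at most $f_{\mathrm{SOC\text{-}CNN}}(X)$.

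\emph{Lower bound.} Take any feasible point.
\begin{itemize}
\item Proposition~\ref{prop:cnn_lp} gives $\langle C,Z_L\rangle+\langle V,X\rangle+b_0\ge f_{\mathrm{ReLU\text{-}CNN}}(X)$.
\item The rotated cone constraint, with $\eta=1$ and $Q_h$ fixed by its equality constraint, gives $s_h\ge\tfrac12\|\mathcal{B}_h(X)+E_h\|_F^2$.
\item The standard cone constraint gives $t_g\ge\|\mathcal{A}_g(X)+D_g\|_F$.
\end{itemize}
Multiplying by $\alpha_h,\lambda_g\ge0$ and summing gives objective $\ge f_{\mathrm{SOC\text{-}CNN}}(X)$, so the two values coincide.

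\textbf{Main obstacle.} The only nontrivial step is checking that vectorization preserves the sign structure. The Toeplitz matrix $T(\mathcal{U}_\ell)$ must be entrywise nonnegative whenever the kernel is; this holds because padding, stride and dilation only insert zeros or select entries, never change signs. The ReLU monotonicity induction (via Proposition~\ref{prop:cnn_lp}) then goes through. A secondary point is that the SOCP is feasible and bounded below, so the optimal value is finite and attained; the upper-bound construction and the lower bound together settle this.
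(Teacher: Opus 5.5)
Your proposal is correct and matches the paper. Your Step~2 sandwich argument is exactly the paper's proof: the forward activations with $s_h=\tfrac12\|Q_h\|_F^2$ and $t_g=\|U_g\|_F$ give the upper bound, and Proposition~\ref{prop:cnn_lp} plus the two cone epigraph inequalities weighted by $\alpha_h,\lambda_g\ge0$ give the lower bound. Your Step~1 vectorization reduction is the viewpoint the paper records separately in Proposition~\ref{prop:cnn_vectorization_equivalence}; your check that the Toeplitz matrix of $\mathcal{U}_\ell$ inherits entrywise nonnegativity from the kernel is the justification left implicit in Proposition~\ref{prop:cnn_lp} (for non-zero padding modes the entries become sums of kernel taps, which are still nonnegative).
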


\begin{proof}
The proof is the exact convolutional analogue of Theorem~\ref{thm:soc_value_function}.

Let $V(X)$ denote the optimal value of \eqref{eq:cnn_socp_appendix}.

\paragraph{Upper bound: $V(X)\le f_{\mathrm{SOC\text{-}CNN}}(X)$.}
Take the forward activations $\bar Z_\ell$ produced by the convolutional ReLU backbone, so that
\[
\langle C,\bar Z_L\rangle+\langle V,X\rangle+b_0
=
f_{\mathrm{ReLU\text{-}CNN}}(X).
\]
For each quadratic branch, set
\[
\bar Q_h=\mathcal{B}_h(X)+E_h,
\qquad
\bar s_h=\frac12\|\bar Q_h\|_F^2.
\]
Then
\[
(\bar s_h,1,\mathrm{vec}(\bar Q_h))\in \mathcal{Q}_r^{n_h+2}.
\]
For each conic branch, set
\[
\bar U_g=\mathcal{A}_g(X)+D_g,
\qquad
\bar t_g=\|\bar U_g\|_F.
\]
Then
\[
(\mathrm{vec}(\bar U_g),\bar t_g)\in\mathcal{Q}^{m_g+1}.
\]
Therefore this choice is feasible for \eqref{eq:cnn_socp_appendix}, and its objective value is exactly
\[
f_{\mathrm{ReLU\text{-}CNN}}(X)
+
\sum_{h=1}^{H}\frac{\alpha_h}{2}\|\mathcal{B}_h(X)+E_h\|_F^2
+
\sum_{g=1}^{G}\lambda_g\|\mathcal{A}_g(X)+D_g\|_F
=
f_{\mathrm{SOC\text{-}CNN}}(X).
\]
Hence,
\[
V(X)\le f_{\mathrm{SOC\text{-}CNN}}(X).
\]

\paragraph{Lower bound: $V(X)\ge f_{\mathrm{SOC\text{-}CNN}}(X)$.}
Take any feasible point of \eqref{eq:cnn_socp_appendix}.
By Proposition~\ref{prop:cnn_lp},
\[
\langle C,Z_L\rangle+\langle V,X\rangle+b_0
\ge
f_{\mathrm{ReLU\text{-}CNN}}(X).
\]
Moreover, the rotated cone constraints imply
\[
s_h \ge \frac12\|Q_h\|_F^2
=
\frac12\|\mathcal{B}_h(X)+E_h\|_F^2,
\]
and the standard cone constraints imply
\[
t_g \ge \|U_g\|_F
=
\|\mathcal{A}_g(X)+D_g\|_F.
\]
Multiplying by the nonnegative coefficients $\alpha_h,\lambda_g$ and summing yields
\[
\sum_{h=1}^{H}\alpha_h s_h
\ge
\sum_{h=1}^{H}\frac{\alpha_h}{2}\|\mathcal{B}_h(X)+E_h\|_F^2,
\]
and
\[
\sum_{g=1}^{G}\lambda_g t_g
\ge
\sum_{g=1}^{G}\lambda_g\|\mathcal{A}_g(X)+D_g\|_F.
\]
Therefore every feasible objective value in \eqref{eq:cnn_socp_appendix} is at least
\[
f_{\mathrm{ReLU\text{-}CNN}}(X)
+
\sum_{h=1}^{H}\frac{\alpha_h}{2}\|\mathcal{B}_h(X)+E_h\|_F^2
+
\sum_{g=1}^{G}\lambda_g\|\mathcal{A}_g(X)+D_g\|_F
=
f_{\mathrm{SOC\text{-}CNN}}(X).
\]
Hence,
\[
V(X)\ge f_{\mathrm{SOC\text{-}CNN}}(X).
\]

Combining the two inequalities proves
\[
V(X)=f_{\mathrm{SOC\text{-}CNN}}(X).
\]
\end{proof}

\subsection{Relation to the Main Theoretical Results}
\label{app:cnn_relation_main_theory}

The convolutional extension is not a different theory, but a different parameterization of the same theory.

\begin{proposition}[Equivalence after vectorization]
\label{prop:cnn_vectorization_equivalence}
After vectorization, every convolutional SOC-ICNN can be written as a fully connected SOC-ICNN whose affine operators are structured sparse matrices induced by convolution.
Consequently, the representational and optimization-theoretic conclusions of the main text extend directly to the convolutional setting.
\end{proposition}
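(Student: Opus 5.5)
The plan is to make the vectorization explicit and then check, term by term, that the convolutional SOC-ICNN in \eqref{eq:cnn_soc_output_appendix} coincides with a fully connected SOC-ICNN of the form \eqref{eq:unified_soc_icnn}. Set $x=\mathrm{vec}(X)\in\mathbb{R}^{d_0}$ with $d_0=C_0H_0W_0$. By \eqref{eq:cnn_linearization_appendix}, each convolutional operator $\mathcal{W}_\ell,\mathcal{U}_\ell,\mathcal{B}_h,\mathcal{A}_g$ (linear part) is represented by a Toeplitz-type matrix. Define
\[
W_\ell=T(\mathcal{W}_\ell),\quad U_\ell=T(\mathcal{U}_\ell),\quad B_h=T(\mathcal{B}_h),\quad A_g=T(\mathcal{A}_g).
\]
Biases are vectorized, for example $\bm b_\ell=\mathrm{vec}(B_\ell)$, $\bm e_h=\mathrm{vec}(E_h)$ and $\bm d_g=\mathrm{vec}(D_g)$, and affine offsets of $\mathcal{B}_h,\mathcal{A}_g$ are absorbed into them. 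Readout weights are vectorized as $\bm c=\mathrm{vec}(C)$ and $\bm v=\mathrm{vec}(V)$. Since $\mathrm{vec}$ is an isometry, $\langle C,Z\rangle=\bm c^\top\mathrm{vec}(Z)$ and $\|Y\|_F=\|\mathrm{vec}(Y)\|_2$.

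The key steps follow in order.

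First, show by induction on $\ell$ that $\mathrm{vec}(Z_\ell)=\bm z_\ell$, where $\bm z_\ell$ is the fully connected recursion with the matrices above. This holds because ReLU acts elementwise and therefore commutes with $\mathrm{vec}$.

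Second, verify the admissibility constraints. Every entry of $T(\mathcal{U}_\ell)$ is either zero or one of the kernel coefficients, so nonnegative kernels give $U_\ell\ge0$. We also have $\bm c\ge0$, and $\alpha_h,\lambda_g\ge0$ are unchanged. This step is the main point needing care: the paper's convexity argument relies only on the sign constraints, so one must confirm that stride, padding and dilation introduce no negative or mixed entries. They do not, since padding contributes zero entries and stride or dilation only select indices.

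Third, conclude that $f_{\mathrm{SOC\text{-}CNN}}(X)=f_{\mathrm{SOC}}(\mathrm{vec}(X))$ identically. The quadratic and conic branches transfer directly through the isometry.

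For the ``consequently'' part, apply each main-text result to the vectorized network. Theorem~\ref{thm:soc_value_function} gives the SOCP value function, which is consistent with Theorem~\ref{thm:cnn_soc_value_function}. Proposition~\ref{prop:relu_cpwl_theory} applied to the backbone gives the CPWL structure, and Proposition~\ref{prop:relu_lower_bound} applies to it as well. The forward complexity formulas hold with the dense costs replaced by the sparse costs of $T(\cdot)$, so the bounds are upper bounds, or improved ones.

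One caveat requires the most attention. The exact representation and strict extension result, Proposition~\ref{prop:soc_extension}, does not transfer verbatim. Convolutional networks realize only structured $B_h$ and $A_g$, so the convolutional class is a subclass of the fully connected one. I would therefore state the transfer as follows: every upper-bound or structural conclusion carries over directly. Strict extension over convolutional ReLU-ICNNs still holds, because the convolutional ReLU backbone is CPWL by Proposition~\ref{prop:relu_cpwl_theory} after vectorization. Meanwhile, a single $1\times1$ identity-kernel quadratic branch realizes $\tfrac12\|X\|_F^2$, which is not CPWL.
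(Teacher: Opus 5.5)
Your proposal is correct and takes the same route as the paper: vectorize every operator through $T(\mathcal{K})$, then read off Proposition~\ref{prop:cnn_lp}, Theorem~\ref{thm:cnn_soc_value_function} and the class-enlargement argument from the dense results, with your explicit checks (ReLU commuting with $\mathrm{vec}$, nonnegativity of $T(\mathcal{U}_\ell)$, the isometry for $\langle\cdot,\cdot\rangle$ and $\|\cdot\|_F$) making rigorous what the paper leaves implicit. Your caveat that exact representation of all of $\mathcal{G}_{\mathrm{SOC}}(r,G)$ needs the relevant $B$ and $A_g$ to be convolution-realizable is a fair sharpening of the paper's unqualified item (iii) (it disappears if full-extent kernels with valid padding, which are dense maps, are allowed), but your blanket claim that every upper-bound conclusion carries over directly should then exclude Proposition~\ref{prop:structural_absorption}, which relies on exactly that representability plus a backbone able to realize arbitrary max-affine functions.
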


\begin{proof}
By \eqref{eq:cnn_linearization_appendix}, every convolutional operator admits a matrix representation after vectorization.
Applying this to all operators $\mathcal{W}_\ell,\mathcal{U}_\ell,\mathcal{B}_h,\mathcal{A}_g$ transforms the convolutional model into the fully connected form of
\eqref{eq:method_relu_lp}, \eqref{eq:unified_soc_icnn}, and \eqref{eq:soc_icnn_socp},
except that the corresponding matrices are sparse and share parameters according to convolutional structure.

Therefore:
(i) Proposition~\ref{prop:relu_lp_method} becomes Proposition~\ref{prop:cnn_lp};
(ii) Theorem~\ref{thm:soc_value_function} becomes Theorem~\ref{thm:cnn_soc_value_function};
and (iii) the function-class enlargement argument of Proposition~\ref{prop:soc_extension} carries over directly, since the quadratic and conic branches remain exact structured primitives after vectorization.
\end{proof}

In particular, the strict enlargement from polyhedral LP geometry to conic SOCP geometry is unchanged.
What changes is only the inductive bias:
the convolutional version is tailored to spatially structured inputs and benefits from local connectivity and parameter sharing, while retaining the same convexity guarantees and the same exact value-function interpretation.

\begin{remark}[On allowed CNN modules]
\label{rem:cnn_modules}
The above extension applies directly to any \emph{linear} convolutional module, including standard convolutions, strided convolutions, dilated convolutions, transposed convolutions, $1\times 1$ convolutions, and average pooling.
To preserve the exact LP/SOCP value-function interpretation in the present form, one should avoid inserting additional nonlinear operators such as  batch normalization inside the convex backbone, unless they are analyzed separately.
\end{remark}

\subsection{Convolutional Downstream Test on Real Images}
\label{app:cnn_downstream_real}

To complement the synthetic approximation and downstream decision experiments in the main text, we include a small real-image downstream test for the convolutional SOC-ICNN extension. 
The purpose of this experiment is not to compete with specialized image restoration methods, but to verify that the same LP-to-SOCP lifting principle remains operationally useful when the decision variable has real spatial structure. 
The convolutional ReLU baseline follows the input-convex construction principle of ICNNs \citep{AmosXuKolter2017ICNN}, with nonnegative hidden-to-hidden convolutional weights to preserve convexity with respect to the image decision variable.

\paragraph{Task.}
We use the Olivetti Faces dataset, which consists of real grayscale face images at native \(64\times64\) resolution. 
For each clean image \(X^{\natural}\), we generate a corrupted observation \(Y\) by applying a mild Gaussian blur, additive Gaussian noise, a light random pixel mask, and several rectangular missing blocks. 
The decision variable is the restored image \(X\in[0,1]^{1\times64\times64}\). 
The true convex restoration energy is
\[
\begin{aligned}
F_{\mathrm{img}}(X;Y,M)
=
&\frac{\rho}{2|\Omega|}
\|M\odot(H*X-Y)\|_F^2
+
\frac{\beta}{2|\Omega|}
\|L*X\|_F^2 \\
&+
\frac{\lambda}{|\Omega|}
\sum_{(i,j)\in\Omega}
\sqrt{(D_xX)_{i,j}^2+(D_yX)_{i,j}^2+\varepsilon},
\end{aligned}
\]
where \(M\) is the observation mask, \(\Omega\) is the image lattice, \(H\) is a Gaussian blur operator, \(L\) is the discrete Laplacian filter, and \(D_x,D_y\) are finite-difference filters. 
This objective combines masked data fidelity, convolutional quadratic smoothing, and a TV-like conic regularizer, making it a natural downstream test for convolutional SOC-ICNNs.

\paragraph{Models.}
We compare five convolutional convex surrogates: Conv-ReLU, Conv-Softplus, Conv-Quad, Conv-Norm, and Conv-SOC. 
All models use the same three-layer convolutional ICNN backbone with \(10\) hidden channels. 
The Conv-Quad and Conv-Norm variants add only the quadratic or conic branch, respectively, while Conv-SOC includes both branches. 
The quadratic branch uses \(16\) convolutional output channels, and the conic branch uses \(8\) two-dimensional norm groups. 
All hidden-to-hidden convolutional kernels in the ICNN backbone are constrained to be elementwise nonnegative.

\paragraph{Data and optimization protocol.}
We split the \(400\) Olivetti images into \(280\) training images, \(60\) validation images, and \(60\) test images. 
For each image, we generate \(16\) candidate restored images and evaluate the true convex energy to train the surrogate by mean-squared error regression. 
The candidate set includes corrupted observations, clean images, blurred images, random images, noisy perturbations, and convex mixtures. 
All models are trained for \(150\) epochs using AdamW with learning rate \(2\times10^{-3}\), batch size \(96\), and weight decay \(10^{-6}\). 
For downstream evaluation, each learned surrogate is optimized over \(X\in[0,1]^{1\times64\times64}\) using projected Adam for \(350\) steps and \(3\) random restarts. 
The reference solution \(X^\star\) is computed by approximately minimizing the true convex energy using projected Adam for \(700\) steps and the same box projection. 
We report relative prediction error, true-energy regret \(F_{\mathrm{img}}(\hat X;Y,M)-F_{\mathrm{img}}(X^\star;Y,M)\), pixel-level decision error \(\|\hat X-X^\star\|_F/\sqrt{|\Omega|}\), and downstream inference time.

\paragraph{Implementation.}
The experiment is implemented in Python~3.11 with PyTorch~2.0 and run on a single NVIDIA RTX 4060 Ti GPU. 
All random seeds are fixed to \(0\). 
The corruption parameters are \(\rho=1.0\), \(\beta=0.035\), \(\lambda=0.055\), \(\varepsilon=10^{-4}\), Gaussian blur standard deviation \(0.7\), Gaussian noise standard deviation \(0.025\), base pixel keep probability \(0.90\), and \(2\)--\(4\) rectangular holes per image with side lengths between \(8\) and \(18\) pixels.

\begin{table}[t]
\centering
\caption{Convolutional downstream image restoration test on Olivetti Faces \(64\times64\) grayscale images. Regret and decision error are evaluated using the true convex restoration energy after optimizing each learned convolutional surrogate. Lower is better for RelErr, regret, and decision error.}
\label{tab:cnn_downstream_real}
\small
\setlength{\tabcolsep}{4pt}
\renewcommand{\arraystretch}{1.08}
\begin{tabular}{lcccc}
\toprule
Model & RelErr & Regret & Decision error & Infer. (ms) \\
\midrule
Conv-ReLU & \(0.0492\) & \(0.0254\pm0.0033\) & \(0.2497\pm0.0174\) & \(86.73\) \\
Conv-Softplus & \(0.4434\) & \(0.0821\pm0.0192\) & \(0.4736\pm0.0527\) & \(93.09\) \\
Conv-Quad & \(0.0474\) & \(0.0111\pm0.0014\) & \(0.1633\pm0.0107\) & \(103.72\) \\
Conv-Norm & \(0.0473\) & \(0.0074\pm0.0026\) & \(0.1377\pm0.0282\) & \(66.77\) \\
Conv-SOC & \(\mathbf{0.0461}\) & \(\mathbf{0.0041}\pm0.0011\) & \(\mathbf{0.1303}\pm0.0207\) & \(84.84\) \\
\bottomrule
\end{tabular}
\end{table}

\begin{figure}[t]
\centering
\includegraphics[width=\linewidth]{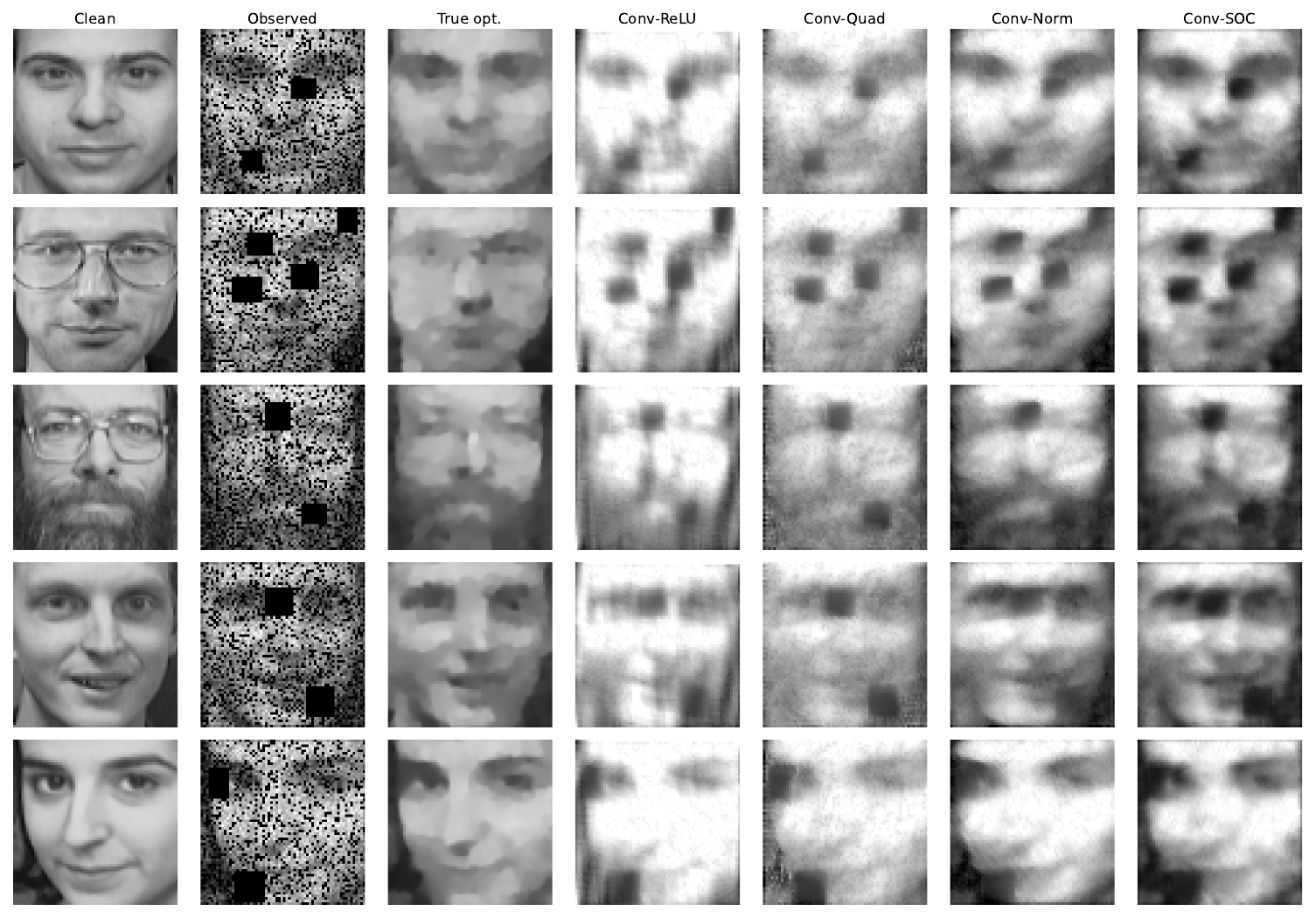}
\caption{Qualitative downstream image restoration examples on Olivetti Faces \(64\times64\) grayscale images. Each row is a test instance, and each learned convolutional surrogate is optimized over the restored image. The structured Quad, Norm, and SOC variants recover the dominant facial structure under the learned convex energy; Conv-SOC gives the lowest relative prediction error, true-energy regret, and pixel-level decision error in Table~\ref{tab:cnn_downstream_real}.}
\label{fig:cnn_downstream_real}
\end{figure}

\section{Extension to Recurrent SOC-ICNNs}
\label{app:rnn_extension}

In addition to convolutional architectures, the same SOC-ICNN construction also extends to finite-horizon recurrent architectures. 
The key point is that a recurrent network over a fixed horizon can be unrolled into a feed-forward computation graph with shared or time-varying affine maps. 
Once unrolled, the same monotonicity and epigraph arguments used in the main text apply directly. 
Thus, the recurrent extension does not change the underlying value-function class: it replaces a dense feed-forward backbone by a structured affine recurrence while preserving input convexity and the SOCP value-function interpretation.

\subsection{Finite-Horizon Recurrent Backbone}
\label{app:rnn_backbone}

Let the input be a sequence
\[
\bm{x}_{1:T}:=(\bm{x}_1,\dots,\bm{x}_T),
\qquad
\bm{x}_t\in\mathbb{R}^{d_t}.
\]
A finite-horizon recurrent ReLU-ICNN backbone is defined by
\begin{equation}
\bm{z}_t
=
\sigma\!\left(
W_t\bm{x}_t+R_t\bm{z}_{t-1}+\bm{b}_t
\right),
\qquad
t=1,\dots,T,
\qquad
\bm{z}_0=\bm{0},
\label{eq:rnn_relu_backbone_appendix}
\end{equation}
where \(\sigma(a)=\max\{a,0\}\) is applied elementwise. 
The recurrent hidden-to-hidden weights satisfy
\[
R_t\ge0,
\qquad
\bm{c}\ge0,
\]
elementwise. 
The terminal-output recurrent ICNN is
\begin{equation}
f_{\mathrm{ReLU\text{-}RNN}}(\bm{x}_{1:T})
=
\bm{c}^{\top}\bm{z}_T
+
\sum_{t=1}^{T}\bm{v}_t^\top\bm{x}_t
+
b_0 .
\label{eq:rnn_relu_output_appendix}
\end{equation}
The matrices \(W_t\), vectors \(\bm v_t\), and biases \(\bm b_t\) are unconstrained. 
The time-varying notation is used for generality; the usual weight-sharing RNN is obtained by setting \(W_t=W\), \(R_t=R\), and \(\bm b_t=\bm b\).

\subsection{LP Value-Function Interpretation of the Recurrent Backbone}
\label{app:rnn_lp}

For any fixed sequence \(\bm{x}_{1:T}\), the recurrent backbone admits an LP value-function representation:
\begin{equation}
\begin{aligned}
f_{\mathrm{ReLU\text{-}RNN}}(\bm{x}_{1:T})
=
\min_{\{\bm z_t\}_{t=1}^{T}}\quad
&\bm{c}^{\top}\bm{z}_T
+\sum_{t=1}^{T}\bm{v}_t^\top\bm{x}_t
+b_0\\
{\rm s.t.}\quad
&\bm{z}_t
\ge
W_t\bm{x}_t+R_t\bm{z}_{t-1}+\bm{b}_t,\quad
\bm{z}_t\ge\bm0,\quad t=1,\dots,T,\\
&\bm z_0=\bm0 .
\end{aligned}
\label{eq:rnn_relu_lp_appendix}
\end{equation}
Indeed, the ReLU recurrence in \eqref{eq:rnn_relu_backbone_appendix} constructs the componentwise minimal feasible hidden sequence. 
Because \(R_t\ge0\), monotonicity propagates through time, and because \(\bm c\ge0\), the terminal objective is minimized by the forward recurrent activations. 
Thus, the recurrent ReLU-ICNN is still an LP value-function model after unrolling.

\subsection{Recurrent SOC-ICNN Architecture}
\label{app:rnn_soc_architecture}

We now add the same quadratic and conic structural branches used in the main SOC-ICNN. 
For a sequence input, the branch maps are affine functions of the entire sequence:
\[
\mathcal{B}_h(\bm{x}_{1:T})
=
\sum_{t=1}^{T}B_{h,t}\bm{x}_t+\bm e_h,
\qquad
\mathcal{A}_g(\bm{x}_{1:T})
=
\sum_{t=1}^{T}A_{g,t}\bm{x}_t+\bm d_g.
\]
The recurrent SOC-ICNN is then defined as
\begin{equation}
\begin{aligned}
f_{\mathrm{SOC\text{-}RNN}}(\bm{x}_{1:T})
=
&f_{\mathrm{ReLU\text{-}RNN}}(\bm{x}_{1:T})
+
\sum_{h=1}^{H}
\frac{\alpha_h}{2}
\left\|
\mathcal{B}_h(\bm{x}_{1:T})
\right\|_2^2\\
&+
\sum_{g=1}^{G}
\lambda_g
\left\|
\mathcal{A}_g(\bm{x}_{1:T})
\right\|_2 ,
\end{aligned}
\label{eq:rnn_soc_output_appendix}
\end{equation}
where \(\alpha_h\ge0\) and \(\lambda_g\ge0\). 
This is the direct recurrent analogue of \eqref{eq:unified_soc_icnn}: the ReLU recurrent backbone captures polyhedral temporal structure, while the quadratic and conic branches inject PSD curvature and Euclidean norm geometry over the full input sequence.

\subsection{Recurrent SOC-ICNN Remains Convex}
\label{app:rnn_soc_convexity}

\begin{proposition}[Convexity of recurrent SOC-ICNN]
\label{prop:rnn_soc_convexity}
For any finite horizon \(T\), the function
\(\bm{x}_{1:T}\mapsto f_{\mathrm{SOC\text{-}RNN}}(\bm{x}_{1:T})\)
defined in \eqref{eq:rnn_soc_output_appendix} is convex with respect to the full input sequence.
\end{proposition}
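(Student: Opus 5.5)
The plan mirrors the convolutional case (Proposition~\ref{prop:cnn_soc_convexity}): first establish convexity of the recurrent backbone $f_{\mathrm{ReLU\text{-}RNN}}$ by induction over time, then add the two structural branches and use closure of convexity under nonnegative sums. Throughout, the decision variable is the stacked vector $\bm{x}_{1:T}$. Every map of the form $\bm{x}_{1:T}\mapsto W_t\bm{x}_t+\bm b_t$, $\sum_t \bm v_t^\top\bm x_t+b_0$, $\mathcal{B}_h(\bm{x}_{1:T})$ or $\mathcal{A}_g(\bm{x}_{1:T})$ is affine in the full sequence, because it is a linear map composed with a coordinate projection, plus a constant.

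For the backbone, I will prove by induction on $t$ that every entry of $\bm z_t(\bm{x}_{1:T})$ is a convex function of $\bm{x}_{1:T}$.
\begin{itemize}
\item \emph{Base case.} $\bm z_1=\sigma(W_1\bm x_1+\bm b_1)$ is the elementwise maximum of an affine function and $0$, hence convex.
\item \emph{Inductive step.} If the entries of $\bm z_{t-1}$ are convex, then each entry of $R_t\bm z_{t-1}$ is a nonnegative combination of convex functions, since $R_t\ge 0$. Adding the affine term $W_t\bm x_t+\bm b_t$ keeps each entry convex, and applying $\sigma$ preserves convexity because $\sigma$ is convex and nondecreasing. Equivalently, $\max\{s,0\}$ is a maximum of two convex functions.
\end{itemize}
The terminal readout $\bm c^\top\bm z_T$ is then convex because $\bm c\ge 0$, and adding the affine term $\sum_t\bm v_t^\top\bm x_t+b_0$ preserves convexity. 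Alternatively, convexity follows from the LP representation \eqref{eq:rnn_relu_lp_appendix}: the optimal value of an LP whose constraint right-hand sides depend affinely on $\bm{x}_{1:T}$ is jointly convex, since it is a partial minimization of a jointly convex problem over a convex set. I will present the direct induction as the main argument and mention the LP route as a check.

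For the branches, $\bm{x}_{1:T}\mapsto\frac{\alpha_h}{2}\|\mathcal{B}_h(\bm{x}_{1:T})\|_2^2$ is convex because the squared Euclidean norm is convex, composition with an affine map preserves convexity, and $\alpha_h\ge0$. The same reasoning with the unsquared norm and $\lambda_g\ge 0$ handles each conic branch. Summing the backbone and all branches, which is a nonnegative combination of convex functions, gives convexity of $f_{\mathrm{SOC\text{-}RNN}}$.

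The only step requiring care is the induction over time. Because the inputs $\bm x_t$ enter at different steps, convexity must be tracked in the full stacked variable $\bm{x}_{1:T}$ rather than in each $\bm x_t$ separately. The fix is to view each term $W_t\bm x_t$ as an affine function of $\bm{x}_{1:T}$, after which the finite-horizon recursion is just a depth-$T$ ICNN with hidden weights $U_t=R_t\ge0$. This also explains why a finite $T$ is needed: the argument is a finite induction and gives no control over infinite-horizon limits.
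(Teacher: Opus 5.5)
Your proposal is correct and follows essentially the same route as the paper. Both arguments prove convexity of the backbone by induction over time, using $R_t\ge 0$ and the fact that ReLU is convex and nondecreasing, then handle the readout with $\bm c\ge 0$ plus the affine passthrough, and finish with nonnegative sums of (squared) norms composed with affine maps. Your LP partial-minimization remark is a valid extra check that the paper does not use.
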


\begin{proof}
We first prove that the recurrent ReLU backbone is convex in \(\bm{x}_{1:T}\). 
The initial state \(\bm z_0=\bm0\) is constant and hence convex. 
Assume that each component of \(\bm z_{t-1}\) is convex in \(\bm{x}_{1:t-1}\). 
Since \(R_t\ge0\), the term \(R_t\bm z_{t-1}\) is a nonnegative linear combination of convex functions. 
The term \(W_t\bm x_t+\bm b_t\) is affine in \(\bm x_t\). 
Therefore each component of
\[
W_t\bm x_t+R_t\bm z_{t-1}+\bm b_t
\]
is convex in \(\bm{x}_{1:t}\). 
Because ReLU is convex and nondecreasing, applying \(\sigma\) elementwise preserves convexity. 
Thus each component of \(\bm z_t\) is convex by induction. 
Since \(\bm c\ge0\), the terminal term \(\bm c^\top\bm z_T\) is convex, and the passthrough term \(\sum_{t=1}^T\bm v_t^\top\bm x_t+b_0\) is affine. 
Hence \(f_{\mathrm{ReLU\text{-}RNN}}\) is convex.

For each quadratic branch, \(\mathcal B_h(\bm x_{1:T})\) is affine in the full sequence, so
\[
\bm x_{1:T}
\mapsto
\frac{\alpha_h}{2}
\left\|
\mathcal B_h(\bm x_{1:T})
\right\|_2^2
\]
is convex for every \(\alpha_h\ge0\). 
Likewise, each conic branch
\[
\bm x_{1:T}
\mapsto
\lambda_g
\left\|
\mathcal A_g(\bm x_{1:T})
\right\|_2
\]
is convex for every \(\lambda_g\ge0\). 
A nonnegative sum of convex functions is convex, proving the claim.
\end{proof}

\subsection{SOCP Value-Function Interpretation}
\label{app:rnn_socp}

We now state the recurrent counterpart of Theorem~\ref{thm:soc_value_function}. 
For each quadratic branch, introduce
\[
\bm q_h=\mathcal B_h(\bm x_{1:T})
\]
and an epigraph variable \(s_h\) satisfying
\[
(s_h,1,\bm q_h)\in\mathcal Q_r^{m_h+2}.
\]
For each conic branch, introduce
\[
\bm u_g=\mathcal A_g(\bm x_{1:T})
\]
and an epigraph variable \(t_g\) satisfying
\[
(\bm u_g,t_g)\in\mathcal Q^{k_g+1}.
\]
Here \(m_h=\dim(\bm q_h)\) and \(k_g=\dim(\bm u_g)\).

\begin{proposition}[Recurrent SOC-ICNN as an SOCP value function]
\label{prop:rnn_socp}
For every finite horizon \(T\) and every sequence input \(\bm{x}_{1:T}\), the recurrent SOC-ICNN in \eqref{eq:rnn_soc_output_appendix} is equal to the optimal value of the following SOCP:
\begin{equation}
\begin{aligned}
f_{\mathrm{SOC\text{-}RNN}}(\bm{x}_{1:T})
=
\min_{\eta}\quad
&\bm c^\top\bm z_T
+\sum_{t=1}^{T}\bm v_t^\top\bm x_t
+b_0
+\sum_{h=1}^{H}\alpha_h s_h
+\sum_{g=1}^{G}\lambda_g t_g\\
{\rm s.t.}\quad
&\bm z_t\ge W_t\bm x_t+R_t\bm z_{t-1}+\bm b_t,\quad
\bm z_t\ge\bm0,\quad t=1,\dots,T,\\
&\bm z_0=\bm0,\\
&\bm q_h=\sum_{t=1}^{T}B_{h,t}\bm x_t+\bm e_h,\quad
(s_h,1,\bm q_h)\in\mathcal Q_r^{m_h+2},\quad h=1,\dots,H,\\
&\bm u_g=\sum_{t=1}^{T}A_{g,t}\bm x_t+\bm d_g,\quad
(\bm u_g,t_g)\in\mathcal Q^{k_g+1},\quad g=1,\dots,G,
\end{aligned}
\label{eq:rnn_socp_appendix}
\end{equation}
where
\[
\eta=\{\bm z_t,\bm q_h,s_h,\bm u_g,t_g:\ t=1,\dots,T,\ h=1,\dots,H,\ g=1,\dots,G\}.
\]
\end{proposition}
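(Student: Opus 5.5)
The plan is to mirror the proof of Theorem~\ref{thm:soc_value_function}, replacing the layer index by the time index. Let $V(\bm x_{1:T})$ denote the optimal value of \eqref{eq:rnn_socp_appendix}. I will establish the two inequalities $V\le f_{\mathrm{SOC\text{-}RNN}}$ and $V\ge f_{\mathrm{SOC\text{-}RNN}}$ separately. The only ingredient that is not a verbatim repetition is the LP identity \eqref{eq:rnn_relu_lp_appendix} for the recurrent backbone. The text above only sketches it, so I will first make it rigorous.

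\emph{Step 1 (recurrent LP identity).} Let $\bar{\bm z}_t$ be the forward activations from \eqref{eq:rnn_relu_backbone_appendix}. Since $\sigma(a)\ge a$ and $\sigma(a)\ge 0$, these activations are feasible for \eqref{eq:rnn_relu_lp_appendix}. Now take any feasible $\{\bm z_t\}$. I will show $\bm z_t\ge\bar{\bm z}_t$ by induction on $t$, exactly as in Appendix~\ref{app:relu_lp_equiv_proof}:
\begin{itemize}
\item the base case follows from $\bm z_1\ge\max\{W_1\bm x_1+\bm b_1,\bm 0\}$;
\item for the inductive step, $R_t\ge0$ gives $R_t\bm z_{t-1}\ge R_t\bar{\bm z}_{t-1}$, hence $\bm z_t\ge\max\{W_t\bm x_t+R_t\bar{\bm z}_{t-1}+\bm b_t,\bm0\}=\bar{\bm z}_t$.
\end{itemize}
Since $\bm c\ge0$, every feasible objective is at least $\bm c^\top\bar{\bm z}_T+\sum_t\bm v_t^\top\bm x_t+b_0$. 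The LP value therefore equals $f_{\mathrm{ReLU\text{-}RNN}}(\bm x_{1:T})$.

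\emph{Step 2 (upper bound).} I will exhibit a feasible point of \eqref{eq:rnn_socp_appendix} whose objective equals $f_{\mathrm{SOC\text{-}RNN}}(\bm x_{1:T})$. Take the forward activations $\bar{\bm z}_t$, and set
\[
\bar{\bm q}_h=\mathcal B_h(\bm x_{1:T}),\quad \bar s_h=\tfrac12\|\bar{\bm q}_h\|_2^2,\quad \bar{\bm u}_g=\mathcal A_g(\bm x_{1:T}),\quad \bar t_g=\|\bar{\bm u}_g\|_2 .
\]
The cone memberships hold by the definitions of $\mathcal Q_r$ and $\mathcal Q$: we have $2\bar s_h\cdot 1=\|\bar{\bm q}_h\|_2^2$ with $\bar s_h\ge0$, and $\|\bar{\bm u}_g\|_2\le\bar t_g$. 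Substituting these values, the objective equals \eqref{eq:rnn_soc_output_appendix}, which gives $V\le f_{\mathrm{SOC\text{-}RNN}}$.

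\emph{Step 3 (lower bound).} Fix any feasible point. Step 1 gives a lower bound of $f_{\mathrm{ReLU\text{-}RNN}}$ on the backbone part of the objective. The rotated cone constraint with $\eta=1$ gives $s_h\ge\frac12\|\mathcal B_h(\bm x_{1:T})\|_2^2$, and the standard cone constraint gives $t_g\ge\|\mathcal A_g(\bm x_{1:T})\|_2$. Multiplying by $\alpha_h,\lambda_g\ge0$ and summing the three parts shows that every feasible objective is at least $f_{\mathrm{SOC\text{-}RNN}}$. Hence $V\ge f_{\mathrm{SOC\text{-}RNN}}$, and combining with Step 2 gives equality, with the minimum attained.

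The main obstacle is Step 1, and specifically making sure the inductive monotonicity really propagates through time. There are two points to check. First, the statement constrains $R_t\ge 0$ for every $t$ in the time-varying case. Second, $\bm z_0=\bm 0$ is fixed, so the base case matches $\bar{\bm z}_0$. A quick alternative is to unroll the recurrence into the layered form of Proposition~\ref{prop:relu_lp_method}, taking $W_\ell$ to act on the stacked input $\bm x_{1:T}$ through the $t$-th block and $U_\ell=R_t$. The result then follows by citing that proposition and Theorem~\ref{thm:soc_value_function} directly, after stacking the passthrough terms $\bm v_t$ into a single vector $\bm v$.
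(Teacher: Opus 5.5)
Your proof is correct and follows the paper's argument. The forward activations, together with the exact epigraph values $\bar s_h,\bar t_g$, form a feasible point attaining $f_{\mathrm{SOC\text{-}RNN}}$, which gives $V\le f_{\mathrm{SOC\text{-}RNN}}$; the recurrent LP lower bound plus the cone inequalities weighted by $\alpha_h,\lambda_g\ge0$ give the reverse bound. Your Step 1 makes rigorous the time induction that the paper only sketches (mirroring Appendix~\ref{app:relu_lp_equiv_proof}), and your unrolling shortcut via Theorem~\ref{thm:soc_value_function} is also valid.
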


\begin{proof}
The proof is identical in structure to Theorem~\ref{thm:soc_value_function}. 
Let \(V(\bm x_{1:T})\) denote the optimal value of \eqref{eq:rnn_socp_appendix}. 
For the upper bound, choose the recurrent forward activations \(\bar{\bm z}_t\), set
\[
\bar{\bm q}_h=\sum_{t=1}^{T}B_{h,t}\bm x_t+\bm e_h,
\qquad
\bar s_h=\frac12\|\bar{\bm q}_h\|_2^2,
\]
and
\[
\bar{\bm u}_g=\sum_{t=1}^{T}A_{g,t}\bm x_t+\bm d_g,
\qquad
\bar t_g=\|\bar{\bm u}_g\|_2.
\]
This point is feasible for \eqref{eq:rnn_socp_appendix} and attains exactly the value in \eqref{eq:rnn_soc_output_appendix}. 
Hence \(V(\bm x_{1:T})\le f_{\mathrm{SOC\text{-}RNN}}(\bm x_{1:T})\).

For the lower bound, any feasible point satisfies the LP lower bound from \eqref{eq:rnn_relu_lp_appendix}. 
The rotated-cone constraints imply
\[
s_h\ge\frac12\left\|\sum_{t=1}^{T}B_{h,t}\bm x_t+\bm e_h\right\|_2^2,
\]
and the standard-cone constraints imply
\[
t_g\ge
\left\|\sum_{t=1}^{T}A_{g,t}\bm x_t+\bm d_g\right\|_2.
\]
Multiplying by \(\alpha_h,\lambda_g\ge0\) and summing gives
\(V(\bm x_{1:T})\ge f_{\mathrm{SOC\text{-}RNN}}(\bm x_{1:T})\). 
Combining both inequalities proves equality.
\end{proof}

This proposition shows that the recurrent extension preserves the same mathematical backbone as the fully connected and convolutional SOC-ICNNs: after finite-horizon unrolling, the ReLU recurrent part is an LP value function, and the quadratic/conic branches lift it to an SOCP value function. 
The extension is therefore architectural rather than conceptual; it adds temporal parameter sharing and recurrent structure without leaving the LP-to-SOCP value-function framework.

\subsection{Recurrent Downstream Test on Trajectory Smoothing}
\label{app:rnn_downstream}

To further test the finite-horizon recurrent SOC-ICNN extension, we consider a downstream trajectory-smoothing problem. 
The goal of this experiment is to verify that the recurrent LP-to-SOCP lifting remains useful when the decision variable is a temporal sequence rather than a static vector or image. 
The recurrent ReLU baseline follows the input-convex construction principle of ICNNs \citep{AmosXuKolter2017ICNN}, with nonnegative hidden-to-hidden recurrent weights to preserve convexity with respect to the decision sequence.

\paragraph{Task.}
For each reference trajectory \(R=(r_1,\dots,r_T)\), the decision variable is a smoothed trajectory \(U=(u_1,\dots,u_T)\), where \(u_t\in\mathbb R^p\). 
We use \(T=40\) and \(p=4\). 
The true convex sequence energy is
\[
\begin{aligned}
F_{\mathrm{seq}}(U;R)
=
&\frac{\rho}{2T}\sum_{t=1}^{T}\|u_t-r_t\|_2^2
+
\frac{\beta}{2(T-2)}
\sum_{t=2}^{T-1}
\|u_{t+1}-2u_t+u_{t-1}\|_2^2 \\
&+
\frac{\lambda}{T-1}
\sum_{t=2}^{T}
\sqrt{\|u_t-u_{t-1}\|_2^2+\varepsilon}
+
\frac{\gamma}{2T}
\sum_{t=1}^{T}\|u_t\|_2^2 ,
\end{aligned}
\]
with box constraints \(u_t\in[-1,1]^p\). 
The first term tracks the reference sequence, the second term penalizes acceleration, the third term is a TV-like conic switching cost, and the last term is a weak magnitude regularizer. 
This objective naturally matches the SOC-RNN architecture: the quadratic branch models temporal curvature, while the conic branch models switching costs.

\paragraph{Models.}
We compare five recurrent convex surrogates: RNN-ReLU, RNN-Softplus, RNN-Quad, RNN-Norm, and RNN-SOC. 
All models use the same recurrent ICNN backbone with hidden dimension \(20\). 
The RNN-Quad and RNN-Norm variants add only the temporal quadratic or temporal conic branch, respectively, while RNN-SOC includes both branches. 
The quadratic branch uses \(24\) temporal convolutional channels, and the conic branch uses \(12\) two-dimensional norm groups. 
The hidden-to-hidden recurrent weights in the ICNN backbone are constrained to be elementwise nonnegative.

\paragraph{Data and optimization protocol.}
We generate \(1200\) training, \(200\) validation, and \(200\) test reference trajectories. 
Each reference trajectory combines sinusoidal components, low-frequency drift, and several piecewise offsets. 
For each trajectory, we generate \(14\) candidate decision sequences and evaluate the true convex energy to train the surrogate by mean-squared error regression. 
All models are trained for \(150\) epochs using AdamW with learning rate \(2\times10^{-3}\), batch size \(256\), and weight decay \(10^{-6}\). 
For downstream evaluation, each learned surrogate is optimized over \(U\in[-1,1]^{T\times p}\) using projected Adam for \(400\) steps and \(4\) random restarts. 
The reference solution \(U^\star\) is computed by approximately minimizing the true convex sequence energy using projected Adam for \(900\) steps and the same box projection. 
We report relative prediction error, true-energy regret \(F_{\mathrm{seq}}(\hat U;R)-F_{\mathrm{seq}}(U^\star;R)\), decision error \(\|\hat U-U^\star\|_F/\sqrt{Tp}\), and downstream inference time.

\paragraph{Implementation.}
The experiment is implemented in Python~3.11 with PyTorch~2.0 and run on a single NVIDIA RTX 4060 Ti GPU. 
All random seeds are fixed to \(0\). 
The energy parameters are \(\rho=1.0\), \(\beta=0.20\), \(\lambda=0.18\), \(\gamma=0.02\), and \(\varepsilon=10^{-5}\).

\begin{table}[t]
\centering
\caption{Recurrent downstream trajectory-smoothing test. Regret and decision error are evaluated using the true convex sequence energy after optimizing each learned recurrent surrogate. Lower is better for RelErr, regret, and decision error.}
\label{tab:rnn_downstream}
\small
\setlength{\tabcolsep}{4pt}
\renewcommand{\arraystretch}{1.08}
\begin{tabular}{lcccc}
\toprule
Model  & RelErr & Regret & Decision error & Infer. (ms) \\
\midrule
RNN-ReLU  & \(0.1920\) & \(0.4515\pm0.0664\) & \(0.5458\pm0.0320\) & \(405.98\) \\
RNN-Softplus  & \(0.2359\) & \(0.4150\pm0.0676\) & \(0.5528\pm0.0335\) & \(547.33\) \\
RNN-Quad  & \(0.0914\) & \(0.0697\pm0.0152\) & \(0.3094\pm0.0433\) & \(381.80\) \\
RNN-Norm  & \(0.1154\) & \(0.0903\pm0.0177\) & \(0.3708\pm0.0407\) & \(554.24\) \\
RNN-SOC  & \(\mathbf{0.0671}\) & \(\mathbf{0.0524}\pm0.0156\) & \(\mathbf{0.2810}\pm0.0489\) & \(431.76\) \\
\bottomrule
\end{tabular}
\end{table}

\begin{figure}[t]
\centering
\includegraphics[width=\linewidth]{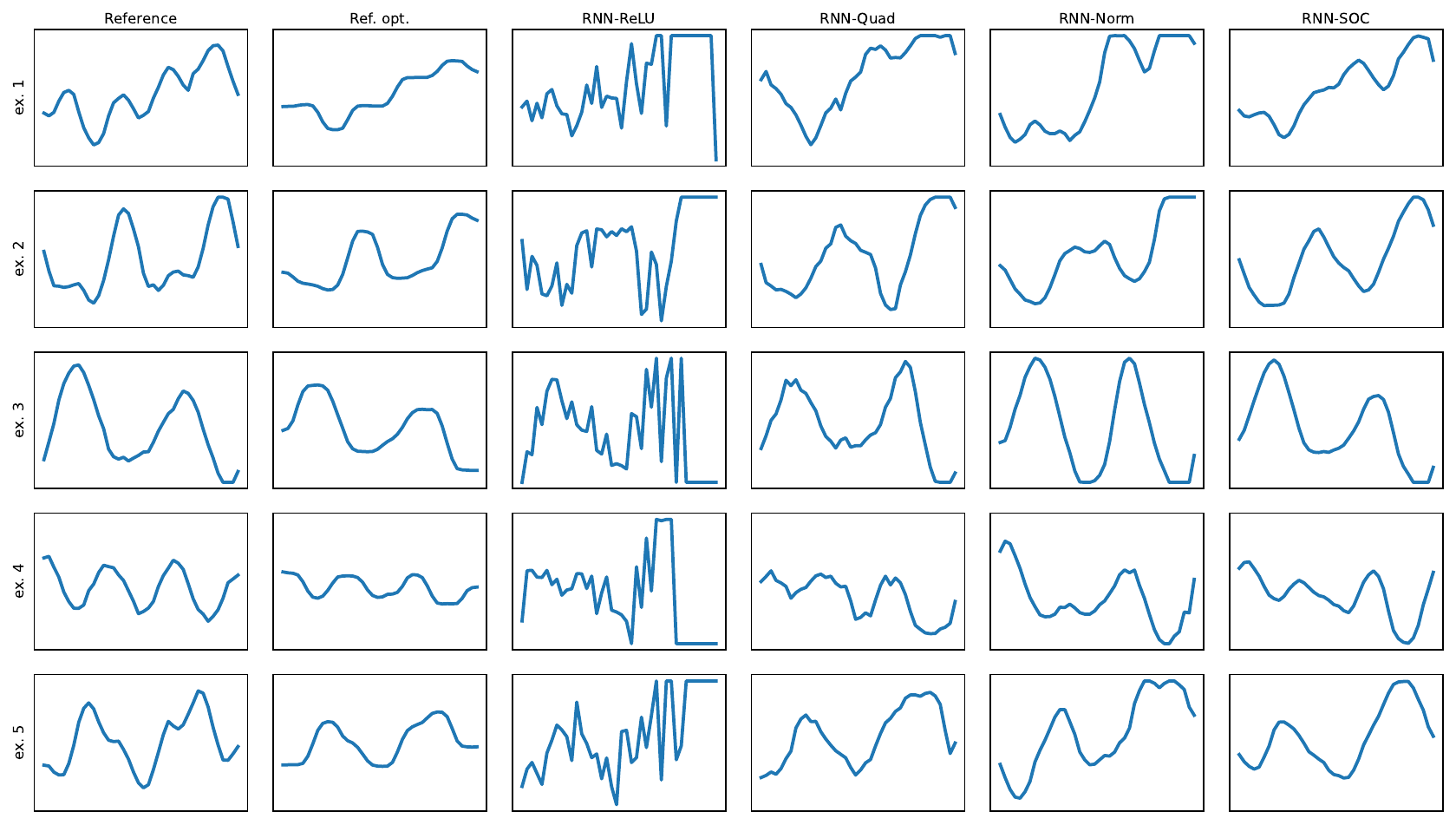}
\caption{Qualitative downstream trajectory-smoothing examples. Each row is a test instance, and each column shows one trajectory component of the optimized decision. RNN-ReLU produces oscillatory and frequently clipped trajectories, while the structured Quad, Norm, and SOC variants better recover the reference optimum. RNN-SOC gives the lowest relative prediction error, true-energy regret, and decision error in Table~\ref{tab:rnn_downstream}.}
\label{fig:rnn_downstream}
\end{figure}


\end{document}